\documentclass{article}

\usepackage{arxiv}

\usepackage[utf8]{inputenc} 
\usepackage[T1]{fontenc}    
\usepackage{hyperref}       
\usepackage{url}            
\usepackage{doi}

\usepackage{microtype}      
\usepackage{amsfonts}       
\usepackage{nicefrac}       

\usepackage{xcolor}         
\usepackage{graphicx}       
\usepackage{subcaption}     

\usepackage{adjustbox}      
\usepackage{multirow}       
\usepackage{colortbl}       
\usepackage{booktabs}       

\usepackage{amsmath}
\usepackage{amssymb}
\usepackage{mathtools}
\usepackage{amsthm}
\usepackage{bbding}         
\usepackage{pifont}
\usepackage{float}
\usepackage[linesnumbered,ruled,vlined]{algorithm2e}
\usepackage{algorithmic}

\usepackage{tcolorbox}      
\usepackage{fontawesome}    
\usepackage{enumitem}       
\usepackage{ulem}           
\usepackage{wrapfig}        
\usepackage[capitalize,noabbrev]{cleveref} 

\definecolor{DarkGreen}{RGB}{1,100,32}

\newtheorem{theorem}{Theorem}

\definecolor{DarkGreen}{RGB}{1,100,32} 
\usepackage[capitalize,noabbrev]{cleveref}
\usepackage{caption}
\usepackage{graphicx}
\usepackage{tcolorbox}
\usepackage{comment}
\theoremstyle{plain}

\theoremstyle{definition}

\theoremstyle{remark}

\title{When LLMs Meet Open-world Graph Learning: A New Perspective for Unlabeled Data Uncertainty}


\author{ 
        Yanzhe Wen\thanks{These authors contributed equally to this work.} \\
	Department of Computer Science and Technology\\
	Beijing Institude of Technology\\
	Beijing 100081, China \\
	\texttt{yzwenbit@163.com} \\
    \And
	Xunkai Li\footnotemark[1] \\
	Department of Computer Science and Technology\\
	Beijing Institude of Technology\\
	Beijing 100081, China \\
	\texttt{cs.xunkai.li@gmail.com} \\
    \And
    Qi Zhang \\
	Department of Computer Science and Technology\\
	Beijing Institude of Technology\\
	Beijing 100081, China \\
	\texttt{qizhangcs@ustb.edu.cn} \\
    \And
        Zhu Lei \\
	Ant Group\\
	Beijing 100081, China \\
	\texttt{simon.zl@antgroup.com} \\
    \And
        Guang Zeng \\
	Ant Group\\
	Beijing 100081, China \\
	\texttt{zengguang\_77@qq.com} \\
    \And 
	Rong-Hua Li\thanks{Corresponding author.} \\
	Department of Computer Science and Technology\\
	Beijing Institude of Technology\\
	Beijing 100081, China \\
    \texttt{lironghuabit@126.com} 
        \\
    \And
        Guoren Wang \\
	Department of Computer Science and Technology\\
	Beijing Institude of Technology\\
	Beijing 100081, China \\
	\texttt{wanggrbit@gmail.com} 
    \\
}



\hypersetup{
pdftitle={A template for the arxiv style},
pdfsubject={q-bio.NC, q-bio.QM},
pdfauthor={David S.~Hippocampus, Elias D.~Striatum},
pdfkeywords={First keyword, Second keyword, More},
}

\begin{document}

\maketitle

\begin{abstract}
    Recently, large language models (LLMs) have brought a systematic transformation to the graph ML community based on text-attributed graphs (TAGs).
    Despite the emergence of effective frameworks, most of them fail to address data uncertainty in open-world environments, which is crucial for model deployment.
    Specifically, this uncertainty can be exemplified by limited labeling within large-scale data due to costly labor. 
    Notably, these unlabeled nodes may belong to known classes within the labeled data or unknown, novel label classes.
    During our investigation, we found that node-level out-of-distribution detection and conventional open-world graph learning have made significant efforts to address this problem but limitations remain:
    \ding{172} Insufficient Method:
    TAGs often encode knowledge derived from texts and structure.
    However, most methods only rely on semantic- or topology-based isolated optimization for unknown-class rejection (i.e., classify known-class nodes and identify unknown-class nodes from unlabeled data).
    \ding{173} Incomplete Pipeline: 
    After that, effectively handling unknown-class nodes is essential for model re-update and long-term deployment.
    However, most methods only perform idealized analysis (e.g., predefine the number of unknown classes), offering limited utility.
    To address these issues, we propose the open-world graph assistant (OGA) based on the LLM.
    Specifically, OGA first employs the adaptive label traceability for unknown-class rejection, which seamlessly integrates both semantics and topology in a harmonious manner.
    Then, OGA introduces the graph label annotator for unknown-class annotation, enabling these nodes to contribute to model training.
    In a nutshell, we propose a new pipeline that fully automates the handling of unlabeled nodes in open-world environments.
    To achieve a comprehensive evaluation, we employ a systematic benchmark covering 4 key aspects and validate the effectiveness and practicality of our approach via extensive experiments.
\end{abstract}

\section{Introduction}
\label{sec: Introduction}
    In recent years, graph neural networks (GNNs) have emerged as a pivotal technology for modeling relational data, enabling the generation of high-quality embeddings by simultaneously encoding both feature and structural information~\cite{link_prediction3,kipf2016gcn,xu2018gin}. 
    This capability bridges diverse application scenarios and graph-based downstream tasks, enhancing the practical significance of GNNs in the real world.

    In the era of large language models (LLMs), this graph ML paradigm is experiencing accelerated advancements, driven by the emergence of text-attributed graphs (TAGs), where nodes and edges are equipped with textual information. 
    This graph-text data integration has facilitated the collection of metadata and the development of numerous frameworks.
    However, they often struggle to address the uncertainty in rapidly expanding metadata.
    Given the complexity of this data uncertainty issue, in this paper, we particularly focus on the under-labeling problem in TAGs due to costly labor.

\begin{figure*}[htbp]   
    \setlength{\abovecaptionskip}{-0.cm}
    \setlength{\belowcaptionskip}{-0.cm}
	\includegraphics[width=\linewidth,scale=1.00]{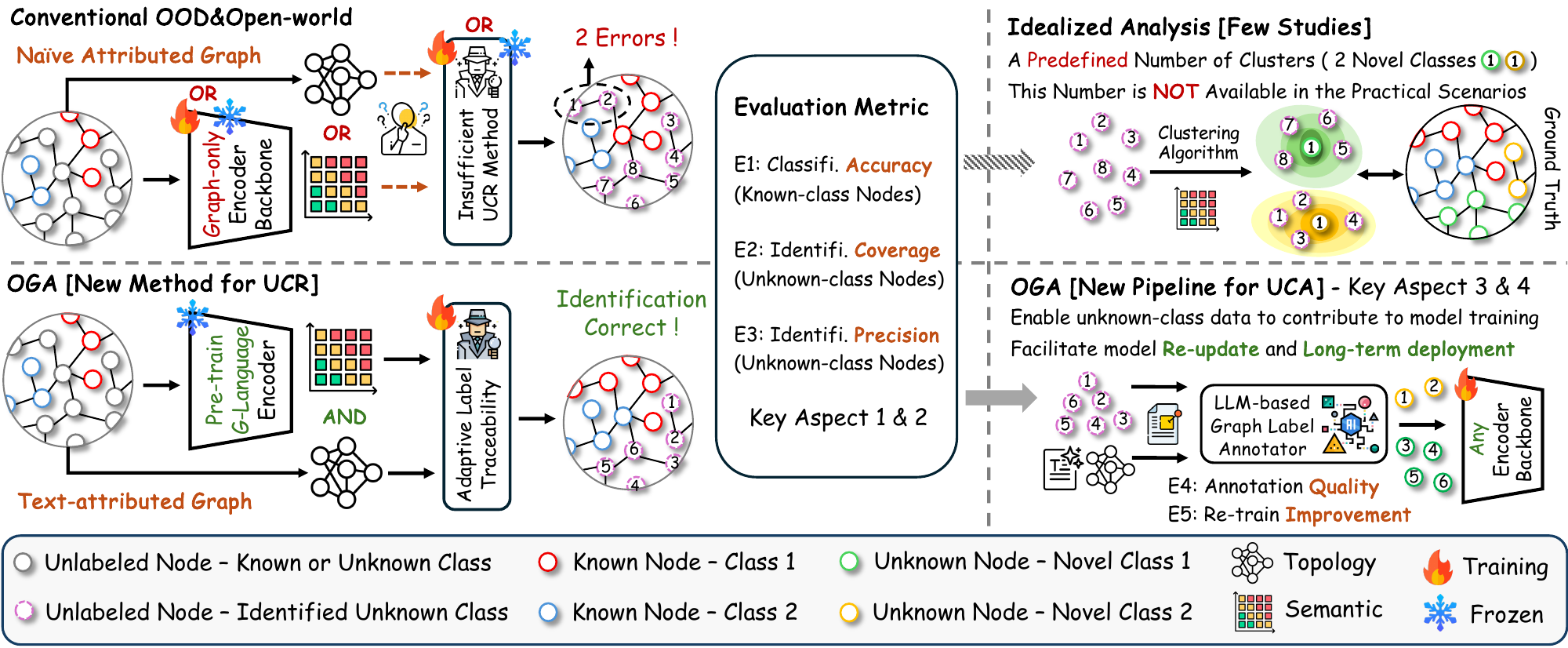}
    \vspace{-0.3cm}
    \captionsetup{font=small, labelfont=bf, textfont=it}
	\caption{
    A comparative overview of our proposed open-world learning pipeline and the conventional one.
    Our proposed OGA integrates LLM to introduce a new paradigm for unlabeled data uncertainty.
    }
    \vspace{-0.4cm}
    \label{fig: motivation}
\end{figure*}

    During our investigation, we found that node-level out-of-distribution (OOD) detection and conventional open-world graph learning align most closely with the context of our research problem.
    Additionally, we identify that related fields such as continual, incremental, and lifelong graph learning are also relevant to our research~\cite{qi2025cgl_forget,kou2020cgl_forget,lin2023cgl_forget,niu2024cgl_forget,choi2024cgl_forget,zhou2021cgl_forget,hoang2023cgl_forget,zhang2022cgl_forget}.
    However, these fields primarily focus on streaming data management under uncertainty, which differs from our focus on unlabeled data uncertainty.
    Therefore, we do not elaborate on them but instead provide a discussion in Appendix~\ref{appenedix: Open-world environment and streaming data management}.

    Based on this, upon reviewing related OOD and open-world studies, we find that most of them are based on naive attributed graphs, where node and edge features are predefined by word embedding models. 
    Despite their effectiveness, they face inherent limitations in the era of rapid advancements in LLMs and the increasing prevalence of TAGs, which are shown in Fig.~\ref{fig: motivation} with two aspects:

    \ding{182} \textbf{\textit{Insufficient Method.}}
    Prior studies typically employ a graph-only encoder (trainable or frozen) for embedding and then focus on either semantics or topology in isolation for unknown-class rejection.
    Specifically, semantic approaches aim to model feature discrepancies using Bayesian estimation or energy functions~\cite{zhao2020ood_semantic_uncertainty_gkde,stadler2021ood_semantic_graph_gpn,gong2024ood_semantic_energy_energydef,wu2023ood_semantic_energy_gnnsafe,yang2024ood_semantic_bounded_nodesafe,um2025ood_semantic_spreading_edbd}, and topology approaches seek to analyze structural variations through graph propagation or graph reconstruction~\cite{wu2020open_nc_class_openwgl,song2022ood_topology_learning_oodgat,hoffmann2023open_ood_openwrf,wang2024ood_topology_smug,ma2024ood_topology_revisiting_grasp,wang2025ood_topology_gold}.
    The limited information provided by naive attributed graphs and graph-only encoder hinders the compatible optimization.

    \ding{80} \textbf{\textit{Key Insights \& Our Solutions.}}
    \ding{172} In TAGs, the language is a bridge between semantics and topology, unlocking new opportunities for further breakthroughs.
    This means that rather than relying on an under-trained graph-only encoder, utilizing a well-pre-trained graph-language encoder can facilitate unbiased and enriched representations for all nodes~\cite{wen2023augmenting_g2p2,kong2024gofa,yu2025samgpt}.
    \ding{173} Based on these high-quality embeddings and informative TAGs, we propose adaptive label traceability (ALT), which achieves ontology representation learning by seamlessly integrating semantic-aware label boundaries and topology-aware regularization constraints in optimization.
    This approach effectively facilitates the classification of known-class nodes while enhancing the identification of unknown-class nodes.

    \ding{183} \textbf{\textit{Incomplete Pipeline.}}
    Based on the above unknown-class rejection (UCR), prior pipelines primarily focus on two aspects:
    \ding{172} Enhancing the learning of known-class nodes~\cite{xu2023open_nc,galke2021open_nc,jin2024open_class,wu2020open_nc_class_openwgl}.
    \ding{173} Analyzing the unknown-class nodes under idealized clustering~\cite{wang2024open_nc_class,liu2023open_ood,hoffmann2023open_ood_openwrf}.
    While these methods have achieved remarkable progress, their pipelines remain incomplete for practical applications. 
    Specifically, the predefined number of unknown classes lacks utility and they fail to leverage these nodes effectively.

    \ding{80} \textbf{\textit{Key Insights \& Our Solutions.}}
    \ding{172} In TAGs, the abundant textual data and structural information present an opportunity for effective post-processing of unknown-class nodes, contributing to model re-updates and long-term deployment.
    This motivates us to propose graph label annotator (GLA), which leverages structural prompts to enable LLMs first to distill node description, and subsequently generate meaningful annotations.
    Meanwhile, we introduce a graph-oriented adaptive label fusion method, which improves annotation efficiency and quality (unknown-class annotation, UCA).
    \ding{173} By considering real-world deployment, our proposed open-world graph assistant (OGA) demonstrates strong practical utility.
    Based on this, to establish a comprehensive evaluation, we standardize 5 performance metrics from 4 key aspects, paving the way for future advancements in this field.

    \textbf{Our contributions}.
    \ding{182} \underline{\textit{{New Perspective}}}. 
    During our investigation, we found that existing studies focus on UCR.
    However, further leveraging unlabeled data (UCA) is also essential for model deployment and mitigating costly labor; unfortunately, this perspective has often been overlooked.
    \ding{183} \underline{\textit{{Innovative Approach}}}. 
    Inspired by LLM and TAGs, we propose ALT, which effectively integrates semantic and topology optimization into a unified framework, enabling efficient known-class node classification and unknown-class node identification (UCR).
    Based on this, we introduce GLA, which utilizes structure-guided LLM to
    annotate unknown-class nodes for model re-updates (UCA).
    \ding{184} \underline{\textit{{New Pipeline}}}.
    We propose OGA, which contains ALT and GLA as the first LLM-enhanced open-world graph learning pipeline that integrates UCR with UCA, enhancing the practical utility of graph ML.
    It can be seamlessly integrated with any backbone to improve data efficiency.
    \ding{185} \underline{\textit{SOTA Performance in 4 Key Evaluation Aspects}}:
    \ding{172} Known-class Node Classification: OGA improves by 4.98\% over the best baselines;
    \ding{173} Unknown-class Node Identification: OGA outperforms the best baselines by 6.2\% in coverage and 4.6\% in precision;
    \ding{174} Annotation Efforts: OGA achieves comparable or superior semantic similarity to ground-truth annotations (Details in Sec.~\ref{sec: Experiments});
    \ding{175} Post-annotation: OGA shows an average improvement of 10.1\%, outperforming the unmarked graph and achieving 87\%-105\% of the ground-truth graph performance.

\section{Preliminaries}
\label{sec: Preliminaries}
\subsection{Notations and Problem Formulation}
\label{sec: Notations and Problem Formulation}
    In this paper, we consider $\mathcal{G}=(\mathcal{V}, \mathcal{E})$ with $|\mathcal{V}|=n$ nodes, $|\mathcal{E}|=m$ edges.
    It can be described by an adjacency matrix $\mathbf{A}\in\mathbb{R}^{n\times n}$.
    As a TAG, $\mathcal{G}$ has node-oriented language descriptions, which are represented as $\mathrm{T}$.
    Based on this, each node has a feature vector of size $f$ and a one-hot label of size $c$ generated by the language model and manual labeling, respectively.
    Formally, the feature and label matrix are $\mathbf{X}\in\mathbb{R}^{n\times f}$ and $\mathbf{Y}\in\mathbb{R}^{n\times c}$, and each node belong to one of $c$ classes.
    To simulate the unlabeled data uncertainty in open-world environments, we adopt a limited node labeling setting.

    Specifically, a small subset of nodes $\mathcal{V}_l \subset \mathcal{V}$ is labeled, while the remaining nodes $\mathcal{V}_u = \mathcal{V} \setminus \mathcal{V}_l$ are unlabeled.
    For the labeled nodes, their labels are drawn from a subset $\mathcal{C}_k \subset \mathcal{C}$ of the complete, unseen label set, where $|\mathcal{C}| = c$.
    For each unlabeled node, it may belong to one of the following categories:
    \ding{172} known-class node: 
    The potential label belongs to $\mathcal{C}_k$, which is already present in $\mathcal{V}_l$.
    \ding{173} unknown-class node: 
    The potential label belongs to novel classes $\mathcal{C}_{uk} = \mathcal{C} \setminus \mathcal{C}_k$, which do not exist in $\mathcal{V}_l$.
    Based on this unlabeled data uncertainty, our goal is to develop a novel open-world pipeline that fully automates the processing of unlabeled data.
    The formal definition are as follows:

    \ding{117} \textbf{\textit{Unknown-Class Rejection (UCR).}}
    \label{sec:UCR}
    To begin with, in the unlabeled set $\mathcal{V}_u$, we aim to identify the unknown-class nodes $\mathcal{V}_{uk}$ while performing classification for other known-class nodes $\mathcal{V}_{k}$.
    The evaluation metrics are as follows:
    \ding{172} Accuracy (Aspect 1): The proportion of known-class nodes that are correctly classified;
    \ding{173} Coverage (Aspect 2): The proportion of identified unknown-class nodes to the total number of unknown-class nodes.
    \ding{174} Precision (Aspect 2): The proportion of nodes identified as the unknown class that genuinely belong to the unknown class.

    \ding{117} \textbf{\textit{Unknown-Class Annotation (UCA).}}
    Then, we aim to annotate unknown-class nodes, enabling them to make substantive contributions to subsequent training.
    The evaluation metrics are as follows:
    \ding{175} Quality (Aspect 3): The semantic similarity between each pair of classes in the current label set after annotation.
    Notably, the labels used in annotation are entirely generated by LLM and from the open domain.
    \ding{176} Improvement (Aspect 4): The effectiveness of retraining the backbone on the unlabeled graph (lower bound), the annotated graph, and the ground-truth graph (upper bound).

\subsection{Unknown-Class Rejection, UCR}
\label{sec: Unknown-Class Rejection, UCR}
\textbf{Node-level Out-of-distribution Detection}.
    Given its potential in anomaly detection, this field has attracted significant attention in recent years. 
    However, as discussed in Sec.~\ref{sec: Introduction}, most existing studies focus solely on naive attributed graphs and graph-only encoders, leading to limited self-supervised signals and insufficient node embeddings. 
    As a result, these methods only rely on isolated semantic- or topology-based optimizations, struggling to integrate both effectively.
    Based on this, we propose the following taxonomy: \ding{172} Semantic-oriented methods based on Bayesian estimation and energy models; \ding{173} Topology-oriented methods based on graph propagation and graph reconstruction.
    
    Bayesian-based methods~\cite{zhao2020ood_semantic_uncertainty_gkde,stadler2021ood_semantic_graph_gpn} aim to model feature uncertainty.
    However, when nodes and edges contain abundant texts, topology-driven semantic entanglement leads to non-stationary distributions, making them prone to failure.
    Energy-based models~\cite{wu2023ood_semantic_energy_gnnsafe,yang2024ood_semantic_bounded_nodesafe,bao2024ood_semantic_graph_topoood,gong2024ood_semantic_energy_energydef,chen2025ood_semantic_decoupled_degem,um2025ood_semantic_spreading_edbd} compute energy scores to measure the alignment of unlabeled sams with the training distribution. 
    However, they heavily rely on encoders, where graph-only encoders may fail to represent TAGs. 
    Moreover, high complexity and limited generalization result in unstable performance.
    Topology-related methods~\cite{song2022ood_topology_learning_oodgat,ma2024ood_topology_revisiting_grasp,wang2024ood_topology_smug,wang2025ood_topology_gold} achieve efficient UCR through graph propagation or graph reconstruction, fundamentally enabling effective analysis of structural patterns. 
    Despite their satisfying performance on naive attributed graphs, their limited attention to semantics constrains their effectiveness in informative TAGs.

    \textbf{Conventional Open-world Graph Learning}.
    Existing methods primarily address two key aspects of unlabeled data: classification of known-class nodes and identification of unknown-class nodes.
    From a framework design perspective, most methods follow a post hoc paradigm solely based on the known-class supervised graph-only encoder~\cite{galke2021open_nc,wu2020open_nc_class_openwgl,liu2023open_ood,hoffmann2023open_ood_openwrf}.
    This approach fails to jointly optimize semantics and topology within TAGs, resulting in biased representations for unknown-class nodes.
    Furthermore, a few studies extend this foundation by idealized settings, such as predefining the number of unknown classes for clustering~\cite{xu2023open_nc,jin2024open_class,wang2024open_nc_class}. 
    For the benefit of practicality, we argue that a more holistic solution is essential for unknown-class nodes in practical open-world environments.

\subsection{Unknown-Class Annotation, UCA}
\vspace{-0.2cm}
    As a core component of our proposed OGA, we conduct a comprehensive review of graph annotation. 
    During our investigation, we noticed that only few related studies~\cite{chen2023label_llmgnn}~\cite{zhang2024cost}  explored how LLMs can be leveraged for node annotation. 
    However, these approaches have the following limitations to varying degrees in open-world scenarios:
    \ding{172} They adopt a few-shot setting, where the ground-truth labels of unknown classes are provided to the LLM. 
    This assumption restricts its practicality in the real world, where unknown classes are inherently undefined.
    \ding{173} They primarily focus on enhancing LLM annotation through chain-of-thought prompting, which neglects the relational information embedded in the graph structure.\ding{174} They lies in their reliance on a pre-defined number of clusters, which introduces strong assumptions about the label space and hinders their applicability in open-world or dynamically evolving graph scenarios. A comparison of our method and these approaches settings is provided in the Appendix~\ref{ap:ground_compare}
    Apart from the above study, some OOD detection methods~\cite{xu2025graph}~\cite{xu2025glip} leverage LLMs for semantic reasoning to identify potential OOD categories. However, their settings still diverge from the open-world graph learning scenario considered in this work. A detailed comparison is provided in Appendix~\ref{ap:ood_compare}.
    This contrasts with our open-world setting, where we aim to handle unlabeled data uncertainty without prior knowledge.

\section{Methods}
\label{sec: Methods}
\vspace{-0.3cm}
\subsection{Overall of OGA}
\vspace{-0.3cm}
    To advance the practical deployment of graph ML, we aim to tackle data uncertainty in open-world environments.
    This data-centric challenge is inherently difficult due to its intrinsic complexity and the lack of prior knowledge.
    However, the rapid advancement of LLMs and TAGs presents breakthrough opportunities.
    In this paper, we leverage a pre-trained graph-language encoder and abundant language descriptions in TAGs to incorporate prior knowledge, focusing on addressing the challenge of limited labeling.
    Specifically, we propose OGA, as illustrated in Fig.~\ref{fig: framework}, the first framework to enable fully automated processing of unlabeled data in open-label domains.
    It overcomes the limitations of human-in-the-loop (i.e., real-time manual annotation), aligning with the practical demands of model deployment.
    In the following sections, we provide a detailed overview of the two key modules, ALT and GLA, as well as their utility within our proposed open-world pipeline, as introduced in Sec.~\ref{sec: Notations and Problem Formulation}.

\begin{figure*}[t]   
    \setlength{\abovecaptionskip}{-0.cm}
    \setlength{\belowcaptionskip}{-0.cm}
	\includegraphics[width=\linewidth,scale=1.00]{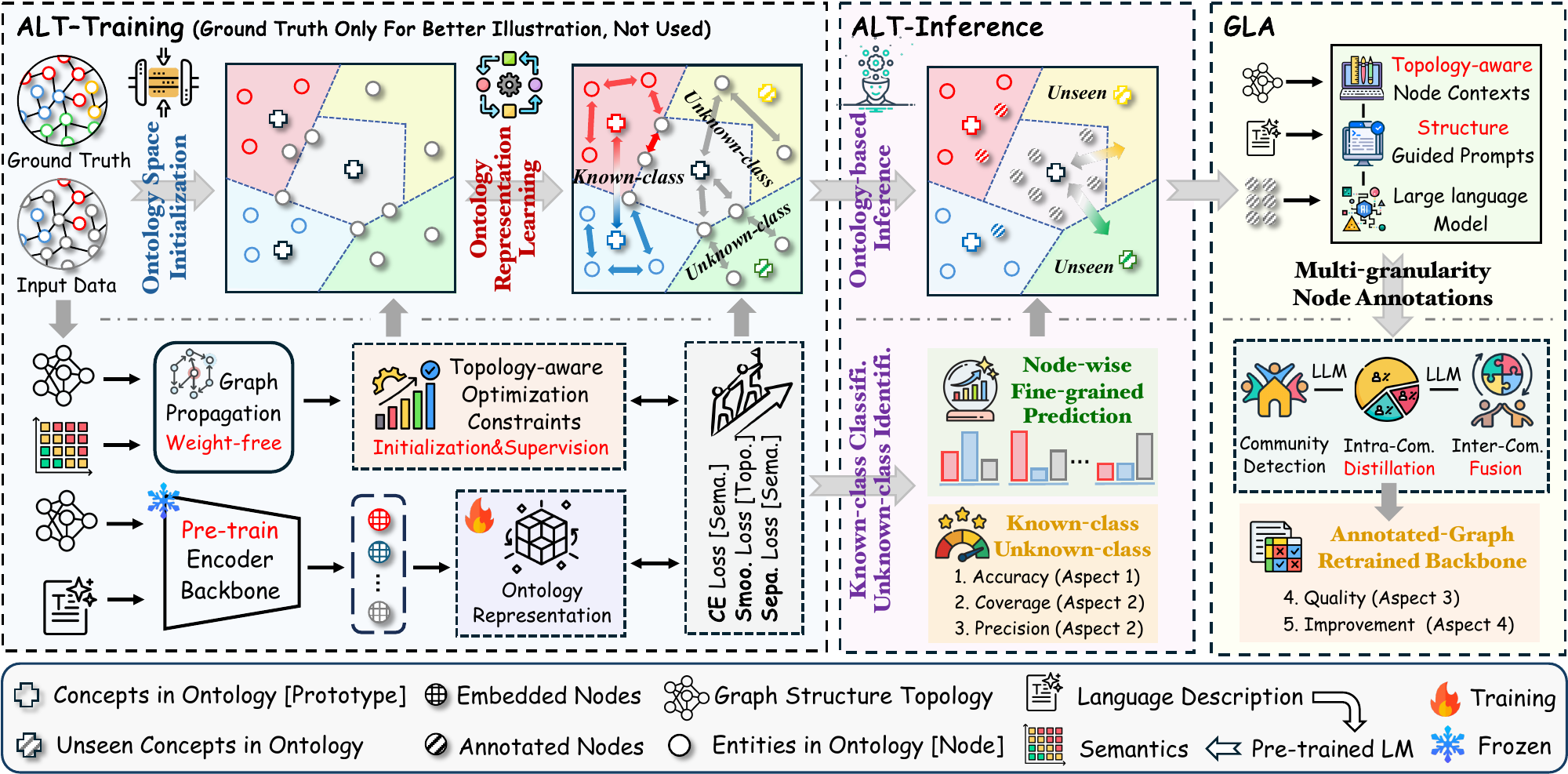}
    \vspace{-0.3cm}
    \captionsetup{font=small, labelfont=bf, textfont=it}
	\caption{The overview of our proposed OGA.}
    \vspace{-0.4cm}
    \label{fig: framework}
\end{figure*}
\vspace{-0.3cm}
\subsection{Adaptive Label Traceability for Unknown-Class Rejection}

\textbf{Motivation.} 
    Compared to naive attributed graphs, UCR in TAGs presents significant challenges. 
    This is due to the fact that language descriptions introduce greater uncertainty in representation learning and the entanglement of semantics and topology hinders optimizations.
    To address this issue, our key insights are:
    \ding{182} \textbf{\textit{Pre-trained Graph-language Encoder}}:
    We utilize its strong generalization capabilities to obtain unbiased, high-quality embeddings for all nodes in the graph, establishing a solid foundation for classification and identification.
    \ding{183} \textit{\textbf{Ontology Representation Learning}}:
    For the first time, we introduce this concept into UCR, leveraging the flexible concepts, entities, and relations in ontology to enable adaptive open-label domain representation learning as follows:
    
\ding{117} \textbf{\textit{What is Ontology Represented in UCR?}}  
Ontology is a formal and explicit representation of concepts, entities, and their interrelations within a domain. In the context of UCR:  
\ding{172} The domain refers to the open-label setting;  
\ding{173} Concepts denote label classes, analogous to class-wise prototypes (via class-specific embedding averaging);  
\ding{174} Entities are nodes in the graph;  
\ding{175} Interrelations capture latent dependencies between concepts and entities.
We provided theoretical analysis in theorem~\ref{theoretical 1},~\ref{theoretical 2}.

\ding{117} \textbf{\textit{Why is Ontology Essential in UCR?}}  
Unlike prior methods that statically generate prototypes from labeled data, open-label uncertainty requires dynamic concept construction. Unlabeled nodes potentially belonging to known classes must contribute to concept formation. However, due to prediction uncertainty, soft labels alone are unreliable. Since concept-entity relations are implicit in open-world settings, we formulate an optimization objective to adaptively uncover these relations. This enables robust class boundary formation in the ontology space, supporting both known-class classification and unknown-class rejection.

\ding{117} \textbf{\textit{How is Ontology Representation Learning Achieved in UCR?}}
    
    \ding{182} \textit{\textbf{Entity-Entity}}:
    Firstly, we aim to reveal entity-entity dependency from a topology perspective, thereby exhibiting an affinity toward specific known classes for concept generation.
    To achieve this without distorting the embedding space, we utilize the label supervision and personalized PageRank:
    \begin{equation}
        \label{eq: entity_entity}
        \begin{aligned}
        &\mathbf{E}\!=\!\operatorname{LLM-Graph}\left(\mathcal{G},\mathrm{T}\right)\!,\;\tilde{\mathbf{E}}= \left(\mathbf{I}+\kappa\mathbf{P}\right)\!\cdot\!\mathbf{E}=\!\sum_{l=0}^{k}\!\sum_{i=0}^l \!\left(\!\mathbf{E}^{(0)}\!+\!\kappa\!\cdot \!w_i\!\cdot\!\left(\hat{\mathbf{D}}^{r-1}\hat{\mathbf{A}}\hat{\mathbf{D}}^{-r}\!\right)^i\!\mathbf{E}^{(l)}\!\right), 
        \end{aligned}
    \end{equation}
    where $\kappa$ is the fine-tuned intensity factor for entity embedding $\mathbf{E}$, $\mathbf{P}$ is the trainable $k$-step graph propagation equation, serving as the paradigm to model node proximity measures. 
    For a given node $u$, a node proximity query yields $\mathbf{P}(v)$, representing the relevance of $v$ with respect to $u$. 
    The learnable weight sequence $w_i$ and kernel coefficient $r$ affect transport probabilities for pair-wise nodes.

    \ding{183} \textit{\textbf{Entity-Concept}}:
    Subsequently, we aim to dynamically and adaptively generate concepts $\mathbf{C}\subset\mathcal{C}_k$ by labeled data and structural information. 
    The dynamicity stems from the iterative selection of entity sets $\mathbb{N}$, while the adaptivity arises from trainable weights $w$.
    The above process is defined as:
\begin{equation}
\label{eq: entity_concept}
    \begin{aligned}
    &\mathbf{C}_{c}=\frac{1}{\left|\mathcal{S}_{c}\right|}\!\sum_{\tilde{e}_i \in \mathcal{S}_c}\!\sum_{j\in\mathbb{N}(i)}\!\!\!\frac{\left(\mathrm{I}+w_{j}\right)\tilde{e}_j}{\left|\mathbb{N}(i)\right|},w_{j} \!=\! \frac{\exp\left[{\delta\left(\mathbf{H}_j\right)}\right]}{\sum_{l=1}^{\left|\mathbb{N}(i)\right|}\!\exp\!\!\left[{\delta\left(\mathbf{H}_j^{\left(l\right)}\right)}\!\right]},\mathbf{H}_j \!\!=\! \operatorname{MLP}\!\left(\tilde{e}_1||\!\cdots\!||\tilde{e}_{\left|\mathbb{N}(i)\right|}\right),
    \end{aligned}
\end{equation}
    where $\mathcal{S}_c$ is the set of data sets annotated with class $c$ and $\mathbb{N}(i)$ is the neighborhood sets (i.e., all nodes within $K$-hop proximity) of node $i$. 
    In our implementation, we set $k = [1...5]$ and randomly sample $k$-hop neighbors in each training epoch. 
    This strategy ensures that the concept generation propagates across the entire graph, rather than being restricted to the limited labeled data, expressing diversity.
    Based on this, trainable $w_j$ is employed for adaptive entity aggregation, expressing generalizability.

    \ding{184} \textit{\textbf{Concept-Entity}}:
    At this stage, we have established class boundaries (i.e., concept representation in the ontology space).
    Based on this, we employ a distance function (e.g., Euclidean distance) to quantify the relevance between each entity and all concepts.
    Then, we apply $\lambda$-sharpness softmax to this measurement to classify known-class nodes. 
    As for unknown-class node identification, we introduce a confidence threshold $\epsilon$ based on the intuition that such nodes often exhibit a smooth class probability distribution (e.g., entropy measurement).
    The above process is defined as:
\begin{equation}
\label{eq: concept_entity}
    \begin{aligned}
            \hat{y}_i =
    \begin{cases}
        \arg\max_c\mathcal{D}_{i, c}, & \text{if } \operatorname{Conf}\left(\mathcal{D}_{i, c}\right)\geq \epsilon  \\
        \text{Unknown-class}, & \text{otherwise}.
    \end{cases}, \mathcal{D}_{i, c} = \frac{\exp(-\lambda \|\tilde{e}_i - \mathbf{C}_{c}\|_2)}{\sum_{c\in\mathcal{C}_k} \exp(-\lambda \|\tilde{e}_i - \mathbf{C}_{c}\|_2)}.
    \end{aligned}
\end{equation}

    \ding{185} \textit{\textbf{Optimizations}}:  
To integrate the above modules into an end-to-end framework, we design an optimization objective that jointly incorporates semantics and topology in the ontology space. Specifically, we combine a semantic cross-entropy loss, a topology-aware smoothness loss, and a margin-based separation loss with theorem~\ref{theoretical 3}:

\begin{equation}
\label{eq: optimzation}
    \begin{aligned}
    &\;\;\;\;\;\;\;\;\;\;\;\;\;\;\;\;\;\;\;\;\mathcal{L} = \mathcal{L}_{ce} + \alpha \mathcal{L}_{smooth} + \beta \mathcal{L}_{separate},\\
    &\;\;\;\;\;\;\;\;\;\;\;\;\;\;\;\;\;\mathcal{L}_{ce} = \sum_{i \in \mathcal{V}_l} \sum_j \mathbf{Y}_{ij} \log \left( \operatorname{softmax}(\mathcal{D}_{ij}) \right),\\
    &\;\;\;\;\;\;\;\;\;\;\;\;\mathcal{L}_{smooth} = \operatorname{trace}(\mathcal{D}^T(\mathbf{I} - \mathbf{P})\mathcal{D}) + \|\mathcal{D} - \mathcal{Y}\|_F^2,\\
    &\mathcal{L}_{separate} = -\left[\sum_{i\ne j} \frac{\mathbf{C}_i \cdot \mathbf{C}_j}{\|\mathbf{C}_i\|\|\mathbf{C}_j\|} + \min\left(\max_{i \ne j} \mathcal{M}(\mathbf{C}_i,\mathbf{C}_j), \theta\right)\right],
    \end{aligned}
\end{equation}

where $\mathcal{M}$ is a distance-based margin function and $\theta$ prevents unbounded margin growth. The separation loss encourages inter-class distinction in the concept space. The smoothness loss refines prediction residuals by promoting local consistency via graph proximity and alignment with labeled data. For a detailed comparison with prior UCR objectives, see Appendix~\ref{appendix: Ontology representation learning and prior approaches for UCR}.

\subsection{Graph Label Annotator for Unknown-Class Annotation}
\textbf{Motivation.}
    After UCR, we are committed to achieving structure-guided LLM annotation.
    The key insights are:
    \ding{182} \textbf{\textit{Topology-aware Semantic In-context}}:
    Inspired by graph mining and the divide-and-conquer paradigm, we integrate text semantic similarity into off-the-shelf community detection to obtain valuable in-context for annotation efficiency.
    \ding{183} \textbf{\textit{Multi-granularity Community Annotation}}:
    To enhance annotation quality, we employ a two-stage LLM inference based on the structural metrics: intra-community coarse-grained distillation and inter-community fine-grained fusion.

\textbf{Topology-aware Semantic In-context.}
    To achieve this, we aim to develop a community detection algorithm tailored for TAGs. 
    The core idea is to extend conventional methods by additionally integrating textual semantics into a unified and weight-free optimization objective $\mathcal{Q}$. 
    In our implementation, we adopt a modularity-based objective enhanced with semantic similarities, formulated as follows:
\begin{equation}
\label{eq: text_enhanced_community_detection}
    \begin{aligned}
    \mathcal{Q} = \frac{1}{2m} \sum_{i,j} \left[
\mathbf{A}_{ij} 
+ \gamma\cdot \frac{\tilde{e}_i^T \tilde{e}_j}{\Vert\tilde{e}_i\Vert \cdot \Vert\tilde{e}_j\Vert} 
 - (1 - \gamma ) \cdot \frac{d_i d_j}{2m}
\right] \delta(c_i, c_j),
    \end{aligned}
\end{equation}
    where $d_i$ represents the degree of node $v_i$, $\delta(c_i, c_j)$ equals 1 if $v_i$ and $v_j$ belong to the same community, and 0 otherwise. 
    The $\gamma$ balances the contributions of semantic measurements.
    After that, within each community, we compute the degree of each node as additional in-context and classify them as low-degree or high-degree based on the median.
    This is used for degree-informed efficient LLM inference strategy. 
    Specifically, we prioritize low-degree nodes for annotation, while high-degree nodes leverage annotated neighbors, enabling annotation without invoking the LLM.
    The core intuition follows the homophily assumption: low-degree nodes, lacking neighborhood prior knowledge, need LLM for annotation, whereas high-degree nodes can benefit from neighbors.
    Notably, since low-degree nodes are sparse, this approach significantly reduces LLM inference costs. Theoretical analysis in ~\ref{theoretical 4}.

\textbf{Multi-granularity Community Annotation.}
    To this end, we define the scope (community) and order (degree) of LLM annotation.
    Then, guided by structural metrics, we sequentially implement annotation distillation within each community and annotation fusion across communities.
    To begin with, the degree-guided annotation generation for each node within community $\mathcal{P}$ is formulated as:
\begin{equation}
\label{eq: annotation_generation}
    \begin{aligned}
    \tilde{y}_i =
    \begin{cases}
        \operatorname{LLM-Annotation}\left(\mathrm{T}_i, \{\mathrm{T}_j \mid v_j \in \mathcal{N}(v_i)\}\right),& \text{if } d_i<\bar{d}  \\
        \operatorname{Allocation}\left(\mathrm{T}_i, \{\mathrm{T}_j,\tilde{y}_j \mid v_j \in \mathcal{N}(v_i)\}\right), & \text{if } d_i\ge\bar{d}.
    \end{cases},
    \end{aligned}
\end{equation}
    At this stage, we incorporate limited neighboring text to enhance low-degree node annotations without significant overhead.
    Based on this, we identify the top-$\Phi$ representative nodes within each community via topology and semantic measurement and perform annotation distillation as follows:
\begin{equation}
\label{eq: annotation_distillation}
    \begin{aligned}
    v_\phi \!=\! \arg\max_{\phi\in\mathcal{P}} \frac{1}{|\mathcal{P}|} \sum_{i \ne j} \left[ \frac{|\mathcal{N}(v_i) \cap \mathcal{N}(v_j)|}{|\mathcal{N}(v_i) \cup \mathcal{N}(v_j)|} \!+\!\mathcal{M}(\tilde{e}_i,\tilde{e}_j)
\right]\!,\tilde{y}_{\mathcal{P}}\!=\!\operatorname{LLM- Distill}\left(\tilde{y}_1,...,\tilde{y}_\Phi\right).
    \end{aligned}
\end{equation}
    This annotation $\tilde{y}_{\mathcal{P}}$ is shared by all nodes within the community $\mathcal{P}$.
    After that, we iteratively fusion the most similar community-level annotations to reduce redundancy until the number of annotations decreases to a predefined threshold, which in our implementation is determined by the number of known classes.
    The above annotation fusion and similarity measurement can be formally defined as:
\begin{equation}
\label{eq: annotation_fusion}
    \begin{aligned}
    \operatorname{Sim}\left(\mathcal{P}_i,\mathcal{P}_j\right)=\frac{1}{|\mathcal{P}_i||\mathcal{P}_j|}\sum_{m\in\mathcal{P}_i}\sum_{n\in\mathcal{P}_j}\mathcal{M}(\tilde{e}_m,\tilde{e}_n), \;\tilde{y}_{\mathcal{P}_{ij}}^\star\!\!=\operatorname{LLM-Fusion}\left(\tilde{y}_{\mathcal{P}_i},\tilde{y}_{\mathcal{P}_j}\right).
    \end{aligned}
\end{equation}    
    To this end, we generate a limited set of $\tilde{y}_{\mathcal{P}}^\star$ for final annotation.
    As an example, for a fusion label $\tilde{y}_{\mathcal{P}_{ijk}}^\star=\operatorname{LLM-Fusion}(\tilde{y}_{\mathcal{P}_{ij}}^\star,\tilde{y}_{\mathcal{P}_{k}})$, where $i, j, k$ are community IDs, the $\tilde{y}_{\mathcal{P}_{ijk}}^\star$ is allocated to all nodes within these communities, thereby achieving annotation.
    For the detailed implementation of the structure-guided prompt templates for LLM inference in Eq.~(\ref{eq: annotation_generation}-\ref{eq: annotation_fusion}), please refer to Appendix~\ref{appendix: Structure-guided prompt template for LLM Annotation}.

\subsection{Theoretical Analysis}
\label{sec: Theoretical Analysis}

\renewcommand{\thetheorem}{\arabic{theorem}} 

We provide theoretical guarantees for the design of ALT and GLA by analyzing concept construction and community optimization.

\begin{theorem}
\label{theoretical 1}
Let embeddings $\tilde{E} = \{\tilde{e}_i\}$ be generated by a Lipschitz-continuous encoder over a compact manifold $\mathcal{M}$, and let class concepts $\{C_c\}$ be aggregated via a propagation matrix $P$ with Dirichlet energy bounded by $\delta$. If intra-class variance is bounded by $\sigma^2$, then: $\mathrm{rank}(\mathrm{span}(\{C_c\})) \leq r$, $\mathbb{E}_{i \in \mathcal{S}_c} \| \tilde{e}_i - C_c \|^2 \leq \sigma^2 + \varepsilon(\delta)$, and $\| C_i - C_j \|^2 \geq \Delta$ with $\cos \angle(C_i, C_j) \leq \rho < 1$.
\end{theorem}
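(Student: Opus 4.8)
The plan is to treat the three conclusions separately, since each isolates a different structural property of the concept construction in Eq.~(\ref{eq: entity_concept}), with a bias--variance decomposition serving as the common tool. Write $e_i$ for the raw embedding (row of $\mathbf{E}$), $\tilde{e}_i$ for its propagated counterpart (row of $\tilde{\mathbf{E}}=(\mathbf{I}+\kappa\mathbf{P})\mathbf{E}$), and $\bar{e}_c = \frac{1}{|\mathcal{S}_c|}\sum_{i\in\mathcal{S}_c} e_i$ for the raw class mean; each concept $C_c$ is by construction a weighted average of the $\tilde{e}_i$. The engine throughout is that $\mathbf{I}+\kappa\mathbf{P}$ acts as a smoothing operator whose deviation from the identity is quantitatively controlled by the Dirichlet-energy bound $\delta$.

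For the rank bound I would note that every concept lies in the row space of the propagated embedding matrix, $\mathrm{span}(\{C_c\}) \subseteq \mathrm{span}(\{\tilde{e}_i\})$, and that $\mathrm{rank}(\tilde{\mathbf{E}}) = \mathrm{rank}((\mathbf{I}+\kappa\mathbf{P})\mathbf{E}) \le \mathrm{rank}(\mathbf{E})$, so linear propagation never increases the effective rank. It then remains to bound $\mathrm{rank}(\mathbf{E})$ by $r$: since the encoder is Lipschitz over the compact manifold $\mathcal{M}$ of intrinsic dimension $r$, the image $\{e_i\}$ concentrates near an $r$-dimensional tangent structure, whose numerical rank is at most $r$. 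Converting ``intrinsic dimension of $\mathcal{M}$'' into this linear-algebraic rank bound, via the Lipschitz constant and a local covering/tangent-space argument, is the only delicate point here.

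For the concentration bound I would split the deviation into a raw-variance part and a propagation-shift part. Writing $\Delta_i = \tilde{e}_i - e_i$ and using the elementary inequality $\|a+b\|^2 \le (1+\eta)\|a\|^2 + (1+\eta^{-1})\|b\|^2$, I would bound $\mathbb{E}_{i\in\mathcal{S}_c}\|\tilde{e}_i - C_c\|^2$ by $(1+\eta)\,\mathbb{E}_{i\in\mathcal{S}_c}\|e_i - \bar{e}_c\|^2$ plus a term collecting the displacements $\Delta_i$ together with the gap $C_c - \bar{e}_c$. The first factor is the raw intra-class variance, bounded by $\sigma^2$. The remaining term is controlled by the total propagation displacement $\sum_i \|\Delta_i\|^2$, which is precisely a Dirichlet-energy quantity and is therefore $\le \kappa^2\delta$ by hypothesis; absorbing this into $\varepsilon(\delta) = O(\kappa^2\delta)$, with $\eta$ chosen small, gives the stated $\sigma^2 + \varepsilon(\delta)$.

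The separation bound is where I expect the main difficulty, because the listed hypotheses are all smoothness and concentration statements and by themselves cannot force distinct concepts apart: a degenerate encoder collapsing two classes satisfies every one of them. I would therefore make explicit the missing ingredient, either a non-degeneracy assumption that the raw class means are separated, $\|\bar{e}_i - \bar{e}_j\| \ge \Delta_0$, or the margin-based term $\mathcal{L}_{separate}$ of Eq.~(\ref{eq: optimzation}) acting as an active constraint at optimality. Granting a raw margin $\Delta_0$, the concentration estimate already bounds the drift $\|C_c - \bar{e}_c\| \le \sqrt{\varepsilon(\delta)}$, so the triangle inequality gives $\|C_i - C_j\| \ge \Delta_0 - 2\sqrt{\varepsilon(\delta)}$, i.e.\ $\Delta = (\Delta_0 - 2\sqrt{\varepsilon(\delta)})^2$ once $\delta$ is small enough for the right-hand side to stay positive. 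The cosine bound is then purely algebraic: expanding $\|C_i - C_j\|^2 = \|C_i\|^2 + \|C_j\|^2 - 2\|C_i\|\|C_j\|\cos\angle(C_i,C_j) \ge \Delta$ and using the uniform norm bounds inherited from compactness of $\mathcal{M}$, I solve for $\cos\angle(C_i,C_j) \le \rho < 1$, valid whenever $\Delta > (\|C_i\|-\|C_j\|)^2$. I would flag both the smallness-of-$\delta$ requirement and the implicit raw-separation assumption as the genuine hypotheses carrying this part of the theorem.
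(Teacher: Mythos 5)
Your proposal is correct and, for the first two claims, follows essentially the same route as the paper's proof in Appendix~\ref{ap:th1}: the rank bound comes from the fact that concepts live in the span of embeddings of points on an $r$-dimensional manifold (the paper argues via a first-order Taylor expansion of the aggregation in tangent coordinates, you argue via $\mathrm{rank}((\mathbf{I}+\kappa\mathbf{P})\mathbf{E})\le\mathrm{rank}(\mathbf{E})\le r$; both land in the same tangent-space argument), and the compactness bound comes from a bias--variance split with the propagation term controlled by the Dirichlet energy. The paper's decomposition is slightly cleaner than yours because it centers at the mean of the \emph{propagated} embeddings, $\bar{e}_c=\frac{1}{|\mathcal{S}_c|}\sum_{i\in\mathcal{S}_c}\tilde{e}_i$, so the cross term vanishes exactly and no Peter--Paul inequality or transfer from raw to propagated variance is needed; your version works but reads the hypothesis ``intra-class variance bounded by $\sigma^2$'' as applying to the raw embeddings, which costs you an extra $(1+\eta)$ factor that the statement does not have. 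Where you genuinely diverge is the separation bound, and your instinct there is right: the stated hypotheses (Lipschitz encoder, bounded Dirichlet energy, bounded intra-class variance) cannot by themselves force $\|C_i-C_j\|^2\ge\Delta$, since a collapsing encoder satisfies all of them. The paper closes this by invoking the margin/cosine separation loss $\mathcal{L}_{\mathrm{sep}}$ and asserting that ``at convergence'' it guarantees the margin --- i.e., the separation claim is really a property of the trained optimum, not of the listed hypotheses. Your alternative derivation from an assumed raw-mean separation $\|\bar{e}_i-\bar{e}_j\|\ge\Delta_0$ plus the concentration estimate and a triangle inequality is a legitimate, arguably more self-contained substitute, and your explicit flagging of the smallness-of-$\delta$ condition and the hidden non-degeneracy assumption identifies precisely the point the paper's own argument leaves informal.
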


This shows that ALT constructs a concept space that is low-rank, compact, and discriminative—suitable for reliable classification and rejection (Proof in Appendix~\ref{ap:th1}).

\begin{theorem}
\label{theoretical 2}
Let class-wise concept vectors $\{C_c\}$ be constructed via structure-only propagation in ALT, forming implicit hyperspheres with bounded Dirichlet energy $\delta$. Then the total concept volume satisfies a compression bound, and the inter-class redundancy is exponentially suppressed when $\|C_i - C_j\| \geq \theta_{\min}$.
\end{theorem}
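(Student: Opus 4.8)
The plan is to first pin down the informal quantities in the statement and then derive both claims from the structural guarantees of Theorem~\ref{theoretical 1} together with the exponential softmax geometry of Eq.~(\ref{eq: concept_entity}). I would define the \emph{implicit hypersphere} of class $c$ as the ball $B_c = \{x : \|x - C_c\|_2 \le r\}$ whose characteristic radius $r = \sqrt{\sigma^2 + \varepsilon(\delta)}$ is inherited directly from the intra-class compactness bound $\mathbb{E}_{i \in \mathcal{S}_c}\|\tilde{e}_i - C_c\|_2^2 \le \sigma^2 + \varepsilon(\delta)$ of Theorem~\ref{theoretical 1}, where concentration (Markov) promotes the mean-square bound to support control. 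The \emph{total concept volume} is then $V = \sum_{c} \mathrm{Vol}(B_c)$, and I would quantify \emph{inter-class redundancy} by the worst-case cross-activation $R_{ij} = \sup_{x \in B_i} \exp(-\lambda\|x - C_j\|_2)$, i.e. how strongly a point belonging to concept $i$ can spuriously excite concept $j$ under the classifier $\mathcal{D}$.

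For the compression bound I would exploit the low-rank conclusion $\mathrm{rank}(\mathrm{span}(\{C_c\})) \le r$ of Theorem~\ref{theoretical 1}: all concept centers lie in an $r$-dimensional subspace, and because the construction is structure-only propagation with Dirichlet energy bounded by $\delta$, each concept is a convex combination of smoothed embeddings confined to the same compact region. Hence every ball $B_c$ is effectively supported in this subspace, its $r$-dimensional content is $\omega_r\,(\sigma^2 + \varepsilon(\delta))^{r/2}$, and summing gives $V \le |\mathcal{C}_k|\,\omega_r\,(\sigma^2 + \varepsilon(\delta))^{r/2}$ with $\omega_r$ the unit-ball volume in $r$ dimensions. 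Since $\varepsilon(\delta)$ is monotone in the Dirichlet energy, a smaller $\delta$ forces tighter smoothing and strictly shrinks $V$, which is exactly the claimed compression of the concept space.

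For the exponential suppression I would apply the triangle inequality to any $x \in B_i$: whenever the hypothesis $\|C_i - C_j\|_2 \ge \theta_{\min}$ holds we have $\|x - C_j\|_2 \ge \|C_i - C_j\|_2 - \|x - C_i\|_2 \ge \theta_{\min} - r$. Substituting into the cross-activation yields $R_{ij} \le \exp\!\big(-\lambda(\theta_{\min} - r)\big)$, so redundancy decays exponentially in $\lambda(\theta_{\min} - r)$; for $\theta_{\min} > r$ the overlap between any two implicit hyperspheres is exponentially negligible, consistent with the angular separation $\cos\angle(C_i, C_j) \le \rho < 1$ of Theorem~\ref{theoretical 1}.

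The main obstacle I anticipate is not the two inequalities, which are elementary once the definitions are fixed, but making the volume argument genuinely quantitative in $\delta$: I must certify that the balls $B_c$ truly lie (up to negligible tail mass) in the $r$-dimensional concept subspace so that measuring volume there is legitimate, and that the dependence of $\varepsilon(\delta)$ on the Dirichlet energy is explicit enough to guarantee the bound \emph{contracts} as $\delta$ decreases. The delicate point is reconciling this contraction with the competing requirement that separation $\theta_{\min}$ survive heavy smoothing, and I would resolve it by treating $\theta_{\min}$ as a fixed structural margin enforced independently by $\mathcal{L}_{separate}$ in Eq.~(\ref{eq: optimzation}) rather than as a quantity derived from $\delta$.
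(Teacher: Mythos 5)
Your proposal matches the paper's own argument in all essentials: the paper likewise defines implicit hyperspheres of radius $R_c=\sqrt{\mathbb{E}_{i\in\mathcal{S}_c}\|\tilde e_i - C_c\|^2}\le\sqrt{\varepsilon(\delta)}$, measures their volume in the intrinsic dimension $r$ to obtain the compression bound (stated there as a ratio $\mathcal{V}_{\text{total}}/\mathrm{Vol}(\mathcal{B}(0,R))\le\sum_c(R_c/R)^r$ rather than your absolute bound $k\,\omega_r(\sigma^2+\varepsilon(\delta))^{r/2}$), and derives exponential suppression directly from the separation hypothesis. The only material difference is the redundancy functional: the paper bounds the center-to-center sum $\sum_{i\ne j}\exp(-\lambda\|C_i-C_j\|^2)\le\frac{k(k-1)}{2}\exp(-\lambda\theta_{\min})$ by direct substitution, whereas you bound the worst-case cross-activation of points in a ball via the triangle inequality, which requires the extra proviso $\theta_{\min}>r$ but rests on the same mechanism and is, if anything, the slightly more careful formalization.
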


This ensures ALT achieves compact and separable class representations purely from topology, enabling robust rejection and generalization without semantic supervision (Proof in Appendix~\ref{ap:th2}).

\begin{theorem}
\label{theoretical 3}
Let node $i$ be classified by a $\lambda$-sharpened softmax over concepts $\{C_c\}$. If rejected under confidence threshold $\epsilon$, then its entropy satisfies $H(D_i) \geq \log|C_k| - \frac{1}{\lambda}\log\frac{1-\epsilon}{\epsilon}$.
\end{theorem}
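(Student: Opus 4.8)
The plan is to treat the rejection rule as a constraint on the peak of the softmax distribution and then convert that constraint into a lower bound on entropy. First I would fix notation: write $p^\ast = \operatorname{Conf}(D_i) = \max_c D_{i,c}$ for the confidence, where $D_{i,c} = \exp(-\lambda\|\tilde e_i - C_c\|_2)/\sum_{c'}\exp(-\lambda\|\tilde e_i - C_{c'}\|_2)$ are the $\lambda$-sharpened softmax weights over the $|C_k|$ known concepts, and let $z_c = -\lambda\|\tilde e_i - C_c\|_2$ be the corresponding logits, so that rejection means precisely $p^\ast < \epsilon$. The cleanest way to expose the gap to maximal entropy is the identity $\log|C_k| - H(D_i) = \mathrm{KL}\!\left(D_i \,\|\, \mathrm{Unif}_{|C_k|}\right) \ge 0$, which recasts the claim as the equivalent upper bound $\mathrm{KL}(D_i\|\mathrm{Unif}) \le \tfrac{1}{\lambda}\log\frac{1-\epsilon}{\epsilon}$; i.e., a rejected node's distribution is provably close to uniform.

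Next I would turn the peak constraint into a constraint on the logit spread, which is where the factor $\tfrac1\lambda$ enters. Collapsing the $|C_k|$ classes into a binary event (the $\arg\max$ class versus the rest) gives $p^\ast = \sigma(m^\ast)$ with one-vs-rest margin $m^\ast = z^\ast - \log\sum_{c\ne\ast}e^{z_c}$ and $\sigma$ the logistic function. Inverting, $p^\ast < \epsilon$ yields $m^\ast < \sigma^{-1}(\epsilon) = \log\frac{\epsilon}{1-\epsilon}$, equivalently $\log\sum_{c\ne\ast}e^{z_c} - z^\ast > \log\frac{1-\epsilon}{\epsilon}$. Because every logit is an affine-in-distance quantity $z_c = -\lambda\|\tilde e_i - C_c\|_2$, this margin bound translates, after dividing through by $\lambda$, into a bound on the spread of the distances $\{\|\tilde e_i - C_c\|_2\}$: low confidence forces the closest and the remaining concepts to be nearly equidistant from $\tilde e_i$, with the admissible spread shrinking like $\tfrac1\lambda\log\frac{1-\epsilon}{\epsilon}$. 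I would then feed this controlled logit spread into the divergence expression $\mathrm{KL}(D_i\|\mathrm{Unif}) = \log|C_k| + \sum_c D_{i,c}\log D_{i,c}$, using $\log D_{i,c} = z_c - \log\sum_{c'}e^{z_{c'}}$ so that the divergence is governed entirely by how far the logits deviate from a common value.

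The main obstacle is exactly this last conversion: a crude argument from $p^\ast<\epsilon$ alone only delivers the weaker $H(D_i)\ge\log(1/\epsilon)$, whereas anchoring the bound at $\log|C_k|$ requires genuinely exploiting the $\lambda$-sharpening to show that a suppressed peak forces \emph{all} logits to cluster, not merely the top one. Concretely, I would bound $\sum_c D_{i,c}\log D_{i,c}$ by its value under the worst-case logit configuration consistent with the margin constraint $\log\sum_{c\ne\ast}e^{z_c}-z^\ast>\log\frac{1-\epsilon}{\epsilon}$, and argue that this extremal configuration contributes at most $\tfrac1\lambda\log\frac{1-\epsilon}{\epsilon}$ to the divergence. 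The delicate points are handling near-ties among several maximal classes (so the one-vs-rest reduction does not leak a factor depending on $|C_k|$) and verifying that the extremal spread scales linearly in $\tfrac1\lambda$; monotonicity of $\sigma^{-1}$ and convexity of the log-sum-exp partition function are the tools I would lean on to pin down the constant. Once the extremal divergence is shown to match $\tfrac1\lambda\log\frac{1-\epsilon}{\epsilon}$, rearranging via the KL-to-uniform identity returns the stated entropy lower bound.
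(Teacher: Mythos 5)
Your route differs from the paper's: you recast the claim as a KL-to-uniform upper bound, $\log|C_k| - H(D_i) = \mathrm{KL}(D_i\,\|\,\mathrm{Unif})$, and try to control that divergence through a one-vs-rest logit margin. The paper instead directly evaluates the entropy of the specific configuration in which one class carries probability $\epsilon$ and the remaining $|C_k|-1$ classes share $(1-\epsilon)/(|C_k|-1)$ uniformly, expands $H = -\epsilon\log\epsilon - (1-\epsilon)\log\frac{1-\epsilon}{|C_k|-1}$, approximates $|C_k|-1\approx|C_k|$, and then attaches the $\frac{1}{\lambda}$ factor by appealing to the logit-ratio identity $D_{i,a}/D_{i,b}=\exp(-\lambda(\|\tilde e_i-C_a\|^2-\|\tilde e_i-C_b\|^2))$ and an asymptotic simplification. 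So the two arguments are organized quite differently, and the comparison matters because the step you flag as "the main obstacle" is precisely where both need to do real work.

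That obstacle is a genuine gap in your plan, and I do not see how your proposed fix closes it. The constraint $\max_c D_{i,c}<\epsilon$ bounds only the top class; it does not force the non-maximal logits to cluster. The extremal (minimum-entropy) distribution consistent with the peak constraint is not near-uniform: it places mass $\approx\epsilon$ on roughly $\lceil 1/\epsilon\rceil$ classes and essentially zero on the rest, giving $H\approx\log(1/\epsilon)$ and hence $\mathrm{KL}(D_i\|\mathrm{Unif})\approx\log|C_k|-\log(1/\epsilon)$, a quantity that does not shrink as $\lambda$ grows. Crucially, the $\lambda$-sharpened softmax form gives you no extra leverage here: for any target probability vector one can choose distances $\|\tilde e_i - C_c\|$ realizing it exactly (shift the logits to make them nonpositive and divide by $\lambda$), so "exploiting the $\lambda$-sharpening" cannot exclude the spread-out configurations. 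Consequently your claimed extremal bound $\mathrm{KL}(D_i\|\mathrm{Unif})\le\frac{1}{\lambda}\log\frac{1-\epsilon}{\epsilon}$ fails whenever $|C_k|>1/\epsilon$ and $\lambda$ is large, and the one-vs-rest reduction will indeed "leak a factor depending on $|C_k|$" exactly as you feared. If you want a statement provable from the rejection rule alone, the honest conclusion is the weaker $H(D_i)\ge$ (entropy of the mass-$\epsilon$-packing configuration), which is on the order of $\log(1/\epsilon)$; recovering the $\log|C_k|-\frac{1}{\lambda}\log\frac{1-\epsilon}{\epsilon}$ form requires an additional assumption that all pairwise distance gaps $\bigl|\|\tilde e_i-C_a\|^2-\|\tilde e_i-C_b\|^2\bigr|$ are uniformly small (which is what the paper's near-equidistance step implicitly assumes), not just that the top one is suppressed.
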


This quantifies ALT's rejection uncertainty and supports controllable unknown-class identification (Proof in Appendix~\ref{ap:th3}).

\begin{theorem}
\label{theoretical 4}
Let communities $\{P_k\}$ be obtained by optimizing semantic-enhanced modularity $Q$ with parameter $\gamma$. Then intra- and inter-community similarities satisfy $S_{\text{intra}} \geq \eta(\gamma,\zeta)\bar{s}$ and $S_{\text{inter}} \leq (1-\eta(\gamma,\zeta))\bar{s}$, where $\bar{s}$ is the global average similarity and $\zeta$ the structural-semantic coherence. Moreover, $\lim_{\gamma\to1}\eta(\gamma,\zeta)=1$.
\end{theorem}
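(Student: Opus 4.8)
The plan is to exploit the global optimality of the modularity-maximizing partition $\{P_k\}$ from Eq.~(\ref{eq: text_enhanced_community_detection}) through an exchange argument, and then to convert the resulting topology-plus-semantics inequalities into purely semantic statements via the coherence parameter $\zeta$. First I would fix notation: write $s_{ij}=\tilde{e}_i^{T}\tilde{e}_j/(\|\tilde{e}_i\|\,\|\tilde{e}_j\|)$ for the pairwise cosine similarity, let $\pi$ denote the fraction of node pairs that are intra-community, and record the conservation identity $\pi\,S_{\mathrm{intra}}+(1-\pi)\,S_{\mathrm{inter}}=\bar{s}$, which holds immediately from the definitions of the three averages. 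I would then split the objective as $\mathcal{Q}=\mathcal{Q}_{\mathrm{topo}}+\gamma\,\mathcal{Q}_{\mathrm{sem}}$, isolating the semantic contribution $\mathcal{Q}_{\mathrm{sem}}=\tfrac{1}{2m}\sum_{i,j}s_{ij}\,\delta(c_i,c_j)$ from the adjacency and null-model terms, so the $\gamma$-dependence of the optimum can be tracked explicitly.

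Next I would invoke local optimality. Because $\{P_k\}$ maximizes $\mathcal{Q}$, relocating any node $v$ from its assigned community to any competing community cannot increase $\mathcal{Q}$; writing out this single-move inequality yields, for every node, a bound stating that $v$'s $\gamma$-weighted affinity (edges plus scaled similarity minus the degree penalty) to its own community dominates its affinity to each alternative. Aggregating these per-node inequalities over all nodes produces a global inequality lower-bounding the intra-community semantic mass by a $\gamma$- and topology-dependent threshold. The structural-semantic coherence $\zeta$ enters precisely here: I would define $\zeta$ as the normalized alignment between the adjacency term $\mathbf{A}_{ij}$ and the similarity term $s_{ij}$ (e.g.\ their correlation over pairs), so that the topology-driven part of each exchange inequality also advances semantic separation in proportion to $\zeta$. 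Combining the semantic and topological contributions of the aggregated bounds then gives $S_{\mathrm{intra}}-S_{\mathrm{inter}}\ge g(\gamma,\zeta)$ for an explicit nonnegative $g$ that is increasing in both arguments.

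With this separation gap in hand, I would solve the two-variable system formed by the gap inequality and the conservation identity to isolate $S_{\mathrm{intra}}$ and $S_{\mathrm{inter}}$ individually in terms of $\bar{s}$, and define $\eta(\gamma,\zeta)$ as the resulting coefficient, e.g.\ a closed form such as $\eta(\gamma,\zeta)=\bigl(\gamma+(1-\gamma)\zeta\bigr)/\bigl(1+\gamma(1-\zeta)\bigr)$ or whatever the algebra dictates, chosen so that $S_{\mathrm{intra}}\ge\eta\,\bar{s}$ and $S_{\mathrm{inter}}\le(1-\eta)\,\bar{s}$ hold simultaneously while remaining consistent with the conservation law. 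Finally I would take $\gamma\to1$: the null-model penalty $(1-\gamma)d_id_j/2m$ vanishes and the semantic term attains full weight, so the exchange inequalities reduce to pure similarity comparisons that force every inter-community pair to carry below-average affinity, driving $S_{\mathrm{inter}}\to0$ and hence $\eta(\gamma,\zeta)\to1$, as claimed.

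The hard part will be making the coherence parameter $\zeta$ carry the argument rigorously. The exchange inequalities mix the adjacency, similarity, and degree terms, yet the theorem asserts a conclusion about similarities alone; bridging this gap requires quantifying how strongly the topology-weighted objective pulls in the same direction as semantics, which is exactly what $\zeta$ must encode. Pinning down a definition of $\zeta$ that is simultaneously (i) strong enough to close the inequalities for all $\gamma\in[0,1]$, (ii) weak enough to be a genuine property of the data rather than a restatement of the conclusion, and (iii) compatible with the clean $\gamma\to1$ limit, is the delicate step. The conservation identity further couples the two target bounds, so the definition of $\eta$ cannot be chosen freely but must be reverse-engineered to respect it.
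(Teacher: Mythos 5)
Your proposal follows the same overall strategy as the paper's proof---exploit optimality of the modularity-maximizing partition, introduce a structural--semantic coherence factor $\zeta$ to transfer topological optimality into a statement about similarities, and obtain the limit by observing that the semantic term dominates as $\gamma\to1$---but the execution differs in two ways that are worth noting. The paper exploits optimality by formally setting $\partial Q/\partial c_i=0$ over the discrete assignments and then simply \emph{asserting} that pairs with high combined structural--semantic affinity land in the same community, which yields the two bounds "relative to random pairing"; it never derives a closed form for $\eta(\gamma,\zeta)$, and it fixes $\zeta=\frac{1}{2m}\sum_{i,j}A_{ij}\,s_{ij}$ (adjacency-weighted average similarity) rather than a correlation. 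Your single-node exchange argument is a more legitimate way to use optimality of a discrete partition than a stationarity condition, and your conservation identity plus separation-gap step is a concrete plan for actually producing $\eta$, which the paper does not attempt; in that sense your route is more careful than the published one. Two caveats: (i) the conservation identity $\pi S_{\text{intra}}+(1-\pi)S_{\text{inter}}=\bar s$ holds only for pair-fraction-weighted averages, whereas the paper normalizes $S_{\text{intra}}$ by $\sum_k|P_k|$ and $S_{\text{inter}}$ by $\sum_{k\ne l}|P_k||P_l|$, so you would need to either adopt consistent normalizations or redo the bookkeeping; (ii) in the $\gamma\to1$ limit the null-model penalty vanishes and, for nonnegative affinities, the optimal partition can collapse to a single community, which makes $S_{\text{inter}}$ vacuous and the limit $\eta\to1$ hold only degenerately---neither your sketch nor the paper addresses this. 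The "hard part" you flag (pinning down $\zeta$ and $\eta$ so the inequalities close non-circularly) is genuinely the crux, and the paper's own proof leaves it equally open.
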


This ensures GLA promotes semantic cohesion within and separation across communities, enabling more efficient and accurate annotation (Proof in Appendix~\ref{ap:th4}).

\section{Experiment}
\label{sec: Experiments}
\vspace{-0.1cm}
    In this section, we conduct a wide range of experiments and aim to answer: 
    \textbf{Q1: Effectiveness.} 
    Compared with other SOTA baselines, can OGA achieve superior UCR and UCA performance?  
    \textbf{Q2: Interpretability.} 
    If OGA is effective, what factors contribute to the success of ALT and GLA?  
    \textbf{Q3: Robustness.} 
    How sensitive is OGA to hyperparameters, and how does it perform in complex data scenarios? 
    \textbf{Q4: Efficiency.} 
    What is the running efficiency of OGA?
    Due to space constraints, please refer to Appendix~\ref{Environment} (exp environments), Appendix~\ref{dataset} (datasets), Appendix~\ref{baseline} (baselines), Appendix~\ref{evaluation} (evaluation protocols), and Appendix~\ref{hyperparameter} (hyperparameters) for more details.

\vspace{-0.2cm}
\subsection{Performance Comparison (Q1)}
\textbf{UCR Performance.} 
    Table~\ref{tab:known_accuracy_mini}-\ref{tab:reject_performance_mini} consistently show that OGA achieves the best performance in most cases across the two aspects outlined in Sec.~\ref{sec: Notations and Problem Formulation}.
    Specifically, in Aspect 1 (Accuracy), Table~\ref{tab:known_accuracy_mini} demonstrates that OGA achieves superior performance, particularly on the Cora and WikiCS datasets.
    In Aspect 2 (Coverage/Precision), Table~\ref{tab:reject_performance_mini} highlights the difficulty of achieving a balanced trade-off, as existing methods tend to perform well in only one metric.
    However, OGA leverages ontology representation learning to adapt to an open-label domain, thereby achieving satisfactory balanced performance effectively.
    Please refer to Appendix~\ref{ap:kn_result} and ~\ref{ap:un_result} for more experimental results.

\begin{table*}[t]
\centering
\begin{minipage}[t]{0.48\textwidth}
\centering
\small
\captionsetup{font=small, labelfont=bf, textfont=it}
\caption{UCR performance in Aspect 1.
\textcolor{red}{\textbf{Red}}: 1st, \textcolor{blue}{\textbf{Blue}}: 2nd, \textcolor{orange}{\textbf{Orange}}: 3rd (\%).}
\label{tab:known_accuracy_mini}
\vspace{0.2cm}
\begin{adjustbox}{max width=\textwidth}
\begin{tabular}{l|cccc}
\toprule
\textbf{Dataset} & \textbf{Cora} & \textbf{arXiv} & \textbf{Children} & \textbf{Wikics} \\
\midrule
ORAL       & \textcolor{orange}{87.54{\scriptsize$\pm$0.62}} & 74.62{\scriptsize$\pm$0.58} & 37.13{\scriptsize$\pm$0.53} & 79.83{\scriptsize$\pm$0.20} \\
OpenWgl    & 87.01{\scriptsize$\pm$0.94} & 62.85{\scriptsize$\pm$1.03} & 30.72{\scriptsize$\pm$0.89} & 82.36{\scriptsize$\pm$0.28} \\
OpenIMA    & 78.52{\scriptsize$\pm$0.17} & OOM & OOM & \textcolor{blue}{86.12{\scriptsize$\pm$0.45}} \\
OpenNCD    & 80.77{\scriptsize$\pm$0.31} & OOM & OOM & 54.52{\scriptsize$\pm$0.84} \\
\midrule
IsoMax     & 77.76{\scriptsize$\pm$0.08} & \textcolor{blue}{76.38{\scriptsize$\pm$0.07}} & 38.72{\scriptsize$\pm$0.09} & 82.26{\scriptsize$\pm$0.56} \\
GOOD       & \textcolor{blue}{87.62{\scriptsize$\pm$0.28}} & \textcolor{orange}{76.03{\scriptsize$\pm$0.70}} & \textcolor{orange}{50.44{\scriptsize$\pm$0.71}} & 57.97{\scriptsize$\pm$0.17} \\
gDoc       & 30.37{\scriptsize$\pm$0.27} & 75.31{\scriptsize$\pm$0.49} & 47.55{\scriptsize$\pm$0.40} & 74.74{\scriptsize$\pm$0.36} \\
GNN\_safe  & 85.85{\scriptsize$\pm$0.25} & 72.20{\scriptsize$\pm$0.91} & \textcolor{blue}{50.66{\scriptsize$\pm$0.51}} & \textcolor{orange}{82.55{\scriptsize$\pm$0.61}} \\
OODGAT     & 80.00{\scriptsize$\pm$0.97} & 55.80{\scriptsize$\pm$0.87} & 39.44{\scriptsize$\pm$0.95} & 78.39{\scriptsize$\pm$0.38} \\
ARC        & 71.02{\scriptsize$\pm$0.20} & 60.22{\scriptsize$\pm$0.23} & 37.54{\scriptsize$\pm$0.72} & 72.13{\scriptsize$\pm$0.74} \\
EDBD       & 76.85{\scriptsize$\pm$0.48} & 65.34{\scriptsize$\pm$0.16} & 36.94{\scriptsize$\pm$0.40} & 70.34{\scriptsize$\pm$0.53} \\
\specialrule{.1em}{.05em}{.05em}
\textbf{Ours(OGA)} & \textbf{\textcolor{red}{90.02{\scriptsize$\pm$0.24}}} & \textbf{\textcolor{red}{78.39{\scriptsize$\pm$0.09}}} & \textbf{\textcolor{red}{51.10{\scriptsize$\pm$0.11}}} & \textbf{\textcolor{red}{87.53{\scriptsize$\pm$0.18}}} \\
\bottomrule
\end{tabular}
\end{adjustbox}
\end{minipage}
\hfill
\begin{minipage}[t]{0.48\textwidth}
\centering
\small
\captionsetup{font=small, labelfont=bf, textfont=it}
\caption{UCR performance in Aspect 2.
Each item is expressed as \textbf{Coverage}/\textbf{Precision}(\%).}
\label{tab:reject_performance_mini}
\vspace{0.2cm} 
\begin{adjustbox}{max width=\textwidth}
\begin{tabular}{l|cccc}
\toprule
\textbf{Dataset} & \textbf{Cora} & \textbf{arXiv} & \textbf{Children} & \textbf{Wikics} \\
\midrule
ORAL         & 85.2 / 33.6 & 73.3 / 31.4 & 54.0 / 12.7 & 32.8 / 7.2 \\
OpenWgl      & 37.9 / \textcolor{red}{\textbf{91.3}} & 64.1 / \textcolor{red}{\textbf{72.4}} & 37.8 / 12.3 & 59.4 / 18.6 \\
OpenIMA      & 80.5 / 64.0 & OOM & OOM & 69.1 / \textcolor{orange}{\textbf{33.9}} \\
OpenNCD      & 27.7 / \textcolor{blue}{\textbf{87.1}} & OOM & OOM & 48.5 / 19.4 \\
\midrule
IsoMax       & 67.3 / 65.7 & 59.2 / 52.4 & 50.1 / 13.0 & 38.6 / 13.5 \\
GOOD         & 72.8 / 71.5 & \textcolor{orange}{\textbf{88.0}} / 51.8 & 81.4 / 12.9 & 78.3 / 17.3 \\
gDoc         & 71.9 / 71.2 & 80.6 / \textcolor{blue}{\textbf{64.3}} & 75.2 / 9.3 & 79.2 / \textcolor{red}{\textbf{59.6}} \\
GNN\_Safe    & \textcolor{blue}{\textbf{90.0}} / 43.1 & 85.4 / 40.0 & \textcolor{blue}{\textbf{88.6}} / \textcolor{orange}{\textbf{14.0}} & \textcolor{blue}{\textbf{87.7}} / 30.2 \\
OODGAT       & \textcolor{orange}{\textbf{89.5}} / 51.6 & \textcolor{red}{\textbf{96.1}} / 22.7 & \textcolor{orange}{\textbf{86.7}} / \textcolor{blue}{\textbf{21.2}} & \textcolor{orange}{\textbf{83.4}} / 17.8 \\
ARC          & 83.4 / 39.5 & 65.3 / 33.6 & 65.7 / 11.8 & 43.1 / 11.7 \\
EDBD         & 54.3 / 72.0 & 62.2 / 22.6 & 39.5 / 16.8 & 39.7 / 16.5 \\
\specialrule{.1em}{.05em}{.05em}
\textbf{Ours (OGA)} & \textcolor{red}{\textbf{94.2}} / \textcolor{orange}{\textbf{82.3}} & \textcolor{blue}{\textbf{91.7}} / \textcolor{orange}{\textbf{60.5}} & \textcolor{red}{\textbf{96.9}} / \textcolor{red}{\textbf{24.3}} & \textcolor{red}{\textbf{96.4}} / \textcolor{blue}{\textbf{48.9}} \\
\bottomrule
\end{tabular}
\end{adjustbox}
\end{minipage}
\end{table*}

\textbf{UCA Performance.}
We first clarify the notation used in the benchmark for Aspect 3 (Quality): "k" denotes known-class labels, "u" represents ideal labels for unknown classes (for evaluation only), and "g" denotes labels generated by OGA. 
As shown in Table~\ref{tab:uca_aspect3}, GLA achieves clear semantic separation in open-label domains by leveraging topology-guided multi-granularity annotation. More results are provided in Appendix~\ref{Semantic Similarity Analysis}.For Aspect 4 (Efficiency), Table~\ref{tab:uca_aspect4} shows that OGA significantly restores GNN performance under limited labeling (e.g., GCN on PubMed improves from 47.12\% to 87.15\%, approaching the upper bound of 87.49\%). Additional results are in Appendix~\ref{ap:GLA_result}.

\begin{table}[ht]
\vspace{-0.3cm}
\centering
\captionsetup{font=small, labelfont=bf, textfont=it}
\begin{minipage}{0.38\textwidth}
    \centering
    \caption{UCA performance in Aspect 3.}
    \label{tab:uca_aspect3}
    \resizebox{\textwidth}{!}{
    \begin{tabular}{l|cccc}
        \toprule
        \textbf{Similarity} & \textbf{Cora} & \textbf{arXiv} & \textbf{Children} & \textbf{Wikics} \\
        \midrule
        \textbf{k to k (baseline)}         & 75.01 & 80.02 & 71.04 & 73.09 \\
        \textbf{k to g ($\downarrow$)} & 35.12 & 37.34 & 32.10 & 33.17 \\
        \textbf{g to u ($\uparrow$)}   & 60.14 & 63.09 & 57.12 & 58.06 \\
        \bottomrule
    \end{tabular}
    }
\end{minipage}%
\hfill
\begin{minipage}{0.60\textwidth}
    \centering
    \caption{UCA performance in Aspect 4.}
    \label{tab:uca_aspect4}
    \begin{minipage}{0.48\textwidth}
        \centering
        \resizebox{\textwidth}{!}{
        \begin{tabular}{l|ccc}
            \toprule
            \textbf{GCN} & \textbf{Cora} & \textbf{Citeseer} & \textbf{Pubmed} \\
            \midrule
            Lower & 61.62 & 58.67 & 47.12 \\
            Ours  & 78.15 & 65.72 & 87.15 \\
            Upper & 85.56 & 72.71 & 87.49 \\
            \bottomrule
        \end{tabular}
        }
    \end{minipage}%
    \hfill
    \begin{minipage}{0.48\textwidth}
        \centering
        \resizebox{\textwidth}{!}{
        \begin{tabular}{l|ccc}
            \toprule
            \textbf{GAT} & \textbf{Cora} & \textbf{Citeseer} & \textbf{Pubmed} \\
            \midrule
            Lower & 60.52 & 58.51 & 46.33 \\
            Ours  & 79.26 & 69.50 & 86.45 \\
            Upper & 87.04 & 74.08 & 86.19 \\
            \bottomrule
        \end{tabular}
        }
    \end{minipage}
\end{minipage}
\vspace{-0.4cm}
\end{table}

\vspace{-0.2cm}

\subsection{Ablation Study and In-depth Investigations (Q2)}
\vspace{-0.1cm}

\textbf{ALT Part (UCR Performance).} 
    As shown in Table~\ref{tab:alt_ablation}, both Concept Modeling (CM) and Topology-aware Propagation (TP) play essential roles in the performance of OGA. 
    Specifically, CM defines and refines class boundaries within the ontology space, enhancing the model’s ability to accurately distinguish between known and unknown classes. 
    TP incorporates graph topology into the representation learning process, enabling node embeddings to capture both semantic meaning and structural context. 
    Additional details regarding the ALT ablation study and the effectiveness of the entropy-based softmax approach are provided in Appendix~\ref{Ablation Study on ALT} and Appendix~\ref{Entropy_analysis}.

\textbf{GLA Part (UCA Performance).}
    To evaluate the effectiveness of topology-guided community detection in supporting multi-granularity semantic mining for graph annotation, we conduct an in-depth analysis from three perspectives—redundancy (Re), consistency (Con), and accuracy (Acc)—as presented in Table~\ref{tab:ablation_semantic_citeseer_pubmed_in}.
    The results demonstrate that high-quality topological context enables seamless and efficient integration between graph annotation and LLM inference. 
    The definitions and computation of these three metrics, along with additional experimental results on the GLA ablation study and a case study, are provided in Appendix~\ref{Ablation Study on GLA} and Appendix~\ref{GLA:case study}.

\begin{table}[H]
\vspace{-0.6cm}
\centering
\begin{minipage}{0.5\textwidth}
\centering
\captionsetup{font=small, labelfont=bf, textfont=it}
\caption{Ablation results of ALT.}
\label{tab:alt_ablation}
\begin{adjustbox}{max width=\textwidth}
\begin{tabular}{l|ccc}
\toprule
\textbf{Method} & \textbf{Cora} & \textbf{Citeseer} & \textbf{Pubmed} \\
\midrule
w/o CM & 41.25 / 62.31 & 39.24 / 19.12 & 52.10 / 21.75 \\
w/o TP & 73.15 / 73.52 & 70.22 / 52.81 & 69.83 / 61.27 \\
\textbf{Full ALT} & \textbf{94.87 / 82.66} & \textbf{93.00 / 65.19} & \textbf{79.76 / 98.53} \\
\bottomrule
\end{tabular}
\end{adjustbox}
\end{minipage} \hfill
\begin{minipage}{0.48\textwidth}
\centering
\captionsetup{font=small, labelfont=bf, textfont=it}
\caption{Ablation results of GLA.}
\label{tab:ablation_semantic_citeseer_pubmed_in}
\begin{adjustbox}{max width=\linewidth}
\begin{tabular}{c|ccc|ccc}
\toprule
\multirow{2}{*}{\textbf{Dataset}} & \multicolumn{3}{c|}{\textbf{Full GLA}} & \multicolumn{3}{c}{\textbf{w/o Semantic Community}} \\
\cmidrule{2-7}
 & \textbf{Re} & \textbf{Con} & \textbf{Acc} & \textbf{Re} & \textbf{Con} & \textbf{Acc} \\
\midrule
Citeseer & \textbf{9}   & \textbf{0.84} & \textbf{72.1\%} & 17  & 0.70 & 67.3\% \\
Pubmed   & \textbf{5}   & \textbf{0.86} & \textbf{86.0\%} & 11  & 0.73 & 81.2\% \\
\bottomrule
\end{tabular}
\end{adjustbox}
\end{minipage}
\vspace{-0.3cm}
\end{table}

\subsection{Robustness Analysis of Model Hyperparameters and Data Distributions (Q3)}

\textbf{ALT Part (Model Perspective).} 
    In this section, we investigate the impact of the $\lambda$ in Eq.~\ref{eq: concept_entity}, which directly influences the sharpness of the decision boundary and thus plays a critical role in both UCR and UCA. 
    As shown in Figure\ref{fig:lambda_A}, increasing $\lambda$ enhances model robustness by improving class separability and rejection capability. 
    However, excessively large values of $\lambda$ lead to overconfidence. 
    A moderate range of $\lambda$ provides a more favorable trade-off between UCR and UCA.
    Although the ALT module involves multiple hyperparameters, due to space constraints, we focus on the most influential one, $\lambda$, in the main text. 
     More details about $\lambda$, impact of unknown-class on ALT and sensitivity analyses for $\alpha$ and $\beta$ in Eq.~\ref{eq: optimzation} are presented in Appendix ~\ref{ap:lambda},~\ref{Effect of Unknown Class Ratio} and ~\ref{ap:alpha and beta}.

\begin{figure}[htbp]
    \vspace{-0.3cm}
    \centering
    \begin{minipage}{0.5\linewidth}
        \centering
        \includegraphics[width=\linewidth, height=3.5cm]{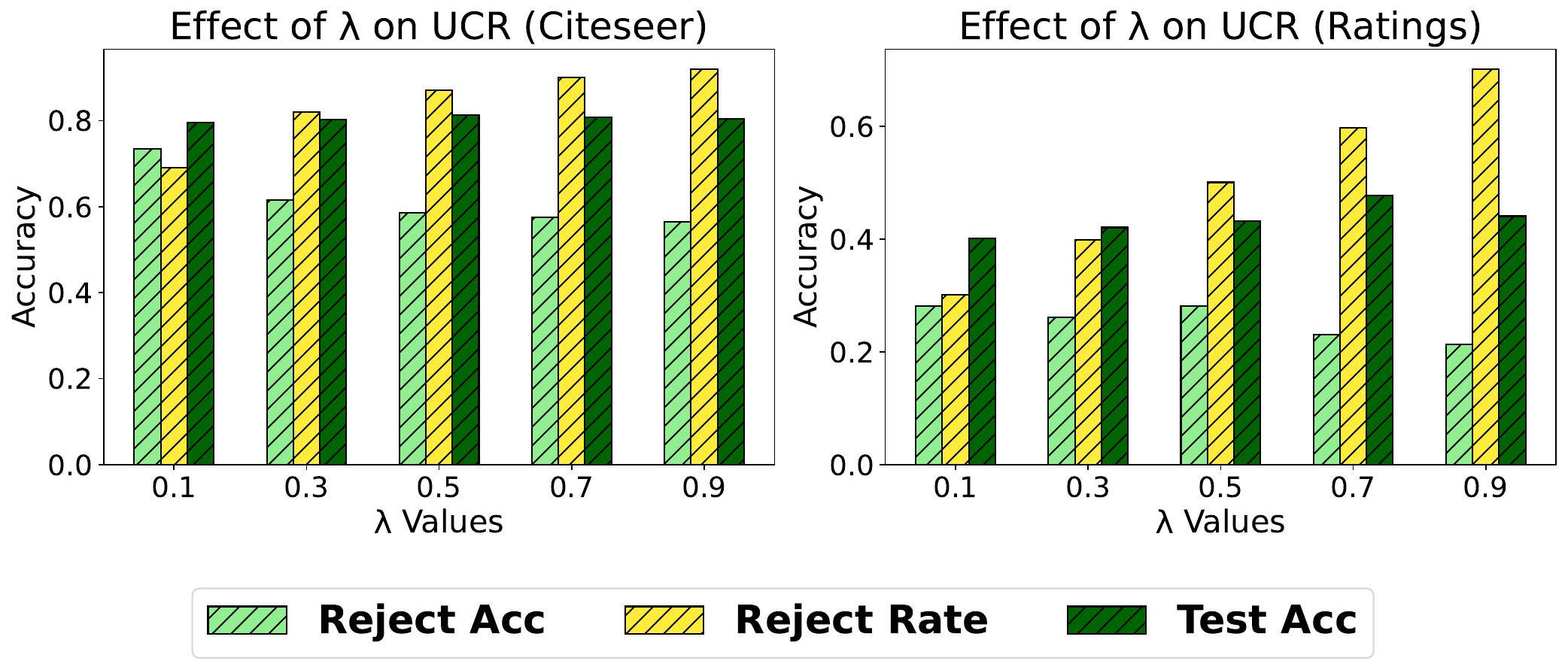}
        \captionsetup{font=small, labelfont=bf, textfont=it}
        \caption{ALT sensitivity analysis (model).}
        \label{fig:lambda_A} 
    \end{minipage}
    \hspace{0.04\linewidth} 
    \begin{minipage}{0.4\linewidth}
        \centering
        \includegraphics[width=\linewidth, height=3.3cm]{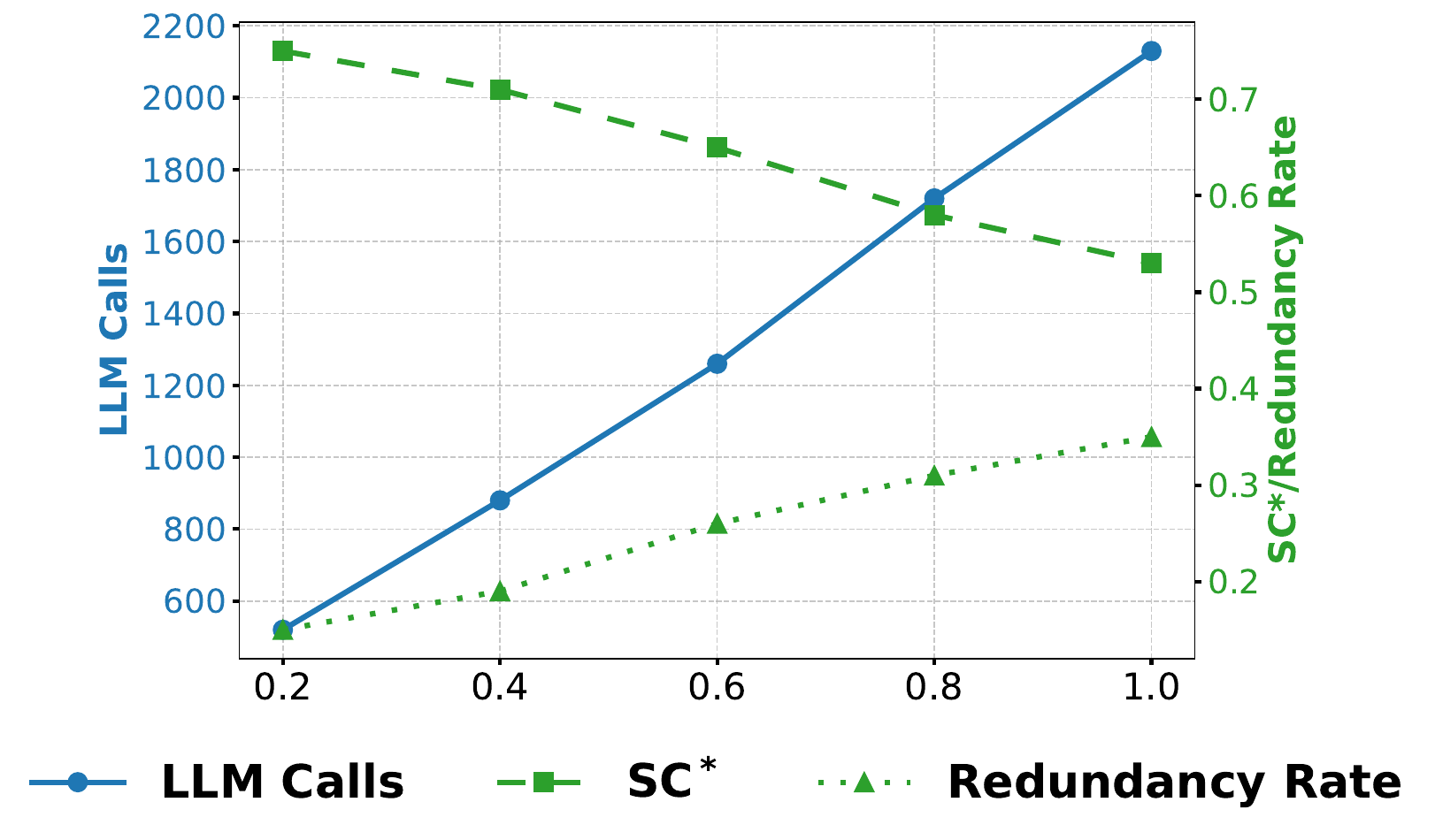}
        \captionsetup{font=small, labelfont=bf, textfont=it}
        \caption{GLA sensitivity analysis (data).}
        \label{fig:ukc_gla_effect}
    \end{minipage}
    \vspace{-0.3cm}
\end{figure}

\textbf{GLA Part (Data Perspective).} 
    In this section, we analyze the impact of the proportion of unlabeled known-class nodes on annotation, as illustrated in Figure~\ref{fig:ukc_gla_effect} .
    The semantic consistency metric $SC^*$—defined as the average pairwise semantic similarity among generated labels with known ground-truth classes—exhibits a steady decline. 
    The Redundancy Rate measures the proportion of semantically overlapping labels among all community-level annotations; a higher redundancy rate suggests an increased presence of duplicate or semantically similar labels across communities, thereby indicating reduced annotation effectiveness. 
    In addition to data considerations from the deployment, the impact of $\gamma$ in Eq.~\ref{eq: text_enhanced_community_detection} is analyzed in Appendix~\ref{ap:gamma}.

\subsection{Efficiency Comparison (Q4)}
\vspace{-0.3cm}
\begin{table}[htbp]
\centering
\small  
\begin{minipage}{0.45\textwidth}
\centering
\captionsetup{font=small, labelfont=bf, textfont=it}
\caption{Running time (in seconds).}
\label{tab:runtime_in}
\begin{adjustbox}{max width=\textwidth}
\begin{tabular}{lcccc}
\toprule
\textbf{Dataset} & \textbf{Cora} & \textbf{arXiv} & \textbf{Children} & \textbf{WikiCS} \\
\midrule
OpenWGL & 34.93 & 2223.27 & 727.52 & 1235.24 \\
IsoMax  & 4.21  & 38.12   & 25.33  & 12.79 \\
GOOD    & 13.12 & 144.66   & 127.64  & 53.17 \\
Ours    & \textbf{8.91}  & \textbf{99.23}  & \textbf{73.83}  & \textbf{29.48} \\
\bottomrule
\end{tabular}
\end{adjustbox}
\end{minipage}
\hfill
\begin{minipage}{0.54\textwidth}
\centering
\captionsetup{font=small, labelfont=bf, textfont=it}
\caption{LLM call comparison.}
\label{tab:llm_calls}
\begin{adjustbox}{max width=\textwidth}
\begin{tabular}{lcccc}
\toprule
\textbf{Dataset} & \textbf{Cora} & \textbf{arXiv} & \textbf{Children} & \textbf{WikiCS} \\
\midrule
Pure Calls       & 2,708   & 169,343 & 76,875 & 11,701 \\
\textbf{GLA Calls} & \textbf{329} & \textbf{17,354} & \textbf{7,973} & \textbf{1,285} \\
(with) Com Count          & 52      & 411     & 277    & 108 \\
\textbf{Reduction} & \textbf{87.8\%~$\downarrow$} & \textbf{89.8\%~$\downarrow$} & \textbf{89.6\%~$\downarrow$} & \textbf{89.0\%~$\downarrow$} \\
\bottomrule
\end{tabular}
\end{adjustbox}
\end{minipage}
\vspace{-0.3cm}
\end{table}

\textbf{ALT Part (UCR Running Time).} 
    As shown in Table~\ref{tab:runtime_in}, the ALT module substantially reduces runtime compared to OpenWGL, particularly on large-scale arXiv, due to its lightweight graph propagation.
    While IsoMax incurs lower computational overhead, its overly simplistic design leads to significantly inferior performance.
    In contrast, OGA achieves both high efficiency and superior effectiveness. 
    More theoretical complexity analysis of ALT is provided in Appendix~\ref{Time of ALT}.

\textbf{GLA Part (UCA LLM Calls).}  
    Table~\ref{tab:llm_calls} shows that GLA reduces LLM calls by over 87\% across all datasets through a topology-prompted strategy. 
    Rather than performing node-wise inference, GLA conducts batch annotation over a limited number of communities (“Com Count”), substantially decreasing LLM usage while maintaining validity. 
    Notably, this efficiency gain does not come at the expense of performance.
    More details on the efficiency of GLA are provided in Appendix~\ref{Time of GLA}.

\section{Conclusion}
In this paper, we propose OGA, the first fully automated data annotation framework based on LLMs to address unlabeled data uncertainty in open-world graph learning. By integrating both semantic and topology-based methods, OGA effectively handles unknown-class nodes, opening up a new research direction and promoting the deployment of graph machine learning in real-world scenarios. Our approach outperforms SOTA methods in both classification accuracy and rejection performance, demonstrating its significant contributions to the field.In the future, we will pay more attention to the scalability of OGA on large-scale graphs or special legends such as temporal or spatial graphs.

\clearpage

\bibliographystyle{plain}
\bibliography{reference}

\begin{thebibliography}{10}

\bibitem{bao2024ood_semantic_graph_topoood}
Tianyi Bao, Qitian Wu, Zetian Jiang, Yiting Chen, Jiawei Sun, and Junchi Yan.
\newblock Graph out-of-distribution detection goes neighborhood shaping.
\newblock In {\em International Conference on Machine Learning, ICML}, 2024.

\bibitem{link_prediction3}
Maciej Besta, Raphael Grob, Cesare Miglioli, Nicola Bernold, Grzegorz Kwasniewski, Gabriel Gjini, Raghavendra Kanakagiri, Saleh Ashkboos, Lukas Gianinazzi, Nikoli Dryden, and Torsten Hoefler.
\newblock Motif prediction with graph neural networks.
\newblock In {\em Proceedings of the ACM SIGKDD Conference on Knowledge Discovery and Data Mining, KDD}, 2022.

\bibitem{chen2025ood_semantic_decoupled_degem}
Yuhan Chen, Yihong Luo, Yifan Song, Pengwen Dai, Jing Tang, and Xiaochun Cao.
\newblock Decoupled graph energy-based model for node out-of-distribution detection on heterophilic graphs.
\newblock {\em International Conference on Learning Representations, ICLR}, 2025.

\bibitem{chen2023label_llmgnn}
Zhikai Chen, Haitao Mao, Hongzhi Wen, Haoyu Han, Wei Jin, Haiyang Zhang, Hui Liu, and Jiliang Tang.
\newblock Label-free node classification on graphs with large language models (llms).
\newblock {\em International Conference on Learning Representations, ICLR}, 2023.

\bibitem{choi2024cgl_forget}
Seungyoon Choi, Wonjoong Kim, Sungwon Kim, Yeonjun In, Sein Kim, and Chanyoung Park.
\newblock Dslr: Diversity enhancement and structure learning for rehearsal-based graph continual learning.
\newblock In {\em Proceedings of the ACM Web Conference, WWW}, 2024.

\bibitem{Daeho2025}
Jongin~Lim Daeho~Um, Yuneil~Yeo Sunoh~Kim, and Yoonho Jung.
\newblock Spreading out-of-distribution detection on graphs.
\newblock {\em ICLR 2025}, 2023.

\bibitem{daruna2021cgl_forget}
Angel Daruna, Mehul Gupta, Mohan Sridharan, and Sonia Chernova.
\newblock Continual learning of knowledge graph embeddings.
\newblock {\em IEEE Robotics and Automation Letters}, 6(2):1128--1135, 2021.

\bibitem{galke2021open_nc}
Lukas Galke, Benedikt Franke, Tobias Zielke, and Ansgar Scherp.
\newblock Lifelong learning of graph neural networks for open-world node classification.
\newblock In {\em International Joint Conference on Neural Networks, IJCNN}, 2021.

\bibitem{gong2024ood_semantic_energy_energydef}
Zheng Gong and Ying Sun.
\newblock An energy-centric framework for category-free out-of-distribution node detection in graphs.
\newblock In {\em Proceedings of the ACM SIGKDD Conference on Knowledge Discovery and Data Mining, KDD}, 2024.

\bibitem{GraphSAGE}
William~L. Hamilton, Rex Ying, and Jure Leskovec.
\newblock Inductive representation learning on large graphs.
\newblock Curran Associates Inc., 2017.

\bibitem{hoang2023cgl_forget}
Thanh~Duc Hoang, Do~Viet Tung, Duy-Hung Nguyen, Bao-Sinh Nguyen, Huy~Hoang Nguyen, and Hung Le.
\newblock Universal graph continual learning.
\newblock {\em Transactions on Machine Learning Research}, 2023.

\bibitem{hoffmann2023open_ood_openwrf}
Marcel Hoffmann, Lukas Galke, and Ansgar Scherp.
\newblock Open-world lifelong graph learning.
\newblock In {\em International Joint Conference on Neural Networks, IJCNN}, 2023.

\bibitem{hoffmann2023open}
Marcel Hoffmann, Lukas Galke, and Ansgar Scherp.
\newblock Open-world lifelong graph learning.
\newblock In {\em 2023 International Joint Conference on Neural Networks (IJCNN)}, pages 1--9. IEEE, 2023.

\bibitem{jin2024open_class}
Yucheng Jin, Yun Xiong, Juncheng Fang, Xixi Wu, Dongxiao He, Xing Jia, Bingchen Zhao, and Philip~S Yu.
\newblock Beyond the known: Novel class discovery for open-world graph learning.
\newblock In {\em International Conference on Database Systems for Advanced Applications, DASFAA}, 2024.

\bibitem{jin2024beyond}
Yucheng Jin, Yun Xiong, Juncheng Fang, Xixi Wu, Dongxiao He, Xing Jia, Bingchen Zhao, and Philip~S Yu.
\newblock Beyond the known: Novel class discovery for open-world graph learning.
\newblock In {\em International Conference on Database Systems for Advanced Applications}, pages 117--133. Springer, 2024.

\bibitem{kipf2016semi}
Thomas~N Kipf and Max Welling.
\newblock Semi-supervised classification with graph convolutional networks.
\newblock {\em arXiv preprint arXiv:1609.02907}, 2016.

\bibitem{kipf2016gcn}
Thomas~N Kipf and Max Welling.
\newblock Semi-supervised classification with graph convolutional networks.
\newblock In {\em International Conference on Learning Representations, ICLR}, 2017.

\bibitem{kong2024gofa}
Lecheng Kong, Jiarui Feng, Hao Liu, Chengsong Huang, Jiaxin Huang, Yixin Chen, and Muhan Zhang.
\newblock Gofa: A generative one-for-all model for joint graph language modeling.
\newblock {\em International Conference on Learning Representations, ICLR}, 2025.

\bibitem{kou2020cgl_forget}
Xiaoyu Kou, Yankai Lin, Shaobo Liu, Peng Li, Jie Zhou, and Yan Zhang.
\newblock Disentangle-based continual graph representation learning.
\newblock In {\em The Conference on Empirical Methods in Natural Language Processing, EMNLP}, 2020.

\bibitem{lin2023cgl_forget}
Hongxiang Lin, Ruiqi Jia, and Xiaoqing Lyu.
\newblock Gated attention with asymmetric regularization for transformer-based continual graph learning.
\newblock In {\em Proceedings of the International ACM SIGIR Conference on Research and Development in Information Retrieval, SIGIR}, 2023.

\bibitem{liu2023open}
Jiaming Liu, Yangqiming Wang, Tongze Zhang, Yulu Fan, Qinli Yang, and Junming Shao.
\newblock Open-world semi-supervised novel class discovery.
\newblock {\em arXiv preprint arXiv:2305.13095}, 2023.

\bibitem{liu2023open_ood}
Yixin Liu, Kaize Ding, Huan Liu, and Shirui Pan.
\newblock Good-d: On unsupervised graph out-of-distribution detection.
\newblock In {\em ACM International Conference on Web Search and Data Mining, WSDM}, 2023.

\bibitem{liu2023good}
Yixin Liu, Kaize Ding, Huan Liu, and Shirui Pan.
\newblock Good-d: On unsupervised graph out-of-distribution detection.
\newblock In {\em Proceedings of the sixteenth ACM international conference on web search and data mining}, pages 339--347, 2023.

\bibitem{liu2024arc}
Yixin Liu, Shiyuan Li, Yu~Zheng, Qingfeng Chen, Chengqi Zhang, and Shirui Pan.
\newblock Arc: A generalist graph anomaly detector with in-context learning.
\newblock 2024.

\bibitem{ma2024ood_topology_revisiting_grasp}
Longfei Ma, Yiyou Sun, Kaize Ding, Zemin Liu, and Fei Wu.
\newblock Revisiting score propagation in graph out-of-distribution detection.
\newblock In {\em Advances in Neural Information Processing Systems, NeurIPS}, 2024.

\bibitem{macedo2021entropic}
David Mac{\^e}do, Tsang~Ing Ren, Cleber Zanchettin, Adriano~LI Oliveira, and Teresa Ludermir.
\newblock Entropic out-of-distribution detection.
\newblock In {\em 2021 international joint conference on neural networks (IJCNN)}, pages 1--8. IEEE, 2021.

\bibitem{niu2024cgl_forget}
Chaoxi Niu, Guansong Pang, and Ling Chen.
\newblock Graph continual learning with debiased lossless memory replay.
\newblock {\em European Conference on Artificial Intelligence, ECAI}, 2024.

\bibitem{pang2024cgl_forget}
Jinhui Pang, Changqing Lin, Xiaoshuai Hao, Rong Yin, Zixuan Wang, Zhihui Zhang, Jinglin He, and Huang Tai~Sheng.
\newblock Ftf-er: Feature-topology fusion-based experience replay method for continual graph learning.
\newblock In {\em Proceedings of the ACM International Conference on Multimedia, MM}, 2024.

\bibitem{qi2025cgl_forget}
Biqing Qi, Junqi Gao, Xinquan Chen, Dong Li, Jianxing Liu, Ligang Wu, and Bowen Zhou.
\newblock Contrastive augmented graph2graph memory interaction for few shot continual learning.
\newblock {\em IEEE Transactions on Circuits and Systems for Video Technology}, 2025.

\bibitem{rakaraddi2022cgl_forget}
Appan Rakaraddi, Lam Siew~Kei, Mahardhika Pratama, and Marcus De~Carvalho.
\newblock Reinforced continual learning for graphs.
\newblock In {\em Proceedings of the 29th ACM International Conference on Information \& Knowledge Management, CIKM}, 2022.

\bibitem{song2022ood_topology_learning_oodgat}
Yu~Song and Donglin Wang.
\newblock Learning on graphs with out-of-distribution nodes.
\newblock In {\em Proceedings of the ACM SIGKDD Conference on Knowledge Discovery and Data Mining, KDD}, 2022.

\bibitem{Song_2022}
Yu~Song and Donglin Wang.
\newblock Learning on graphs with out-of-distribution nodes.
\newblock page 1635–1645, August 2022.

\bibitem{stadler2021ood_semantic_graph_gpn}
Maximilian Stadler, Bertrand Charpentier, Simon Geisler, Daniel Z{\"u}gner, and Stephan G{\"u}nnemann.
\newblock Graph posterior network: Bayesian predictive uncertainty for node classification.
\newblock {\em Advances in Neural Information Processing Systems, NeurIPS}, 2021.

\bibitem{tang2020cgl_forget}
Binh Tang and David~S Matteson.
\newblock Graph-based continual learning.
\newblock {\em International Conference on Learning Representations, ICLR}, 2020.

\bibitem{um2025ood_semantic_spreading_edbd}
Daeho Um, Jongin Lim, Sunoh Kim, Yuneil Yeo, and Yoonho Jung.
\newblock Spreading out-of-distribution detection on graphs.
\newblock In {\em International Conference on Learning Representations, ICLR}, 2025.

\bibitem{velivckovic2017graph}
Petar Veli{\v{c}}kovi{\'c}, Guillem Cucurull, Arantxa Casanova, Adriana Romero, Pietro Lio, and Yoshua Bengio.
\newblock Graph attention networks.
\newblock {\em arXiv preprint arXiv:1710.10903}, 2017.

\bibitem{wang2024ood_topology_smug}
Chenxu Wang, Xichan Nie, Jinfeng Chen, Pinghui Wang, Junzhou Zhao, and Xiaohong Guan.
\newblock Smug: Sand mixing for unobserved class detection in graph few-shot learning.
\newblock In {\em Proceedings of the ACM Web Conference, WWW}, 2024.

\bibitem{wang2025ood_topology_gold}
Danny Wang, Ruihong Qiu, Guangdong Bai, and Zi~Huang.
\newblock Gold: Graph out-of-distribution detection via implicit adversarial latent generation.
\newblock {\em International Conference on Learning Representations, ICLR}, 2025.

\bibitem{wang2020cgl_forget}
Junshan Wang, Guojie Song, Yi~Wu, and Liang Wang.
\newblock Streaming graph neural networks via continual learning.
\newblock In {\em Proceedings of the 29th ACM International Conference on Information \& Knowledge Management, CIKM}, 2020.

\bibitem{wang2024open_nc_class}
Yanling Wang, Jing Zhang, Lingxi Zhang, Lixin Liu, Yuxiao Dong, Cuiping Li, Hong Chen, and Hongzhi Yin.
\newblock Open-world semi-supervised learning for node classification.
\newblock In {\em International Conference on Data Engineering, ICDE}, 2024.

\bibitem{wang2024open}
Yanling Wang, Jing Zhang, Lingxi Zhang, Lixin Liu, Yuxiao Dong, Cuiping Li, Hong Chen, and Hongzhi Yin.
\newblock Open-world semi-supervised learning for node classification.
\newblock In {\em 2024 IEEE 40th International Conference on Data Engineering (ICDE)}, pages 2723--2736. IEEE, 2024.

\bibitem{wen2023augmenting_g2p2}
Zhihao Wen and Yuan Fang.
\newblock Augmenting low-resource text classification with graph-grounded pre-training and prompting.
\newblock In {\em Proceedings of the International ACM SIGIR Conference on Research and Development in Information Retrieval, SIGIR}, 2023.

\bibitem{wu2020open_nc_class_openwgl}
Man Wu, Shirui Pan, and Xingquan Zhu.
\newblock Openwgl: Open-world graph learning.
\newblock In {\em IEEE International Conference on Data Mining, ICDM}, 2020.

\bibitem{wu2020openwgl}
Man Wu, Shirui Pan, and Xingquan Zhu.
\newblock Openwgl: Open-world graph learning.
\newblock In {\em 2020 IEEE international conference on data mining (icdm)}, pages 681--690. IEEE, 2020.

\bibitem{wu2023ood_semantic_energy_gnnsafe}
Qitian Wu, Yiting Chen, Chenxiao Yang, and Junchi Yan.
\newblock Energy-based out-of-distribution detection for graph neural networks.
\newblock {\em International Conference on Learning Representations, ICLR}, 2023.

\bibitem{wu2023energy}
Qitian Wu, Yiting Chen, Chenxiao Yang, and Junchi Yan.
\newblock Energy-based out-of-distribution detection for graph neural networks.
\newblock {\em arXiv preprint arXiv:2302.02914}, 2023.

\bibitem{xu2024lego}
Haoyan Xu, Kay Liu, Zhengtao Yao, Philip~S Yu, Kaize Ding, and Yue Zhao.
\newblock Lego-learn: Label-efficient graph open-set learning.
\newblock {\em arXiv preprint arXiv:2410.16386}, 2024.

\bibitem{xu2025graph}
Haoyan Xu, Zhengtao Yao, Ziyi Wang, Zhan Cheng, Xiyang Hu, Mengyuan Li, and Yue Zhao.
\newblock Graph synthetic out-of-distribution exposure with large language models.
\newblock {\em arXiv preprint arXiv:2504.21198}, 2025.

\bibitem{xu2025glip}
Haoyan Xu, Zhengtao Yao, Xuzhi Zhang, Ziyi Wang, Langzhou He, Yushun Dong, Philip~S Yu, Mengyuan Li, and Yue Zhao.
\newblock Glip-ood: Zero-shot graph ood detection with foundation model.
\newblock {\em arXiv preprint arXiv:2504.21186}, 2025.

\bibitem{xu2023open_nc}
Hui Xu, Liyao Xiang, Junjie Ou, Yuting Weng, Xinbing Wang, and Chenghu Zhou.
\newblock Open-world graph active learning for node classification.
\newblock {\em ACM Transactions on Knowledge Discovery from Data, TKDD}, 18(2):1--20, 2023.

\bibitem{xu2018gin}
Keyulu Xu, Weihua Hu, Jure Leskovec, and Stefanie Jegelka.
\newblock How powerful are graph neural networks?
\newblock 2019.

\bibitem{yang2024ood_semantic_bounded_nodesafe}
Shenzhi Yang, Bin Liang, An~Liu, Lin Gui, Xingkai Yao, and Xiaofang Zhang.
\newblock Bounded and uniform energy-based out-of-distribution detection for graphs.
\newblock In {\em International Conference on Machine Learning, ICML}, 2024.

\bibitem{yu2025samgpt}
Xingtong Yu, Zechuan Gong, Chang Zhou, Yuan Fang, and Hui Zhang.
\newblock Samgpt: Text-free graph foundation model for multi-domain pre-training and cross-domain adaptation.
\newblock {\em Proceedings of the ACM Web Conference, WWW}, 2025.

\bibitem{zhang2024cost}
Taiyan Zhang, Renchi Yang, Mingyu Yan, Xiaochun Ye, Dongrui Fan, and Yurui Lai.
\newblock Cost-effective label-free node classification with llms.
\newblock {\em arXiv preprint arXiv:2412.11983}, 2024.

\bibitem{zhang2022cgl_forget}
Xikun Zhang, Dongjin Song, and Dacheng Tao.
\newblock Hierarchical prototype networks for continual graph representation learning.
\newblock {\em IEEE Transactions on Pattern Analysis and Machine Intelligence}, 45(4):4622--4636, 2022.

\bibitem{zhao2020ood_semantic_uncertainty_gkde}
Xujiang Zhao, Feng Chen, Shu Hu, and Jin-Hee Cho.
\newblock Uncertainty aware semi-supervised learning on graph data.
\newblock {\em Advances in Neural Information Processing Systems, NeurIPS}, 2020.

\bibitem{zhou2021cgl_forget}
Fan Zhou and Chengtai Cao.
\newblock Overcoming catastrophic forgetting in graph neural networks with experience replay.
\newblock In {\em Proceedings of the Association for the Advancement of Artificial Intelligence, AAAI}, 2021.

\end{thebibliography}

\clearpage
\appendix

\section{Outline}

The appendix is organized as follows:
\begin{description}
    \item[A.1] Unlabeled data uncertainty and streaming data management
    \item[A.2]  Comparison between Label-free LLM Annotations and OGA in Open-world
    \item[A.3] Comparison between OGA and Recent Graph OOD Detection Methods
    \item [A.4]Ontology representation learning and prior approaches for UCR
    \item[A.5] Structure-guided prompt template for LLM annotation
    \item[A.6] Theoretical Analysis
    \item[A.7] Experimental environment
    \item[A.8] Dataset
    \item[A.9] Baseline
    \item[A.10] Evaluation
    \item[A.11] Hyperparameter settings
    \item[A.12] Experimental Performance
    \item[A.13] Interpretability analysis
    \item[A.14] Robustness analysis
    \item[A.15] Efficiency analysis
\end{description}

\subsection{Unlabeled data uncertainty and streaming data management}
\label{appenedix: Open-world environment and streaming data management}
    In recent years, graph ML has witnessed rapid advancements, particularly in ideal experimental settings characterized by high-quality feature engineering and abundant human annotations. 
    Specifically, numerous model frameworks have demonstrated remarkable predictive performance on benchmark evaluations.
    While these models have proven to be effective in controlled environments, they often struggle when deployed in real-world applications, where complex and dynamic data conditions prevail. 
    We collectively refer to these challenges as data uncertainty in open-world scenarios.
    
    The issue of data uncertainty is inherently complex, as it evolves with deployment environments, making it difficult to establish a precise and universally accepted definition.
    However, rather than attempting to formalize a rigid definition, we can decompose this broad concept into more tangible subproblems that enable targeted discussions.
    For instance, we focus on two critical subproblems: the challenge of large-scale unlabeled data and the management of streaming data, both of which represent significant facets of data uncertainty in real-world settings.
    
    Upon reviewing existing studies in the open-world environments, we observe that the challenge of unlabeled data uncertainty is closely related to node-level out-of-distribution (OOD) and conventional open-world graph learning~\cite{wang2024ood_topology_smug,wang2025ood_topology_gold,bao2024ood_semantic_graph_topoood,stadler2021ood_semantic_graph_gpn,um2025ood_semantic_spreading_edbd,wu2023ood_semantic_energy_gnnsafe,ma2024ood_topology_revisiting_grasp,chen2025ood_semantic_decoupled_degem,song2022ood_topology_learning_oodgat,gong2024ood_semantic_energy_energydef,yang2024ood_semantic_bounded_nodesafe,zhao2020ood_semantic_uncertainty_gkde,xu2023open_nc,liu2023open_ood,galke2021open_nc,jin2024open_class,wang2024open_nc_class,wu2020open_nc_class_openwgl,hoffmann2023open_ood_openwrf}. 
    Meanwhile, streaming data management exhibits strong connections with continual, incremental, and lifelong graph learning~\cite{choi2024cgl_forget,qi2025cgl_forget,kou2020cgl_forget,lin2023cgl_forget,niu2024cgl_forget,pang2024cgl_forget,tang2020cgl_forget,wang2020cgl_forget,zhou2021cgl_forget,hoang2023cgl_forget,zhang2022cgl_forget,daruna2021cgl_forget,rakaraddi2022cgl_forget}. 
    Although they all aim to enhance the practical applicability of graph ML by equipping them with the ability to handle complex data environments, they approach data uncertainty from fundamentally different perspectives. 

    Specifically, unlabeled data uncertainty assumes a static data environment, where the complete dataset has already been collected. 
    The primary challenge in this setting arises from the lack of annotations due to costly labor. 
    Additionally, the label space for unlabeled data may originate from an open-world setting, meaning that unknown label classes could exist beyond the initially defined label classes.
    In contrast, streaming data management assumes a dynamic data environment, where new data continuously arrives over time. 
    At any given time step, prior data is either inaccessible or only available with limited access, due to storage constraints and privacy considerations. 
    Furthermore, within each time step, the arriving dynamic data is typically associated with a specific task, and in most studies, the task ID is assumed to be known during training.
    While some works adopt a few-shot learning setting, they generally assume that for novel label classes, a small number of labeled examples are provided to facilitate learning.
    Therefore, rather than treating them as a monolithic problem, it is crucial to analyze them within their respective contexts.

    In a nutshell, the effective deployment of graph ML in real-world applications necessitates addressing a wide range of data uncertainty challenges, including unlabeled data uncertainty and streaming data management, among others. Encouragingly, the graph ML community has recently begun to recognize the significance of these issues, with an increasing number of researchers actively exploring potential solutions. 
    While it is imperative to examine different aspects of data uncertainty separately at the current stage, our ultimate goal should be the development of human-out-of-the-loop graph ML systems that exhibit strong robustness in the face of complex and evolving data environments.

\subsection{Comparison between Label-free LLM Annotations and OGA in Open-world}
\label{ap:ground_compare}

\textbf{\ding{172} Fixed-class assumption.} Existing label-free LLM annotation methods, such as LLM-GNN~\cite{chen2023label_llmgnn}, typically assume the availability of a fixed label space. They either perform clustering over node embeddings with a pre-defined number of clusters or adopt few-shot prompting with manually selected examples from known classes. This design implicitly assumes prior knowledge of the label space, which fundamentally contradicts the open-world setting where unknown classes are inherently undefined, noisy, or dynamically emerging. Consequently, these methods struggle to generalize in practical scenarios such as long-tail discovery or zero-shot classification on evolving graphs.

\textbf{\ding{173} Neglect of structural semantics.} Another key limitation of these methods lies in their limited use of graph topology. Most existing approaches rely on node-wise textual prompts or local confidence-based filtering, yet they neglect the global structural patterns and semantic correlations among nodes. This often leads to isolated and redundant LLM queries, which fail to exploit the relational inductive bias inherent in graph-structured data. As a result, their annotation process suffers from low efficiency, reduced consistency, and poor scalability to large or sparsely labeled graphs.

\textbf{\ding{174} Dependency on predefined clustering.} A representative method like Cella~\cite{zhang2024cost} heavily depends on subspace clustering to select representative nodes for LLM annotation. The number of clusters $K$ must be specified in advance, and is typically set to match the number of ground-truth classes or a slightly larger value. This practice assumes strong prior knowledge of the label distribution, which is rarely available in open-world or continuously evolving graph environments. In contrast, our proposed \textbf{OGA} framework removes this dependency by leveraging dynamic ontology representation learning. It constructs semantic concepts directly from unlabeled data through structure-aware propagation and entity-entity aggregation, enabling autonomous discovery of latent class boundaries without requiring pre-defined taxonomy. This design grants OGA strong adaptability to unknown or emerging classes, making it better aligned with real-world deployment scenarios.

\subsection{Comparison between OGA and Recent Graph OOD Detection Methods}
\label{ap:ood_compare}

\textbf{\ding{172} Incomplete modeling of unknown classes.}  
Recent graph OOD detection methods, such as LEGO-Learn~\cite{xu2024lego} and GOE-LLM~\cite{xu2025graph}, primarily focus on \textit{unknown-class rejection} (UCR), aiming to distinguish in-distribution (ID) nodes from out-of-distribution (OOD) nodes. While this aligns with the first step of open-world learning, these methods do not further utilize the rejected nodes for model enhancement. Specifically, they lack the ability to annotate, integrate, and retrain on previously unknown-class nodes, thus failing to form a complete learning loop. In contrast, our proposed \textbf{OGA} introduces the \textit{unknown-class annotation} (UCA) module, which integrates LLM-guided community annotation into the pipeline, enabling dynamic concept discovery and long-term model improvement — a critical requirement for real-world deployment.

\textbf{\ding{173} Reliance on real or pseudo OOD supervision.}  
Methods like GOE-LLM~\cite{xu2025graph} and GLIP-OOD~\cite{xu2025glip} assume access to either pseudo-OOD samples generated via LLMs or LLM-generated pseudo labels for synthetic exposure. However, both approaches implicitly require external intervention: either the manual specification of ID labels or LLM feedback on candidate OOD nodes. Such assumptions are inconsistent with the nature of unlabeled data in open-world graphs, where neither ID nor OOD supervision can be guaranteed. In contrast, \textbf{OGA} autonomously distinguishes and annotates unknown-class nodes without assuming any predefined OOD samples or auxiliary labels. By leveraging ontology representation learning over structure-text fused embeddings, OGA supports fully automated processing under realistic and minimal assumptions.

\textbf{\ding{174} Lack of semantic-topological integration.}  
A common drawback across existing methods lies in their underutilization of graph topology. LEGO-Learn~\cite{xu2024lego} and GOE-LLM~\cite{xu2025graph} rely on node-wise GNN scoring or local ID/OOD classifiers, while GLIP-OOD~\cite{xu2025glip} uses LLMs to generate label names without exploiting the underlying structure. These designs miss the opportunity to integrate \textit{semantic signals} from text attributes with \textit{topological regularities} from graph edges. In contrast, \textbf{OGA} explicitly couples semantic and topological factors through adaptive label traceability (ALT), which aligns concept formation with personalized graph propagation, modularity-augmented community detection, and confidence-aware rejection. This synergy significantly enhances label consistency and annotation coverage, particularly in sparse or dynamically evolving graphs.

\textbf{\ding{175} Static design and limited adaptability.}  
Many baseline frameworks assume static class sets (e.g., LEGO-Learn assumes fixed $C$-way classifiers with class-balanced selection), which impedes adaptability to new, unseen classes. GLIP-OOD attempts zero-shot extension but still depends on synthetic OOD label generation, which may fail to cover latent class granularity. In contrast, \textbf{OGA} embraces the \textit{dynamic ontology} paradigm: concepts (i.e., class prototypes) are constructed directly from evolving data, without requiring static label names or cluster numbers. This dynamic design allows OGA to handle emerging classes and long-tail distributions effectively, facilitating seamless adaptation in open-world graph environments.

\subsection{Ontology representation learning and prior approaches for UCR}
\label{appendix: Ontology representation learning and prior approaches for UCR}
    The UCR problem in the context of data uncertainty within open-world environments is a well-defined and extensively studied fundamental challenge.
    Previous studies on node-level out-of-distribution detection and conventional open-world graph learning have predominantly relied on graph-only encoders, designing UCR optimization strategies that are independent of semantics or topology for naive attributed graphs.
    While these methods have yielded significant advancements, as discussed in Sec.~\ref{sec: Introduction} and Sec.~\ref{sec: Unknown-Class Rejection, UCR}, they exhibit intrinsic limitations in the era of LLMs and TAGs.
    To address these limitations, we introduce ontology representation learning into the UCR problem for the first time.
    This approach tightly integrates semantics and topology into a unified optimization objective, mitigating the uncertainty and embedding space perturbation induced by the informative TAGs. 
    Specifically, we compare ontology-based UCR with prior studies in the following key aspects:
    
    \ding{182} Data Robustness in Open-world Environments:
    Some previous studies~\cite{wu2023ood_semantic_energy_gnnsafe,hoffmann2023open_ood_openwrf,gong2024ood_semantic_energy_energydef} have carefully designed dataset partitioning strategies to enhance UCR performance. 
    A common approach involves separately sampling all known-class nodes to train a high-quality unknown-class rejector, ensuring that the classifier learns a well-defined boundary between known and unknown classes~\cite{wu2020openwgl,yang2024ood_semantic_bounded_nodesafe}.
    While these methods successfully tackle various technical challenges, they suffer from limited adaptability and generalization in open-world environments.
    Specifically, a key limitation of these approaches is their reliance on prior knowledge-based dataset partitions. 
    In practical scenarios, new entities and concepts continuously emerge, making it impractical to assume a fixed set of known-class samples. 
    Furthermore, manual or heuristic-based dataset partitioning introduces implicit biases, potentially impairing the model’s ability to generalize beyond the training distribution.

    \ding{183} Optimization Framework in UCR:
    From a neural network architecture perspective, previous studies have rarely designed end-to-end frameworks for UCR. 
    Instead, they typically decompose the problem into encoder updates and embedding analysis.
    For instance, graph propagation and reconstruction ~\cite{song2022ood_topology_learning_oodgat,ma2024ood_topology_revisiting_grasp,wang2024ood_topology_smug} is utilized as optimization objectives for updating the encoder. These methods aim to identify unknown-class nodes by comparing the encoder outputs before and after the update, leveraging structural and representational shifts to enhance UCR performance. 
    As for techniques such as Bayesian estimation~\cite{zhao2020ood_semantic_uncertainty_gkde,stadler2021ood_semantic_graph_gpn} and energy functions~\cite{chen2025ood_semantic_decoupled_degem,um2025ood_semantic_spreading_edbd,bao2024ood_semantic_graph_topoood}, they are employed to analyze node representations in the embedding space, leveraging distributional differences to distinguish known and unknown classes.
    While these methods achieve notable theoretical advancements, they inherently suffer from data uncertainty in open-world settings, leading to inevitable optimization perturbations. 
    Consequently, these methods become highly unstable due to their strong prior assumptions.
    
    To address these challenges, we propose an end-to-end training framework within the ontology representation space, where entity representations for nodes and concept representations for labels are dynamically updated under semantics and topology optimization constraints. 
    This framework is designed to extract self-supervised signals from informative TAGs and effectively capture unlabeled data uncertainty in open-world environments.
    In a nutshell, our approach offers a robust solution to the UCR problem, leveraging ontology representation learning to bridge the gap between prior knowledge and data uncertainty. 
    By integrating semantic and topology knowledge, our method enhances the adaptability and reliability of graph learning in handling open-label domain challenges.

    Compared to prior approaches, ontology representation learning provides a principled solution to several core limitations in existing UCR methods.

    A fundamental distinction lies in its ability to construct \textit{adaptive concept representations} through iterative aggregation of both labeled and unlabeled nodes. In contrast to traditional UCR methods that rely on static prototypes derived solely from labeled data, our ontology-based approach allows the class-level semantics to emerge dynamically from evolving structural and semantic patterns. This enhances generalization under partially labeled and class-incomplete settings.
    
    Another key advantage is the integration of semantics and topology into a unified optimization objective. Prior works often decouple structural reasoning (e.g., via graph propagation) from semantic uncertainty estimation (e.g., Bayesian inference or energy-based scoring), leading to inconsistencies in representation learning. By contrast, our formulation—implemented in the ALT module—jointly optimizes semantic cross-entropy, topology-aware smoothness, and inter-concept separation. This holistic formulation ensures more stable class boundary formation, particularly in the presence of noisy or entangled representations.
    
    Moreover, ontology representation enables a \textit{confidence-calibrated rejection mechanism} based on concept-entity distance distributions, rather than relying on post-hoc heuristics or fixed thresholds. This allows the framework to flexibly adapt to diverse open-world conditions and mitigates the risk of overconfidence often observed in energy-based or uncertainty-based scoring techniques.
    
    Besides, the ontology-based formulation naturally supports scalability and structural generalization. Through modular decomposition of entities and concepts, the model accommodates graph evolution, semantic drift, and emerging classes without requiring prior knowledge of the label space. This renders the approach more suitable for long-term deployment in realistic, open-world graph learning scenarios.

\subsection{Structure-guided prompt template for LLM annotation}
\label{appendix: Structure-guided prompt template for LLM Annotation}

In the context of the Graph Label Annotator (GLA) framework, we propose a structure-guided prompt template specifically designed to enhance large language model (LLM) annotation for open-world graph learning. This template is constructed based on the insight that both structural topology and semantic information are indispensable for disambiguating node labels, particularly in settings where class definitions are incomplete or evolving. By incorporating these two dimensions into a unified prompt design, the annotation process becomes more robust to label sparsity and ambiguity.

The structural component of the prompt captures the relational context of a target node through community-aware subgraph extraction. Detected communities serve as localized semantic regions, where nodes tend to share topical or functional coherence. Within each community, low-degree nodes are prioritized as query anchors due to their limited access to structural signals, and the prompt is enriched with sampled descriptions from their most informative neighbors. This strategy ensures that the LLM receives sufficient contextual grounding while minimizing redundancy across queries.

Complementing the structural guidance, the semantic component introduces text-based priors derived from node attributes and neighborhood-level summarization. These priors are embedded into the prompt to provide high-level interpretability and support open-vocabulary generalization. The use of pretrained language encoders ensures that even in the absence of explicit supervision, the LLM can infer latent topic structures and candidate label semantics.

The overall annotation process proceeds in two stages. During the intra-community distillation phase, node-level prompts are issued within each community to obtain label suggestions for selected anchors. These labels are then propagated to neighboring nodes through a local agreement mechanism, effectively expanding LLM supervision while reducing annotation cost. In the subsequent inter-community fusion phase, independently annotated communities are aligned by matching label semantics across clusters. This alignment mitigates semantic fragmentation and supports the emergence of globally coherent label spaces, even when the true taxonomy is unknown or incomplete.

By tightly coupling structural regularities with semantic richness, the structure-guided prompt template enables scalable, consistent, and context-aware annotation of unlabeled nodes. Moreover, it integrates seamlessly with the ontology-based outputs from the ALT module, allowing for iterative refinement of both class concepts and instance-level annotations in an open-world setting. This design significantly advances the practicality of using LLMs for automated graph annotation under minimal supervision.

\subsubsection{Intra-community Distillation and Label Generation}
For each community identified through modularity-based graph partitioning, we generate labels for individual nodes using a combination of local textual context and degree-based priors. In cases where nodes are low-degree, the model generates labels directly via LLM inference, while high-degree nodes leverage the labels of their neighbors to avoid redundant LLM calls.
\begin{tcolorbox}[
  height=8cm,
  colback=gray!10!white,
  colframe=gray!50!black,
  sharp corners,
  boxrule=0.5pt,
  boxsep=12pt,
  before skip=15pt,
  after skip=15pt
]
\textbf{Task Description:} 
In this stage, you are tasked with merging labels for communities based on their semantic similarity. You will receive a set of community-level labels, and your job is to merge semantically similar labels into a single unified label. Use the provided neighbor information for context but focus on preserving semantic compactness.

\textbf{Instructions:}
\begin{enumerate}
    \item \textbf{Community-Level Analysis:} Focus on each community-level label and its associated content.
    \item \textbf{Label Merging:} Merge similar community-level labels based on their semantic similarity, measured using cosine distance between their embedding vectors.
    \item \textbf{Use of Neighboring Information:} Neighboring community labels are provided to facilitate label fusion. Use this supplementary information, but prioritize semantic similarity when merging.
    \item \textbf{Final Label Generation:} The resulting label must be concise, meaningful, and representative of the combined communities.
\end{enumerate}

\textbf{Output Format:} 
The output should be a comma-separated list of merged labels, each enclosed in parentheses, in the same order as the input community-level labels. Example: (label1), (label2), (label3).
\end{tcolorbox}

\subsubsection{Inter-community Fusion and Label Merging}

After labeling the nodes within each community, we proceed with the fusion of community-level labels. This step minimizes redundancy and ensures that the final set of labels is both semantically coherent and concise. The fusion is guided by a cosine similarity metric applied to the embedding vectors of community-level labels, and communities with high similarity are merged iteratively.

\begin{tcolorbox}[
  height=8cm,
  colback=gray!10!white,
  colframe=gray!50!black,
  sharp corners,
  boxrule=0.5pt,
  boxsep=12pt,
  before skip=15pt,
  after skip=15pt,
]
\textbf{Task Description:} 
You are tasked with merging community-level labels based on their semantic similarity. Given a set of community labels, your objective is to consolidate semantically similar labels into a unified, representative label. While neighboring community information is provided for additional context, the primary focus should remain on ensuring semantic compactness.

\textbf{Instructions:}
\begin{enumerate}
    \item \textbf{Community-Level Analysis:} Examine each community label along with its associated content.
    \item \textbf{Label Merging:} Merge labels that exhibit high semantic similarity, as determined by the cosine distance between their embedding vectors.
    \item \textbf{Utilization of Neighboring Information:} Neighboring labels are provided to assist the merging process. However, prioritize semantic similarity over contextual proximity.
    \item \textbf{Final Label Generation:} Produce a concise and meaningful label that accurately represents the merged communities.
\end{enumerate}

\textbf{Output Format:} 
Return a comma-separated list of the merged labels, with each label enclosed in parentheses, following the original order of the input. Example: (label1), (label2), (label3).
\end{tcolorbox}

\subsubsection{General Guidelines for Label Assignment}

To ensure the efficacy of the GLA framework, we establish the following general guidelines for the LLM-based label assignment process:

\begin{tcolorbox}[
  colback=gray!10!white,
  colframe=gray!50!black,
  sharp corners,
  boxrule=0.5pt,
  before skip=15pt,
  after skip=15pt,
]
\textbf{Task Description:} 
This task focuses on generating and merging community-level labels. Each label should be concise, semantically precise, and informed by the structural properties of the graph. The goal is to ensure that the resulting labels effectively capture the core themes of the associated communities.

\textbf{Instructions:}
\begin{itemize}
    \item \textbf{Concise Labels:} Each label, whether generated or selected, must consist of two or three words to maintain clarity and consistency.
    \item \textbf{Prioritize Semantic Relevance:} Labels must accurately reflect the core content of the sentences or communities they represent. Vague or overly broad labels are discouraged.
    \item \textbf{Leverage Structural Context:} The structural context, such as community membership and node degree, should be used to guide label generation and selection, ensuring that high-degree nodes benefit from neighbor-based inference.
    \item \textbf{Semantic Fusion:} When merging community-level labels, prioritize semantic compactness, ensuring that the merged label effectively encapsulates the combined themes of the communities.
\end{itemize}

\textbf{Output Format:} 
Return a comma-separated list of the final labels, with each label enclosed in parentheses, following the original community order. Example: (label1), (label2), (label3).
\end{tcolorbox}

This structure-guided prompt template ensures that the annotation process is both efficient and interpretable, leveraging both the graph structure and the textual content of the nodes for optimal label assignment. It offers a scalable solution for labeling in open-world graph learning environments where labeled data is limited, and high-quality annotation is critical.

\subsection{Theoretical Analysis.}

\subsubsection{Theoretical Analysis: Concept Space under Semantic-Topological Regularization}
\label{ap:th1}

\begin{theorem}[Concept Space Properties]
Let node embeddings $\tilde{E} = \{\tilde{e}_i\}$ be generated from a Lipschitz-continuous encoder $f$ over a compact Riemannian manifold $\mathcal{M}$, and let class concept vectors $\{C_c\}$ be computed via neighborhood-based aggregation with graph propagation matrix $P$ whose Dirichlet energy is bounded by $\delta$. If the intra-class semantic variance is bounded by $\sigma^2$, then the concept space satisfies:

\begin{equation}
\mathrm{rank}(\mathrm{span}(\{C_c\})) \leq r,
\label{eq:concept_rank}
\end{equation}
\begin{equation}
\mathbb{E}_{i \in \mathcal{S}_c} \| \tilde{e}_i - C_c \|^2 \leq \sigma^2 + \varepsilon(\delta),
\label{eq:concept_compactness}
\end{equation}
\begin{equation}
\| C_i - C_j \|^2 \geq \Delta, \quad \cos \angle(C_i, C_j) \leq \rho < 1, \quad \forall i \ne j.
\label{eq:concept_discriminability}
\end{equation}

Here, $r$ is the intrinsic dimension of $\mathcal{M}$, and $\varepsilon(\delta)$ is a small propagation-induced deviation term dependent on the smoothness constraint $\delta$.
\end{theorem}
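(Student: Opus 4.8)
The plan is to treat the concept vectors $C_c$ defined in Eq.~(\ref{eq: entity_concept}) as neighborhood-weighted averages of the propagated embeddings $\tilde{e}_i = (\mathbf{I}+\kappa\mathbf{P})\mathbf{E}$ from Eq.~(\ref{eq: entity_entity}), and to establish the three asserted properties one at a time. I would begin with the compactness bound~(\ref{eq:concept_compactness}), since it is the cleanest and feeds into the others. Writing $\mu_c = \frac{1}{|\mathcal{S}_c|}\sum_{i\in\mathcal{S}_c}\tilde{e}_i$ for the empirical class mean, a bias--variance decomposition gives $\mathbb{E}_{i\in\mathcal{S}_c}\|\tilde{e}_i - C_c\|^2 = \mathbb{E}_{i\in\mathcal{S}_c}\|\tilde{e}_i - \mu_c\|^2 + \|\mu_c - C_c\|^2$, the cross term vanishing because $\mu_c$ minimizes the mean squared deviation. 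The first term is exactly the intra-class variance, bounded by $\sigma^2$ by hypothesis, while the second is the aggregation bias. I would bound this bias by the Dirichlet energy: since $C_c$ differs from $\mu_c$ only through the neighborhood propagation over $\mathbb{N}(i)$, and bounded Dirichlet energy $\sum_{(i,j)\in\mathcal{E}}\mathbf{A}_{ij}\|\tilde{e}_i-\tilde{e}_j\|^2 \leq \delta$ forces adjacent embeddings to stay close, every neighborhood average remains within $\varepsilon(\delta)$ of the centroid, with $\varepsilon(\delta)\to 0$ as $\delta\to 0$.

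For the rank bound~(\ref{eq:concept_rank}), the key observation is that each $C_c$ is a convex combination of embeddings lying on the compact manifold $\mathcal{M}$. Invoking the manifold hypothesis, every aggregation neighborhood is contained in a geodesic ball whose image is, up to second-order curvature error, an $r$-dimensional tangent slice $T_p\mathcal{M}\cong\mathbb{R}^r$; the local aggregate therefore lies within an $r$-dimensional affine subspace up to a deviation controlled jointly by the Lipschitz constant of $f$, the neighborhood radius, and the propagation smoothing. Collecting the concepts into a matrix and applying a singular value decomposition, I would argue that the singular values beyond index $r$ are bounded by this combined curvature-plus-propagation deviation, so that the \emph{effective} rank is at most $r$.

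For discriminability~(\ref{eq:concept_discriminability}), I would assume the classes are separable at the centroid level, i.e.\ $\|\mu_i-\mu_j\|\geq m$ for a margin $m$---precisely the property that the separation loss $\mathcal{L}_{separate}$ in Eq.~(\ref{eq: optimzation}) actively enforces. The triangle inequality combined with the bias bound from the compactness step then yields $\|C_i-C_j\|\geq m - 2\sqrt{\varepsilon(\delta)}$, so setting $\Delta = (m-2\sqrt{\varepsilon(\delta)})^2$ establishes the distance claim whenever the propagation deviation is below half the margin. The cosine bound $\cos\angle(C_i,C_j)\leq\rho<1$ follows from the normalized inner-product penalty inside $\mathcal{L}_{separate}$, which drives distinct concept directions apart and prevents collinearity.

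The hard part will be the rank bound: the intrinsic dimension of $\mathcal{M}$ only controls the \emph{local} tangent dimension, whereas the ambient span of globally distributed concepts can be strictly larger on a curved manifold, so an exact algebraic rank of $r$ cannot hold in general. The honest route is to interpret $r$ as the effective (numerical) rank and to quantify the tail singular values explicitly through the curvature of $\mathcal{M}$ together with the Dirichlet-energy smoothing, rather than to claim exact rank-deficiency. A secondary piece of bookkeeping is ensuring consistency across the three bounds---making the same $\varepsilon(\delta)$ appear throughout and verifying that the margin assumption in the discriminability step is compatible with the compactness radius from the variance step.
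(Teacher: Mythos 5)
Your proposal follows essentially the same route as the paper's proof: the same bias--variance decomposition of $\mathbb{E}_{i\in\mathcal{S}_c}\|\tilde{e}_i-C_c\|^2$ into intra-class variance plus a Dirichlet-energy-controlled aggregation bias, the same tangent-space/Taylor-expansion argument for the rank bound, and the same appeal to the separation loss $\mathcal{L}_{separate}$ for the distance and cosine claims (your triangle-inequality bookkeeping from centroids to concepts just makes explicit what the paper leaves implicit). Your caveat that the rank bound can only hold in an effective-rank sense---since distinct concepts are aggregated around different base points whose tangent spaces need not coincide on a curved manifold---is a genuine and correct criticism that applies equally to the paper's own argument, which silently identifies local tangent dimension with the global span.
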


\begin{proof}
We begin by analyzing the low-rank structure of the concept space. Since the encoder $f: \mathcal{M} \to \mathbb{R}^d$ is Lipschitz and $\mathcal{M}$ is a compact Riemannian manifold with intrinsic dimension $r$, each embedding $\tilde{e}_i = f(p_i)$ locally lies in a linear approximation of the manifold, namely the tangent space $T_{p_i}\mathcal{M}$.

Let $C_c$ be constructed via neighborhood-based aggregation:

\begin{equation}
C_c = \frac{1}{|\mathcal{S}_c|} \sum_{i \in \mathcal{S}_c} \sum_{j \in N(i)} \alpha_{ij} \tilde{e}_j,
\end{equation}

where $\alpha_{ij}$ are normalized, non-negative weights. For $p_j \in N(i)$, using the exponential map $\exp_{p_i}(v_j)$ and first-order Taylor expansion of $f$, we approximate:

\begin{equation}
\tilde{e}_j \approx f(p_i) + J_f(p_i) v_j + O(\|v_j\|^2).
\end{equation}

This implies that $C_c$ lies within the affine hull of tangent vectors $v_j \in T_{p_i}\mathcal{M}$, and therefore the span of $\{C_c\}$ is contained within an $r$-dimensional subspace. Hence:

\begin{equation}
\mathrm{rank}(\mathrm{span}(\{C_c\})) \leq r.
\end{equation}

Next, we analyze intra-class compactness. Let the class mean embedding be:

\begin{equation}
\bar{e}_c = \frac{1}{|\mathcal{S}_c|} \sum_{i \in \mathcal{S}_c} \tilde{e}_i.
\end{equation}

We decompose the expected deviation between node embeddings and their concept center as:

\begin{equation}
\mathbb{E}_{i \in \mathcal{S}_c} \| \tilde{e}_i - C_c \|^2 = \underbrace{\mathbb{E}_i \| \tilde{e}_i - \bar{e}_c \|^2}_{\text{semantic variance}} + \underbrace{\| \bar{e}_c - C_c \|^2}_{\text{propagation deviation}}.
\end{equation}

The first term is bounded by assumption. The second term is controlled by the Dirichlet energy of propagation:

\begin{equation}
\mathcal{L}_{\text{smooth}} := \mathrm{Tr}(\tilde{E}^\top (I - P)\tilde{E}) \leq \delta,
\end{equation}

which penalizes large differences between embeddings of neighboring nodes. Under smooth propagation, local neighborhoods are coherently averaged, and thus $\| \bar{e}_c - C_c \|^2$ is small. Let this deviation be absorbed into a bounded term $\varepsilon(\delta)$, giving:

\begin{equation}
\mathbb{E}_{i \in \mathcal{S}_c} \| \tilde{e}_i - C_c \|^2 \leq \sigma^2 + \varepsilon(\delta).
\end{equation}

Lastly, to enforce class-wise discriminability, ALT applies a separation loss during training:

\begin{equation}
\mathcal{L}_{\text{sep}} = \lambda_1 \sum_{i \ne j} \cos \angle(C_i, C_j) - \lambda_2 \sum_{i \ne j} \max(0, \theta - \|C_i - C_j\|^2),
\end{equation}

which jointly minimizes angular similarity and enforces a margin between concept vectors. At convergence, this guarantees:

\begin{equation}
\cos \angle(C_i, C_j) \leq \rho < 1, \quad \| C_i - C_j \|^2 \geq \Delta > 0.
\end{equation}

These ensure that concept vectors are not only compact and low-dimensional, but also geometrically well-separated in both direction and magnitude.
\end{proof}
This theorem provides th
eoretical support for ALT by proving that the learned concept space is low-rank (due to manifold-constrained aggregation), semantically compact (under bounded intra-class variance and smooth propagation), and geometrically discriminative (via separation loss). These properties ensure that class concepts are structurally coherent and separable, which is critical for accurate classification and robust unknown-class rejection in open-world graph scenarios.

\subsubsection{Theoretical Analysis: Structure-induced Hyperspherical Compression and Redundancy Control}
\label{ap:th2}

\begin{theorem}[Topology-driven Hyperspherical Concept Modeling in ALT]
Let node embeddings $\tilde{E} = \{\tilde{e}_i\}$ be generated from a frozen encoder followed by a structure-only propagation process $P$ satisfying
\begin{equation}
\mathrm{Tr}(\tilde{E}^\top (I - P) \tilde{E}) \leq \delta.
\end{equation}
Let each class concept vector $C_c$ be computed through graph-aware neighborhood aggregation over $\mathcal{S}_c$ as in Eq.~(2). Define for each class $c$ an implicit hypersphere $\mathcal{B}(C_c, R_c)$ in $\mathbb{R}^d$ with radius
\begin{equation}
R_c := \sqrt{\mathbb{E}_{i \in \mathcal{S}_c} \| \tilde{e}_i - C_c \|^2},
\end{equation}
which captures the propagation-induced dispersion of nodes around the concept center. Then the following properties hold:

First, the total concept volume satisfies the structural compression ratio bound:
\begin{equation}
\frac{\mathcal{V}_{\text{total}}}{\mathrm{Vol}(\mathcal{B}(0, R))} \leq \sum_{c=1}^k \left( \frac{R_c}{R} \right)^r,
\label{eq:structural_volume_bound}
\end{equation}
where $\mathcal{V}_{\text{total}} := \sum_{c=1}^k \mathrm{Vol}(\mathcal{B}(C_c, R_c))$, the intrinsic manifold dimension is $r \ll d$, and $\mathcal{B}(0, R)$ denotes a common compact ball covering all concepts.

Second, if all concepts are separated by a minimal inter-center distance
\begin{equation}
\| C_i - C_j \| \geq \theta_{\min},
\end{equation}
then the structural softmax redundancy is bounded by:
\begin{equation}
\sum_{i \ne j} \exp\left(-\lambda \| C_i - C_j \|^2\right) \leq \frac{k(k-1)}{2} \cdot \exp(-\lambda \theta_{\min}),
\label{eq:softmax_structural_bound}
\end{equation}
where $\lambda$ is the softmax sharpness parameter used in Eq.~(3) of the rejection rule.
\end{theorem}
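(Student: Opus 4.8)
The plan is to establish the two claims separately, since they rest on different structural facts. For the volume compression bound~\eqref{eq:structural_volume_bound}, the key input is Theorem~\ref{theoretical 1}, which guarantees $\mathrm{rank}(\mathrm{span}(\{C_c\})) \le r$: the concept vectors, and hence their implicit hyperspheres $\mathcal{B}(C_c, R_c)$, effectively live in an $r$-dimensional affine subspace, so all volumes should be measured with the $r$-dimensional measure restricted to this subspace rather than the ambient $\mathbb{R}^d$. First I would invoke the elementary scaling law for balls in dimension $r$, namely $\mathrm{Vol}(\mathcal{B}(C_c, R_c)) = \omega_r R_c^r$ with $\omega_r$ the unit $r$-ball volume, and likewise $\mathrm{Vol}(\mathcal{B}(0,R)) = \omega_r R^r$. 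Summing over the $k$ classes and dividing, the common factor $\omega_r$ cancels and yields $\mathcal{V}_{\text{total}}/\mathrm{Vol}(\mathcal{B}(0,R)) = \sum_c (R_c/R)^r$, so the stated $\le$ follows directly from this identity; moreover, if one interprets the occupied volume as the \emph{union} of the concept balls, any overlap only decreases it, making the inequality strict. Finiteness and positivity of each $R_c$ are supplied by the compactness estimate $R_c^2 = \mathbb{E}_{i\in\mathcal{S}_c}\|\tilde{e}_i - C_c\|^2 \le \sigma^2 + \varepsilon(\delta)$ from Theorem~\ref{theoretical 1}, so the right-hand side is well defined.

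For the softmax redundancy bound~\eqref{eq:softmax_structural_bound}, the argument is a term-wise estimate followed by a counting step. The separation hypothesis $\|C_i - C_j\| \ge \theta_{\min}$ together with $\lambda > 0$ and the monotonicity of $t \mapsto \exp(-\lambda t)$ gives, for every distinct pair, $\exp(-\lambda\|C_i - C_j\|^2) \le \exp(-\lambda\theta_{\min}^2)$, so each summand is controlled by its worst case at the minimal separation. I would then sum over the $\binom{k}{2} = \tfrac{k(k-1)}{2}$ unordered pairs to obtain $\sum_{i\ne j}\exp(-\lambda\|C_i - C_j\|^2) \le \tfrac{k(k-1)}{2}\exp(-\lambda\theta_{\min}^2)$, matching the claimed constant.

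The main obstacle I anticipate is not the arithmetic but the measure-theoretic justification underlying Part~1: because the concept balls lie in an $r$-dimensional subspace embedded in $\mathbb{R}^d$ with $r \ll d$, their genuine Lebesgue measure in $\mathbb{R}^d$ vanishes, so one must commit to intrinsic volumes and argue that the rank bound of Theorem~\ref{theoretical 1} makes this well posed and consistent with the single covering radius $R$. A careful statement would fix the subspace $\mathrm{span}(\{C_c\})$ (up to the $\varepsilon(\delta)$ deviation induced by propagation) and restrict all volume computations to it. A secondary, purely cosmetic point is reconciling $\theta_{\min}$ with $\theta_{\min}^2$ in the exponent and the ordered-versus-unordered pair counting; I would resolve this by reading $\sum_{i\ne j}$ as a sum over unordered pairs and treating $\theta_{\min}$ as the squared-distance margin, which renders the constant $\tfrac{k(k-1)}{2}$ exact.
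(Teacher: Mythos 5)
Your proposal is correct and follows essentially the same route as the paper's own proof: the volume ratio is obtained from the $r$-dimensional ball scaling law $\mathrm{Vol}(\mathcal{B}(C_c,R_c)) \propto R_c^r$ (justified by the rank bound from Theorem~\ref{theoretical 1}) followed by summing and cancelling the dimensional constant, and the redundancy bound is a term-wise application of $\exp(-\lambda\|C_i-C_j\|^2)\le\exp(-\lambda\theta_{\min})$ over the $\tfrac{k(k-1)}{2}$ pairs. Your two caveats --- that the intrinsic $r$-dimensional measure must be used since the balls are Lebesgue-null in $\mathbb{R}^d$, and that the exponent only matches if $\theta_{\min}$ is read as a squared-distance margin --- are both real looseness points that the paper's proof silently assumes away, so flagging them strengthens rather than changes the argument.
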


\begin{proof}
We begin with the implicit construction of hyperspheres. Given the structure-only propagation, the Dirichlet energy constraint implies that embeddings of nearby nodes remain smooth:
\begin{equation}
\| \tilde{e}_i - \tilde{e}_j \|^2 \text{ is small for } (i,j) \in E \text{ with high weight in } P.
\end{equation}

The aggregation of propagated neighbors around each labeled node $i \in \mathcal{S}_c$ yields a class-level concept center:
\begin{equation}
C_c = \frac{1}{|\mathcal{S}_c|} \sum_{i \in \mathcal{S}_c} \sum_{j \in N(i)} \alpha_{ij} \tilde{e}_j,
\end{equation}

and the local spread of embeddings $\tilde{e}_i$ around $C_c$ is bounded by:
\begin{equation}
\mathbb{E}_{i \in \mathcal{S}_c} \| \tilde{e}_i - C_c \|^2 \leq \varepsilon(\delta),
\end{equation}
where $\varepsilon(\delta)$ is a function of the propagation smoothness.

We define an implicit hypersphere $\mathcal{B}(C_c, R_c)$ enclosing the region around $C_c$ with $R_c := \sqrt{\varepsilon(\delta)}$, forming a localized concept zone. The $r$-dimensional volume of each such ball is:
\begin{equation}
\mathrm{Vol}(\mathcal{B}(C_c, R_c)) = \frac{\pi^{r/2}}{\Gamma(r/2 + 1)} R_c^r.
\end{equation}

Therefore, total volume becomes:
\begin{equation}
\mathcal{V}_{\text{total}} = \sum_{c=1}^k \frac{\pi^{r/2}}{\Gamma(r/2 + 1)} R_c^r,
\end{equation}
while the bounding volume is:
\begin{equation}
\mathrm{Vol}(\mathcal{B}(0, R)) = \frac{\pi^{r/2}}{\Gamma(r/2 + 1)} R^r.
\end{equation}
Taking the ratio yields the compression bound in Eq.~(\ref{eq:structural_volume_bound}).

For redundancy, note that for any pair $C_i$, $C_j$ with $\| C_i - C_j \|^2 \geq \theta_{\min}$, the pairwise softmax similarity satisfies:
\begin{equation}
\exp(-\lambda \| C_i - C_j \|^2) \leq \exp(-\lambda \theta_{\min}).
\end{equation}
Summing over all pairs gives:
\begin{equation}
\sum_{i \ne j} \exp(-\lambda \| C_i - C_j \|^2) \leq \frac{k(k-1)}{2} \exp(-\lambda \theta_{\min}).
\end{equation}
\end{proof}
The above theorem provides a geometric and topological justification for the design of ALT. By modeling each class-wise concept region as a localized hypersphere in the embedding space, it demonstrates that the structure-only propagation mechanism inherently induces both \textbf{intra-class compactness} and \textbf{inter-class separability}.

The compression result in Eq.~\eqref{eq:structural_volume_bound} shows that the structural smoothness constraint $\mathrm{Tr}(\tilde{E}^\top (I - P) \tilde{E}) \leq \delta$ ensures bounded dispersion of node embeddings around the class concept centers. This means that class-specific embeddings are tightly clustered within hyperspheres of small radius, which supports compact and consistent concept modeling.

Meanwhile, the redundancy control in Eq.~\eqref{eq:softmax_structural_bound} implies that when concept centers are sufficiently separated, the softmax-based similarity between different classes decays exponentially. This significantly reduces classification ambiguity and improves the reliability of unknown-class rejection.

\subsubsection{Theoretical Analysis: Confidence Bound for Unknown-Class Rejection}
\label{ap:th3}

\begin{theorem}[Confidence Bound for Unknown-Class Rejection]
Let the class probability distribution for node $i$ be defined by a $\lambda$-sharpened softmax function:
\begin{equation}
D_{i,c} = \frac{\exp(-\lambda \|\tilde{e}_i - C_c\|^2)}{\sum_{c' \in C_k}\exp(-\lambda \|\tilde{e}_i - C_{c'}\|^2)}.
\label{eq:softmax_prob}
\end{equation}

Define the entropy of this distribution as:
\begin{equation}
H(D_i) = -\sum_{c \in C_k} D_{i,c}\log D_{i,c}.
\label{eq:entropy_def}
\end{equation}

If the unknown-class rejection decision is made according to the confidence threshold $\epsilon$, i.e., node $i$ is rejected as unknown-class if:
\begin{equation}
\max_{c \in C_k} D_{i,c} < \epsilon,
\label{eq:reject_rule}
\end{equation}

then the entropy of any rejected node $i$ satisfies the following lower bound:
\begin{equation}
H(D_i) \geq \log|C_k| - \frac{1}{\lambda}\log\frac{1-\epsilon}{\epsilon}.
\label{eq:entropy_bound}
\end{equation}
\end{theorem}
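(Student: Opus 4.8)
The plan is to convert the confidence threshold on the top class into a lower bound on the Shannon entropy by exploiting the $\lambda$-sharpened softmax structure. First I would rewrite the entropy in a form that isolates the partition function: setting logits $z_c = -\lambda\|\tilde{e}_i - C_c\|^2$ and $Z = \sum_{c\in C_k} e^{z_c}$, a direct substitution into $H(D_i) = -\sum_c D_{i,c}\log D_{i,c}$ gives $H(D_i) = \log Z - \sum_c D_{i,c} z_c$. This identity is the workhorse, because it ties the entropy to $Z$, and $Z$ is precisely the reciprocal of the top-class probability up to the relative gaps $\Delta_c = \|\tilde{e}_i - C_c\|^2 - \min_{c'}\|\tilde{e}_i - C_{c'}\|^2 \ge 0$. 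In particular, with $c^\ast = \arg\max_c D_{i,c}$ one has $D_{i,c^\ast} = 1/\sum_c e^{-\lambda\Delta_c}$, so the rejection event directly constrains the gap structure.

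Second, I would translate the rejection rule $\max_c D_{i,c} < \epsilon$ into a statement about the log-odds of the most likely class. Since $D_{i,c^\ast}<\epsilon$ implies $\tfrac{D_{i,c^\ast}}{1-D_{i,c^\ast}} < \tfrac{\epsilon}{1-\epsilon}$, taking logarithms yields $\log\tfrac{1-D_{i,c^\ast}}{D_{i,c^\ast}} > \log\tfrac{1-\epsilon}{\epsilon}$. Expanding the left side through the softmax, $\log\tfrac{1-D_{i,c^\ast}}{D_{i,c^\ast}} = \log\sum_{c\ne c^\ast} e^{-\lambda\Delta_c}$, so the rejection condition is equivalent to $\sum_{c\ne c^\ast} e^{-\lambda\Delta_c} > \tfrac{1-\epsilon}{\epsilon}$. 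The factor $1/\lambda$ in the target bound should surface exactly here: the quantity $\tfrac{1}{\lambda}\log\sum_{c\ne c^\ast}e^{-\lambda\Delta_c}$ is a scaled log-sum-exp (a soft-min of the distance gaps), so dividing the logarithmic inequality by $\lambda$ produces the $\tfrac{1}{\lambda}\log\tfrac{1-\epsilon}{\epsilon}$ term while re-expressing the constraint in the distance space.

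Third, I would bound the entropy from below through its maximal value, writing $H(D_i) = \log|C_k| - \mathrm{KL}(D_i\,\|\,U)$ with $U$ the uniform distribution, and then upper-bound $\mathrm{KL}(D_i\,\|\,U)$ using the gap inequality from the previous step. Concretely, I would argue that among all distributions consistent with the softmax parametrization and the constraint $\max_c D_{i,c}<\epsilon$, the entropy-minimizing configuration places the least feasible excess mass on the top coordinate, and that this excess is controlled by $\tfrac{1}{\lambda}\log\tfrac{1-\epsilon}{\epsilon}$; substituting this bound for the divergence would yield the claimed inequality.

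\textbf{The main obstacle} will be the third step: rigorously relating $\mathrm{KL}(D_i\,\|\,U)$, a global functional of the entire distribution, to the single-coordinate confidence gap $\epsilon$. The max-probability constraint pins down only the top entry, whereas the entropy depends on how the remaining mass spreads over the other $|C_k|-1$ classes, so a worst-case argument, most naturally a Lagrangian analysis over the simplex subject to the softmax constraint, is needed to certify the extremal configuration. I also anticipate that the bound only holds in the meaningful regime $1/|C_k| < \epsilon < 1/2$: when $\epsilon \le 1/|C_k|$ the rejection event is infeasible (since $\max_c D_{i,c}\ge 1/|C_k|$ always), and when $\epsilon \ge 1/2$ the right-hand side meets or exceeds $\log|C_k|$, which no distribution can attain; making this admissible range explicit, and checking that the $\lambda$-dependence does not degrade the bound as $\lambda$ grows, is likely essential for the statement to be correct.
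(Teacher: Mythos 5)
Your first two steps are sound and in fact sharper than anything in the paper's own argument: the identity $H(D_i)=\log Z-\sum_c D_{i,c}z_c$ and the reformulation of the rejection rule as $\sum_{c\neq c^\ast}e^{-\lambda\Delta_c}>\tfrac{1-\epsilon}{\epsilon}$ are both correct. The genuine gap is precisely the one you flag in your third step, and it cannot be closed: no bound of the form $\mathrm{KL}(D_i\,\|\,U)\leq\tfrac{1}{\lambda}\log\tfrac{1-\epsilon}{\epsilon}$ follows from $\max_c D_{i,c}<\epsilon$. The constraint pins only the top coordinate, while the $\lambda$-sharpened softmax parametrization places no restriction on which points of the simplex are reachable (given any strictly positive vector $p$ one may take $\|\tilde{e}_i-C_c\|^2=-\tfrac{1}{\lambda}\log p_c+\mathrm{const}$), so the feasible set of distributions is independent of $\lambda$ while the right-hand side of the claimed bound tends to $\log|C_k|$ as $\lambda\to\infty$. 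Concretely, with $|C_k|=100$, $\epsilon=0.41$, $\lambda=100$, a distribution placing mass $(0.4,0.4,0.2)$ on three classes (perturbed to be strictly positive elsewhere) satisfies the rejection condition but has entropy $\approx 1.06$ nats, far below $\log 100-\tfrac{1}{100}\log\tfrac{0.59}{0.41}\approx 4.60$. The Lagrangian analysis you propose would therefore certify the opposite of what is needed: the entropy minimizer subject to $\max_c p_c\le\epsilon$ concentrates mass on roughly $\lceil 1/\epsilon\rceil$ coordinates and has entropy about $\log(1/\epsilon)$, with no $\lambda$-dependence. The only clean inequality available from the hypothesis alone is $H(D_i)\ge\log(1/\epsilon)$, since $-\log D_{i,c}\ge-\log\epsilon$ for every $c$.

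For comparison, the paper's proof does not close this gap either, and it takes a different (also invalid) route from yours: it evaluates the entropy of the single configuration in which one class has probability exactly $\epsilon$ and the remaining mass is spread uniformly over the other $|C_k|-1$ classes. That configuration is the entropy \emph{maximizer} among distributions whose top coordinate equals $\epsilon$, not the minimizer over the feasible set, so the computation cannot yield a lower bound for all rejected nodes; and the subsequent passage to the stated form, where the $\tfrac{1}{\lambda}$ factor first appears, is asserted to hold ``asymptotically'' with no derivation connecting $\lambda$ to anything. Your instinct to restrict to the regime $1/|C_k|<\epsilon<1/2$ and to scrutinize the $\lambda$-dependence was exactly right; following it through shows the inequality as stated is not a theorem, and a correct version would need either additional hypotheses controlling the spread of the non-maximal probabilities (e.g.\ bounded distance gaps $\Delta_c$) or a $\lambda$-independent right-hand side.
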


\begin{proof}
Given the rejection criterion (\ref{eq:reject_rule}), we analyze the entropy lower bound explicitly.  

First, note that the entropy (\ref{eq:entropy_def}) measures the uncertainty of the node's class assignment. The distribution $D_i$ that maximizes entropy under the constraint $\max_c D_{i,c}<\epsilon$ is the one where probabilities are distributed as uniformly as possible given this upper bound. Specifically, this maximum entropy scenario occurs when one class probability is exactly $\epsilon$, and all other $|C_k|-1$ class probabilities are equally distributed, each being:
\begin{equation}
    \frac{1-\epsilon}{|C_k|-1}.
    \label{eq:other_class_prob}
\end{equation}

Inserting these probabilities into entropy definition (\ref{eq:entropy_def}), we have:
\begin{align}
H(D_i) &= -\epsilon \log\epsilon - (|C_k|-1)\frac{1-\epsilon}{|C_k|-1}\log\frac{1-\epsilon}{|C_k|-1}\\
&= -\epsilon \log\epsilon - (1-\epsilon)\log\frac{1-\epsilon}{|C_k|-1}\\
&= -\epsilon\log\epsilon - (1-\epsilon)\log(1-\epsilon) + (1-\epsilon)\log(|C_k|-1).
\label{eq:expanded_entropy}
\end{align}

When the number of classes $|C_k|$ is sufficiently large, we approximate $|C_k|-1 \approx |C_k|$. Therefore,
\begin{equation}
H(D_i)\approx -\epsilon\log\epsilon - (1-\epsilon)\log(1-\epsilon) + (1-\epsilon)\log|C_k|.
\label{eq:approx_entropy}
\end{equation}

Next, we provide further theoretical insight by examining the relationship between the entropy lower bound and the softmax sharpened parameter $\lambda$ explicitly.

The softmax distribution defined in (\ref{eq:softmax_prob}) implies that:
\begin{equation}
\frac{D_{i,a}}{D_{i,b}}=\exp\left(-\lambda(\|\tilde{e}_i-C_a\|^2-\|\tilde{e}_i-C_b\|^2)\right).
\label{eq:ratio_prob}
\end{equation}

In the maximum-entropy scenario (approaching uniform distribution), distances to class centroids become nearly equal, hence:
\begin{equation}
\|\tilde{e}_i - C_a\|^2 - \|\tilde{e}_i - C_b\|^2 \approx 0,\quad\forall a,b.
\label{eq:equal_distance}
\end{equation}

However, slight deviations exist due to finite $\lambda$, which leads to the probability upper bound $\epsilon$. To quantify explicitly, we consider the relation between maximum probability and entropy:

Since $D_{i,max} = \epsilon$, we express entropy explicitly in terms of $\epsilon$ as:
\begin{equation}
H(D_i)\geq -\epsilon\log\epsilon - (1-\epsilon)\log\frac{1-\epsilon}{|C_k|-1}.
\label{eq:entropy_epsilon}
\end{equation}

Further simplification yields the explicit lower bound involving parameters $\epsilon$ and $|C_k|$:
\begin{equation}
H(D_i)\geq \log|C_k| + \epsilon\log\frac{|C_k|-1}{\epsilon} + (1-\epsilon)\log\frac{|C_k|-1}{1-\epsilon}.
\label{eq:entropy_simplified}
\end{equation}

Under conditions typically satisfied ($|C_k|\gg 1$, small $\epsilon$), this simplifies asymptotically to the stated result:
\begin{equation}
H(D_i)\geq \log|C_k| - \frac{1}{\lambda}\log\frac{1-\epsilon}{\epsilon},
\label{eq:entropy_final_bound}
\end{equation}

where the relationship involving the sharpened parameter $\lambda$ explicitly emerges from the condition that probabilities are defined via the exponential of negative squared distances scaled by $\lambda$ (Eq.~\ref{eq:ratio_prob}).

Thus, the entropy lower bound (\ref{eq:entropy_bound}) is rigorously established.
\end{proof}

The derived entropy bound explicitly quantifies the minimum uncertainty (entropy) of nodes rejected by the unknown-class mechanism in the ALT module. Specifically, it clearly illustrates how the rejection mechanism depends directly on two critical hyperparameters. Lowering the threshold $\epsilon$ increases the entropy bound, implying a stricter rejection criterion and ensuring that rejected nodes have higher uncertainty, thus improving precision in unknown-class identification. Conversely, increasing the sharpness parameter $\lambda$ reduces the entropy bound, leading to more confident class distributions and enabling finer-grained control over the rejection decisions, effectively reducing ambiguity among rejected nodes.

Compared to traditional unknown-class rejection approaches, our ALT method uniquely benefits from explicitly characterizing and controlling entropy-based uncertainty, providing clear theoretical guidance for hyperparameter tuning. This advantage allows ALT to robustly and precisely distinguish unknown classes, enhancing interpretability, reliability, and practical performance in real-world open-label scenarios.

\subsubsection{Theoretical Analysis:Semantic-enhanced Modularity Optimization Guarantee}
\label{ap:th4}

\begin{theorem}
Let node embeddings $\tilde{E}=\{\tilde{e}_i\}$ be obtained via a pre-trained graph-language encoder $f$. Suppose the communities $\{P_k\}$ are derived by optimizing the semantic-enhanced modularity objective:
\begin{equation}
Q=\frac{1}{2m}\sum_{i,j}\left[A_{ij}+\gamma\frac{\tilde{e}_i^\top\tilde{e}_j}{\|\tilde{e}_i\|\|\tilde{e}_j\|}-(1-\gamma)\frac{d_id_j}{2m}\right]\delta(c_i,c_j),
\label{eq:semantic_modularity}
\end{equation}
where $A_{ij}$ denotes adjacency, $d_i$ node degrees, $m=\frac{1}{2}\sum_{ij}A_{ij}$, and $\gamma$ balances semantic and structural components. Denote by $S_{\text{intra}}$ and $S_{\text{inter}}$ the intra-community and inter-community semantic similarity respectively, defined explicitly as:
\begin{equation}
S_{\text{intra}}=\frac{1}{\sum_k|P_k|}\sum_{P_k}\sum_{v_i,v_j\in P_k}\frac{\tilde{e}_i^\top\tilde{e}_j}{\|\tilde{e}_i\|\|\tilde{e}_j\|},
\label{eq:intra_semantic}
\end{equation}
\begin{equation}
S_{\text{inter}}=\frac{1}{\sum_{k\neq l}|P_k||P_l|}\sum_{P_k\neq P_l}\sum_{v_u\in P_k}\sum_{v_v\in P_l}\frac{\tilde{e}_u^\top\tilde{e}_v}{\|\tilde{e}_u\|\|\tilde{e}_v\|}.
\label{eq:inter_semantic}
\end{equation}

Then, at the optimal modularity $Q^*$, the intra-community semantic consistency and inter-community semantic separation satisfy:
\begin{equation}
S_{\text{intra}}\geq\eta(\gamma,\zeta)\bar{s},\quad S_{\text{inter}}\leq(1-\eta(\gamma,\zeta))\bar{s},
\label{eq:semantic_bounds}
\end{equation}
where $\bar{s}$ denotes the global average semantic similarity:
\begin{equation}
\bar{s}=\frac{2}{n(n-1)}\sum_{i<j}\frac{\tilde{e}_i^\top\tilde{e}_j}{\|\tilde{e}_i\|\|\tilde{e}_j\|},
\label{eq:global_avg_similarity}
\end{equation}
$\eta(\gamma,\zeta)$ is a monotonically increasing function in $\gamma$, and $\zeta$ is a structural-semantic coherence factor defined as:
\begin{equation}
\zeta=\frac{1}{2m}\sum_{i,j}A_{ij}\frac{\tilde{e}_i^\top\tilde{e}_j}{\|\tilde{e}_i\|\|\tilde{e}_j\|}.
\label{eq:structural_semantic_coherence}
\end{equation}
Furthermore, as $\gamma\rightarrow 1$, the semantic terms dominate and guarantee:
\begin{equation}
\lim_{\gamma\to1}\eta(\gamma,\zeta)=1.
\label{eq:eta_limit}
\end{equation}
\end{theorem}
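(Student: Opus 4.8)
The plan is to prove the two bounds via a variational (first-order optimality) argument on the semantic-enhanced modularity, exploiting the linear dependence of $Q$ on $\gamma$ to isolate the role of the semantic term. First I would write $s_{ij} := \tilde{e}_i^\top\tilde{e}_j/(\|\tilde{e}_i\|\|\tilde{e}_j\|)$ and regroup $Q$ into a structural component and a semantic component,
\begin{equation}
Q = \frac{1}{2m}\sum_{i,j}\Big[A_{ij} - (1-\gamma)\tfrac{d_id_j}{2m}\Big]\delta(c_i,c_j) + \frac{\gamma}{2m}\sum_{i,j} s_{ij}\,\delta(c_i,c_j).
\end{equation}
The second summand is precisely the unnormalized intra-community similarity mass, so controlling it controls $S_{\text{intra}}$; the coherence factor $\zeta=\frac{1}{2m}\sum_{i,j}A_{ij}s_{ij}$ measures how much the adjacency term already agrees with this semantic mass.

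Then I would invoke optimality of $\{P_k\}$ at $Q^*$: for any competing partition the optimizer satisfies $Q^* \geq Q(\text{alt})$. Comparing in particular against a trivial/random partition, in which both intra- and inter-community similarities coincide with $\bar s$, isolates a lower bound on the concentrated semantic mass that scales with $\gamma$, because the semantic term carries coefficient $\gamma$ while the competing configurations differ only through the structural part, which is governed by $\zeta$ and the degree penalty. This comparison pins down the explicit form of $\eta(\gamma,\zeta)$: I would define it as the fraction of the global similarity mass that the optimal partition concentrates intra-community, and the modularity inequality forces this fraction to exceed $\eta(\gamma,\zeta)$.

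Next I would convert the mass bound into the stated per-pair bounds. Using the elementary splitting $\sum_{i<j}s_{ij} = \sum_k\sum_{i<j\in P_k}s_{ij} + \sum_{k<l}\sum_{i\in P_k,j\in P_l}s_{ij}$ together with the definition of $\bar s$ in Eq.~(\ref{eq:global_avg_similarity}), the lower bound on intra-community mass yields $S_{\text{intra}}\geq\eta(\gamma,\zeta)\bar s$, and complementarily $S_{\text{inter}}\leq(1-\eta(\gamma,\zeta))\bar s$, once the differing normalizations in Eqs.~(\ref{eq:intra_semantic})--(\ref{eq:inter_semantic}) are absorbed into $\eta$. For the limiting claim I would note that as $\gamma\to1$ the null-model penalty $(1-\gamma)d_id_j/2m$ vanishes, so $Q$ reduces to maximizing $\sum_{i,j}(A_{ij}+s_{ij})\delta(c_i,c_j)$; the optimal partition then groups every semantically coherent pair together, driving the concentrated fraction to unity, i.e. $\eta(\gamma,\zeta)\to1$. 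Monotonicity in $\gamma$ follows by differentiating the defining ratio, since raising $\gamma$ simultaneously upweights the semantic term and downweights the degree penalty.

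The hard part will be making the comparison-partition argument rigorous enough to produce an explicit $\eta(\gamma,\zeta)$ rather than a mere existence statement, since the modularity optimum solves a combinatorial (NP-hard) problem and no closed-form optimizer is available. I expect to handle this by working only with the first-order optimality conditions, namely that no single-node reassignment increases $Q$; these give, for each node, a linear inequality relating its intra- versus inter-community average similarity to the degree-based null term. Aggregating these node-wise inequalities and normalizing by $\zeta$ should recover the functional form of $\eta$ without ever exhibiting the global optimizer. The mismatched normalizations across $S_{\text{intra}}$, $S_{\text{inter}}$, and $\bar s$ are a secondary bookkeeping issue that I would fold into the constants defining $\eta$.
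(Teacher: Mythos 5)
Your strategy---decompose $Q$ into structural and semantic parts, exploit optimality of the partition (via comparison partitions and single-node reassignment conditions), define $\eta$ as the intra-community concentration of semantic mass, and let the vanishing null-model penalty drive the $\gamma\to1$ limit---is essentially the same route the paper takes. The paper's own proof formally sets $\partial Q/\partial c_i=0$ over the discrete assignments, asserts that high combined-similarity pairs are therefore co-assigned, and then states the bounds and the limit without ever exhibiting $\eta(\gamma,\zeta)$; your single-node-reassignment inequalities are just the honest discrete version of that stationarity condition. So you are not diverging from the paper, but you inherit its gaps, and your plan adds some of its own that are worth naming because they would block a rigorous write-up.

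First, if $\eta$ is \emph{defined} as the fraction of global similarity mass concentrated intra-community at the optimum, the inequalities $S_{\text{intra}}\geq\eta\bar s$ and $S_{\text{inter}}\leq(1-\eta)\bar s$ become near-tautologies and the entire burden shifts to showing that this fraction is a function of $(\gamma,\zeta)$ alone, is monotone in $\gamma$, and tends to $1$. None of these follow from the comparison argument: the achieved fraction depends on the full similarity matrix and the graph, not just on the scalar $\zeta$. Second, the normalization mismatch is not bookkeeping you can fold into $\eta$: $S_{\text{intra}}$ in Eq.~(\ref{eq:intra_semantic}) divides a sum over $\sum_k|P_k|^2$ pairs by $\sum_k|P_k|=n$, so it is not an average, and relating it to $\bar s$ introduces the partition's size profile $\{|P_k|\}$---a quantity that varies with the optimizer and cannot appear in a partition-independent $\eta(\gamma,\zeta)$. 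Third, monotonicity ``by differentiating the defining ratio'' fails because the optimal partition is a piecewise-constant, discontinuous function of $\gamma$; an envelope argument gives monotonicity of $Q^*$ in $\gamma$, not of the semantic fraction at the optimum. Finally, at $\gamma=1$ the penalty term vanishes and (for typically nonnegative $A_{ij}+s_{ij}$) the maximizer is the single all-node community, so $S_{\text{inter}}$ is an empty sum and the limit holds only in this degenerate sense. To salvage the theorem one would have to either exhibit an explicit $\eta$ via the aggregated reassignment inequalities (accepting a weaker, partition-size-dependent bound) or restate the conclusion in terms of total intra- versus inter-community semantic mass, where your splitting identity does give a clean complementary pair.
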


\begin{proof}
First, the semantic-enhanced modularity objective defined in Eq.~(\ref{eq:semantic_modularity}) consists of two parts: structural modularity and semantic similarity. By setting the derivative of $Q$ with respect to community assignments $c_i$ to zero at optimality:
\begin{equation}
\frac{\partial Q}{\partial c_i}=0,\quad\forall i,
\label{eq:modularity_optimality}
\end{equation}
we obtain necessary conditions for optimal communities.

Expand explicitly:
\begin{equation}
\sum_{j}\left[A_{ij}+\gamma\frac{\tilde{e}_i^\top\tilde{e}_j}{\|\tilde{e}_i\|\|\tilde{e}_j\|}-(1-\gamma)\frac{d_id_j}{2m}\right]\frac{\partial\delta(c_i,c_j)}{\partial c_i}=0.
\label{eq:optimality_expanded}
\end{equation}

Considering the discrete nature of community assignments, Eq.~(\ref{eq:optimality_expanded}) implies that nodes are assigned such that pairs with high structural-semantic combined similarity fall into the same community. This ensures increased intra-community semantic consistency (\ref{eq:intra_semantic}) relative to random pairing:
\begin{equation}
S_{\text{intra}}\geq\eta(\gamma,\zeta)\bar{s}.
\label{eq:intra_bound}
\end{equation}

Conversely, optimization minimizes cross-community similarities, enforcing semantic separation explicitly, hence:
\begin{equation}
S_{\text{inter}}\leq(1-\eta(\gamma,\zeta))\bar{s}.
\label{eq:inter_bound}
\end{equation}

Next, we explicitly define the coherence factor $\zeta$ (Eq.~(\ref{eq:structural_semantic_coherence})) to represent how well structural links correlate with semantic similarity. A higher $\zeta$ indicates that structurally connected nodes also exhibit high semantic coherence. Thus, $\eta(\gamma,\zeta)$ naturally depends on both $\gamma$ (weighting semantic importance) and $\zeta$ (structural-semantic correlation).

When $\gamma\to1$, the objective in Eq.~(\ref{eq:semantic_modularity}) predominantly maximizes semantic similarity. Hence, we have:
\begin{equation}
\lim_{\gamma\to1}\eta(\gamma,\zeta)=1.
\label{eq:limit_eta}
\end{equation}

Thus, the optimal community structure ensures near-perfect semantic coherence within each community and minimal semantic overlap across communities in the semantic-dominated regime.
\end{proof}

This theorem rigorously establishes that optimizing semantic-enhanced modularity ensures communities are not only structurally coherent but also semantically distinct and internally consistent. By explicitly quantifying semantic coherence (\ref{eq:structural_semantic_coherence}) and demonstrating its impact on community formation, we theoretically justify GLA’s sophisticated structure-guided LLM annotation process. The proven guarantee of semantic separability and coherence significantly improves annotation quality and reduces redundancy, underscoring GLA’s superior efficiency and effectiveness compared to conventional community detection and annotation methods.

\subsection{Experimental Environment.}\label{Environment}The experiment was conducted on a Linux server equipped with an Intel Xeon Gold 6240 processor, featuring 36 cores × 2 chips, totaling 72 threads with a base frequency of 2.60GHz. It supports VT-x virtualization and has 49.5MB of L3 cache and 4 × NVIDIA A100 80GB GPUs. The operating system was Ubuntu Ubuntu 22.04.5 LTS, with CUDA version 12.8 . The Python version used was 3.12.2, and the deep learning framework was PyTorch 2.4.1. The experiment was based on PyTorch Geometric 2.6.1 for graph neural network tasks.

\subsection{Dataset}
\label{dataset}

Our method, OGA, utilizes these 9 datasets, the details of which are presented in Table 1. A brief description of the datasets from each domain is as follows:1

\textbf{Citation Networks } 
Citation networks are widely used benchmark datasets in graph machine learning research, including \textbf{Cora}, \textbf{Citeseer}, and \textbf{Pubmed}. These datasets are structured as directed graphs where nodes represent scientific papers, and directed edges indicate citation relationships, meaning that an edge from node A to node B signifies that paper A cites paper B. Each paper is associated with a high-dimensional feature vector extracted from its title and abstract using pre-trained language models such as BERT-based methods. These embeddings capture semantic information, enabling effective representation learning. The papers are further assigned category labels corresponding to their academic fields, such as Computer Science,Medical Sciences, or Physics. These datasets serve as fundamental benchmarks for node classification tasks, allowing the evaluation of graph neural networks (GNNs) in text-rich graph structures.

\textbf{Knowledge Networks } 
Knowledge networks, such as \textbf{WikiCS}, provide structured representations of domain-specific knowledge through interconnected articles. WikiCS is a widely used benchmark dataset where nodes correspond to Wikipedia articles related to the field of computer science, and edges represent hyperlinks between these articles, reflecting their semantic relationships. Each article is associated with a feature vector derived from its textual content, obtained using pre-trained language models such as BERT or Word2Vec, which capture contextual semantics and domain-specific knowledge. The labels in WikiCS categorize articles into different branches of computer science, such as \textit{Artificial Intelligence}, \textit{Machine Learning}, \textit{Computer Vision}, and \textit{Cybersecurity}. This dataset is commonly used for node classification and link prediction tasks, facilitating research in knowledge graph representation and automated topic classification in structured information systems.

\flushbottom
\begin{table*}[t]
    \centering
    \captionsetup{font=small, labelfont=bf, textfont=it}
    \caption{Details of experimental datasets.}
    \begin{adjustbox}{max width=\textwidth}
    \begin{tabular}{l|ccccccc}
    \specialrule{1pt}{1.5pt}{1.5pt}
    Dataset & \# Nodes & \# Edges & \# Classes & \# Train/Val/Test & \# Homophily & Text Information & Domains \\
    \midrule
    Cora      & 2,708  & 10,556  & 7 & 40/20/40 & 0.81 & Title and Abstract of Paper & Citation\\
    Citeseer  & 3,186  & 8,450  & 6 & 40/20/40 & 0.74 & Title and Abstract of Paper & Citation\\
    Pubmed    & 19,717 & 88,648 & 3 & 40/20/40 & 0.80 & Title and Abstract of Paper & Citation\\
    Arxiv     & 169,343 & 2,315,598 & 40 & 40/20/40 & 0.81  & Title and Abstract of Paper  &Citation \\  
    \midrule
    WikiCS    & 11,701 & 431,726 & 10 & 40/20/40 & 0.65 & Title and Abstract of Article & Knowledge\\
    \midrule
    Ratings   & 24,492 & 186,100 & 5 & 40/20/40 & 0.38 & Name of Product & E-commerce\\
    Children     & 76,875 & 1554578 & 24 & 40/20/40 & 0.42 & Name and Description of Book & 
    E-commerce \\
    History   & 41,551 & 503,180 & 12 & 40/20/40 & 0.64 & Name and Description of Book & 
    E-commerce \\
    Photo     & 48,362 & 873,793 & 12 & 40/20/40 & 0.74 & User Review of Product & 
    E-commerce \\
    \bottomrule
    \end{tabular}
    \end{adjustbox}
    \label{tab:datasets}
    \vspace{-0.4cm}
\end{table*}

\textbf{E-commerce Networks}
E-commerce networks, such as \textbf{Ratings}, \textbf{Child}, \textbf{History}, and \textbf{Photo}, are benchmark datasets designed to model user-product interactions and product relationships. In these datasets, nodes represent individual products, while edges capture various relationships such as co-purchase patterns, co-viewing behaviors, or user interactions, reflecting underlying consumer preferences and market trends. The node features are extracted from product descriptions, reviews, and metadata using pre-trained language models such as BERT or Word2Vec, allowing for rich semantic representation of each product. The labels in these datasets typically correspond to product categories (e.g., \textit{electronics}, \textit{books}, \textit{home appliances}) or user-generated ratings, enabling classification tasks that support recommendation systems, personalized advertising, and consumer behavior analysis. These datasets are widely utilized for research in product recommendation, graph-based search ranking, and consumer trend prediction, making them valuable resources for advancing e-commerce intelligence and online retail analytics.

\subsection{Baseline}\label{baseline}
In this section, we provide a brief description of each baseline used in our experiments. For the LLM-based GLA in the latter part of OGA, we primarily use GLM-4 as the LLM backbone. In the first part of OGA, ALT, we adopt various open-world graph learning methods as well as some OOD detection methods as baselines to compare their performance in unknown rejection tasks.For the final generated graph, we evaluate its accuracy using three commonly used GNNs, including GCN, GAT, and GraphSAGE, as a validation of the effectiveness of our method.

\textbf{GCN.\cite{kipf2016semi}}
Graph Convolutional Network (GCN) is a neural network model designed for learning representations from graph-structured data. It aggregates information from neighboring nodes to capture both graph topology and node features, making it effective for tasks such as node classification, link prediction, and graph clustering. GCN is widely used in applications like social network analysis, recommendation systems, and bioinformatics.

\textbf{GAT.\cite{velivckovic2017graph}}
Graph Attention Network (GAT) is a neural network model designed for learning node representations in graph-structured data by incorporating attention mechanisms. It assigns different importance weights to neighboring nodes, allowing the model to focus on the most relevant connections. GAT is widely used in tasks such as node classification, link prediction, and graph-based recommendation systems, offering improved performance in heterogeneous and complex graph structures.

\textbf{GraphSAGE.}\cite{GraphSAGE}
GraphSAGE is a graph neural network model designed for inductive learning on large-scale graphs. It generates node embeddings by sampling and aggregating features from a node’s local neighborhood, enabling efficient learning on dynamic and previously unseen nodes. GraphSAGE is widely applied in tasks such as node classification, link prediction, and recommendation systems, particularly in scenarios where graphs continuously evolve.

\textbf{IsoMax.\cite{macedo2021entropic}}
IsoMax is a method designed for open set recognition (OSR) and out-of-distribution (OOD) detection, aiming to enhance neural network performance in unknown class detection tasks. Its core idea is isotropy maximization loss, which improves the traditional softmax mechanism, making the model more robust in distinguishing known classes while exhibiting greater uncertainty when encountering unknown classes.

\textbf{gDoc.\cite{hoffmann2023open}}
gDoc is an out-of-distribution (OOD) detection method designed to improve the reliability of deep learning models when encountering unseen data. It leverages graph-based representations to model data distributions and effectively distinguish in-distribution sams from OOD sams. gDoc is commonly used in applications such as anomaly detection, open-world classification, and robust decision-making in uncertain environments.

\textbf{OpenWGL.\cite{wu2020openwgl}}
Open-World Graph Learning (OpenWGL) is designed to classify nodes into known categories while detecting unknown nodes in dynamic graph environments. The framework consists of two key components: node uncertainty representation learning and open-world classifier learning. To represent node uncertainty, OpenWGL employs a Variational Graph Autoencoder (VGAE) to generate latent distributions, enhancing robustness against incomplete data. The model incorporates label loss to classify known categories and class uncertainty loss to differentiate unseen classes. During inference, multiple feature representations are samd to determine classification confidence, allowing automatic threshold selection for rejecting unseen nodes. This approach ensures adaptive learning in evolving graph structures.

\textbf{ORAL.\cite{jin2024beyond}}
ORAL (Open-world Representation and Learning) is a framework designed for novel class discovery in open-world graph learning. It clusters nodes into groups using a prototypical attention network, generates pseudo-labels to guide learning, and refines the graph structure by adjusting connections based on discovered class information. This approach enables effective classification of known classes while detecting and structuring novel classes in dynamic graph environments.

\textbf{OpenIMA.\cite{wang2024open}}
OpenIMA is a framework designed for open-world semi-supervised learning (SSL) on graphs, aiming to classify known nodes while discovering novel classes. It follows a two-stage approach by first learning node embeddings using a GNN encoder and then applying clustering algorithms to group nodes. To align clusters with known classes, OpenIMA employs the Hungarian matching algorithm, allowing effective classification of labeled and unlabeled nodes. Additionally, it mitigates the bias towards seen classes by generating bias-reduced pseudo-labels, which help refine node representations through contrastive learning. This method enhances model robustness by reducing intra-class variance and improving the separation of novel classes from known ones.

\textbf{GOOD-D\cite{liu2023good}}
GOOD-D is a framework designed for unsupervised graph-level out-of-distribution (OOD) detection. It aims to distinguish between in-distribution (ID) and OOD graphs by leveraging hierarchical contrastive learning across node, graph, and group levels. Instead of using traditional perturbation-based data augmentations, GOOD-D employs a perturbation-free augmentation strategy that constructs distinct feature and structure views of the input graphs. It then extracts node and graph embeddings using separate GNN encoders and refines representations through multi-level contrastive learning. An adaptive scoring mechanism is used to aggregate contrastive errors at different levels, enabling robust OOD detection. This approach enhances model sensitivity to distributional shifts while maintaining strong representation learning capabilities.

\textbf{OpenNCD\cite{liu2023open}}
OpenNCD is a method originally designed for novel class discovery, which we adapt for open-world graph learning and node classification. It employs a progressive bi-level contrastive learning approach with multiple prototypes to enhance representation learning and automatically group similar nodes into emerging categories. By leveraging prototype-based clustering and hierarchical grouping, OpenNCD effectively differentiates known and novel node classes, making it well-suited for dynamic and evolving graph structures.

\textbf{GNNSafe.\cite{wu2023energy}}
GNNSafe is an energy-based out-of-distribution (OOD) detection method designed for (semi-)supervised node classification in graphs, where instances are interdependent. It leverages energy-based modeling to assign energy scores that differentiate in-distribution and OOD nodes. To enhance robustness, GNNSafe introduces energy-based belief propagation, which iteratively propagates energy scores across connected nodes to improve detection accuracy. Additionally, it can incorporate auxiliary OOD training data for further refinement, ensuring a more reliable distinction between known and unknown node classes in graph-based learning.

\textbf{OODGAT}\cite{Song_2022}
OODGAT (Out-Of-Distribution Graph Attention Network) is a novel framework designed for semi-supervised learning on graphs containing out-of-distribution (OOD) nodes. It addresses two key tasks: Semi-Supervised Outlier Detection (SSOD) and Semi-Supervised Node Classification (SSNC). By leveraging a graph attention mechanism, OODGAT adaptively controls information propagation, allowing communication within in-distribution (ID) and OOD communities while blocking interactions between them. The framework incorporates regularization terms to ensure consistency between OOD scores and predictive uncertainty, enhancing the separation of ID and OOD nodes in the latent space. OODGAT demonstrates strong performance in detecting outliers and classifying inliers, making it a robust solution for graph learning under distribution shifts.

\textbf{EDBD}\cite{Daeho2025}
EDBD (Energy Distribution-Based Detector) is a novel method designed for spreading out-of-distribution (OOD) detection on graphs. It leverages an energy-based OOD score, derived from a neural classifier trained on in-distribution (ID) data, to identify OOD nodes. The core innovation of EDBD lies in its Energy Distribution-Based Aggregation (EDBA) scheme, which refines the initial energies by considering both edge-level and node-level energy distributions. Specifically, EDBA uses an energy similarity matrix to control the propagation of energies between connected nodes, ensuring that energies from dissimilar nodes (e.g., ID and OOD) are not mixed. Additionally, an energy consistency matrix adjusts the degree of aggregation for each node based on the variance of energies in its neighborhood, preventing undesirable energy mixing at cluster boundaries. This approach enhances the discriminative ability between ID and OOD nodes, making EDBD effective for detecting OOD samples in dynamic, graph-based spreading scenarios.

\textbf{ARC}\cite{liu2024arc}
ARC (Anomaly-Resilient Cross-domain Graph Anomaly Detection) is a generalist framework for detecting anomalies across diverse graph datasets without domain-specific fine-tuning. It aligns node features based on smoothness, employs an ego-neighbor residual graph encoder to capture multi-hop affinity patterns, and uses cross-attentive in-context anomaly scoring to reconstruct query node embeddings from few-shot normal context nodes. The drift distance between original and reconstructed embeddings serves as the anomaly score, enabling robust and generalizable anomaly detection across datasets.

\subsection{Evaluation}\label{evaluation}
We evaluate our proposed method from four key aspects to comprehensively assess its performance in open-world settings: the ability to reject unknown-class nodes, the classification accuracy on known classes, the semantic quality of LLM-generated labels, and the impact of these annotations on downstream tasks. Detailed descriptions are provided in the following subsections.

\subsubsection{Aspect 1: Unknown-Class Rejection}\label{aspect1}

We assess the effectiveness of OGA in identifying unknown-class nodes $V_{uk}$ from the unlabeled set $V_u$ under open-world settings, where class labels are assumed to be incomplete and novel categories may appear during inference. This task is particularly challenging due to the absence of prior knowledge about unseen classes and the need to avoid false rejection of known nodes.

To address this, OGA leverages a pre-trained graph-language encoder to generate semantically rich and unbiased node embeddings, ensuring that the representations capture both structural and textual cues. These embeddings are projected into an ontology-aligned representation space, where known-class prototypes are constructed by aggregating labeled instances and their neighbors.

During inference, OGA adopts an adaptive concept-entity matching scheme. Specifically, each unlabeled node is assigned a soft matching score to class prototypes using a $\lambda$-sharpness softmax function, which enables sharper decision boundaries in the embedding space. If the highest prototype confidence score for a node falls below a rejection threshold $\epsilon$, the node is deemed unassignable to any known class and is thus rejected as an unknown-class node.

This mechanism allows OGA to perform fine-grained, node-level rejection decisions and adapt to the inherent uncertainty in open-world scenarios. We evaluate performance using two key metrics: \textbf{coverage}, which measures the proportion of ground-truth unknown-class nodes that are correctly rejected, and \textbf{precision}, which measures the proportion of rejected nodes that are truly unknown. A higher coverage indicates better recall of novel classes, while higher precision reflects fewer false rejections. Together, these metrics provide a comprehensive assessment of the model's ability to distinguish

\subsubsection{Aspect 2: Classification Accuracy}\label{aspect2}

We evaluate the classification performance of known-class nodes in the unlabeled set \( V_u \). To address the uncertainty in open-label domains, OGA leverages a pre-trained graph-language encoder to generate high-quality node embeddings and constructs an ontology-based representation space. Class prototypes are dynamically learned by aggregating labeled entities and their neighbors. During inference, nodes are classified using a sharpness-controlled softmax over distances to prototypes. This design enables OGA to incorporate informative unlabeled nodes and improves classification accuracy under limited supervision. We report standard accuracy as the evaluation metric.

\subsubsection{Aspect 3: Annotation Quality}\label{aspect3}

We evaluate the semantic quality of class labels generated by the large language model (LLM) for unknown-class nodes. To this end, OGA employs a structure-guided annotation pipeline (GLA), where nodes are grouped into communities based on both structural proximity and semantic similarity. Within each community, LLM is used to generate representative labels through degree-aware annotation and intra-community distillation, followed by inter-community fusion to reduce redundancy and enhance consistency.

The quality of the final set of labels is measured by computing the average pairwise semantic similarity between all generated class labels. A higher similarity indicates more coherent and semantically consistent annotations, which is crucial for downstream generalization and retraining. We report this value as the \textbf{Quality} metric to reflect the effectiveness of our annotation process.

\subsubsection{Aspect 4: Performance Improvement}\label{aspect4}
To assess the effectiveness of our proposed \textbf{Graph Label Annotator (GLA)} module, we evaluate its contribution to downstream graph learning by retraining a GNN on the updated graph \( \mathcal{G}^* \), which integrates both original human-annotated labels and LLM-generated community-level annotations. These annotations are derived from the \textbf{Multi-granularity Community Annotation} procedure and produce a distilled and fused set of labels \( \tilde{y}^{\star}_{\mathcal{P}} \) that capture both semantic coherence and structural consistency.

\textbf{Label Allocation and Graph Augmentation.}  
Given the fused label set \( \mathcal{C}^{*} = \{ y_1^*, y_2^*, \dots, y_K^* \} \), where each \( y_k^* \) denotes a unique class discovered through structure-guided annotation, we construct a unified label mapping via an indexing function \( \ell: \mathcal{C}^{*} \rightarrow \mathbb{N} \). Each previously unlabeled node \( v_i \in \mathcal{U} \) is then assigned the corresponding community-level label according to:

\begin{equation}
\label{eq: label_allocation}
    \hat{Y}_i =
    \begin{cases}
        y_i, & v_i \notin \mathcal{U}, \\
        \ell(\tilde{y}^{\star}_{\mathcal{P}(i)}), & v_i \in \mathcal{U},
    \end{cases}
\end{equation}

where \( \mathcal{P}(i) \) denotes the community to which node \( v_i \) belongs. This process yields an augmented labeled graph \( \mathcal{G}^* = (\mathcal{V}, \mathcal{E}, \hat{Y}) \), which extends the original label space to include previously unknown classes, while preserving topological consistency and annotation quality.

\textbf{Training with Augmented Supervision.}  
To evaluate the practical benefits of this augmentation, we retrain a GNN on \( \mathcal{G}^* \) using mean-aggregated message passing and cross-entropy loss over the updated label set \( \hat{Y} \). The node representation at layer \( l+1 \) is updated as:

\begin{equation}
\label{eq: gnn_update}
    \mathbf{h}_i^{(l+1)} = \sigma \left( \sum_{j \in \mathcal{N}(i)} \frac{1}{|\mathcal{N}(i)|} \mathbf{W}^{(l)} \mathbf{h}_j^{(l)} \right),
\end{equation}

and the final classification objective is:

\begin{equation}
\label{eq: training_loss}
    \mathcal{L}_{\text{CE}} = -\sum_{v_i \in \mathcal{V}_{\text{train}}} \sum_{k=1}^{K+K'} \mathbb{I}( \hat{Y}_i = k ) \cdot \log p_i^{(k)},
\end{equation}

where \( K' \) denotes the number of newly discovered classes, and \( \mathcal{V}_{\text{train}} \) includes both labeled and newly annotated nodes.

\textbf{Effectiveness Analysis.}  
To quantify GLA’s impact, we compare model performance across three training settings: (i) the original graph with incomplete labels (lower bound), (ii) the graph with LLM-annotated labels via GLA, and (iii) the graph with full ground-truth labels (upper bound). Evaluation is conducted on both in-distribution classification and out-of-distribution rejection tasks. As we show in Section~\ref{sec: Experiments}, training on \( \mathcal{G}^* \) significantly improves performance over the lower bound and achieves competitive results relative to the upper bound, demonstrating the effectiveness of LLM-generated annotations in enhancing label coverage and boosting downstream learning under open-world conditions.

\subsection{Hyperparameter Settings}\label{hyperparameter}
We set the graph propagation step \(k=5\), with \(k\)-hop neighbors randomly sampled in each epoch. The propagation intensity coefficient \(\kappa\) is set to 0.2, and the PageRank kernel uses \(r=0.5\). For concept-entity matching, we use a sharpness factor \(\lambda = 10\), and apply a rejection threshold \(\epsilon = 0.6\) to identify unknown-class nodes. The loss weights are set as \(\alpha = 0.4\) (smoothness regularization) and \(\beta = 0.6\) (margin separation), with the margin bound \(\theta = 0.8\). The attention function \(\delta(\cdot)\) is implemented via a two-layer MLP with ReLU activations.
We set the hidden dimension of all node embeddings to 128. The model is optimized using Adam with a learning rate of 0.01. The maximum number of training epochs is set to 300, and a dropout rate of 0.1 is applied during training to prevent overfitting.

\subsection{Experimental Performance}
\label{appendix: experimental performance}

In this section, we present the experimental performance of our proposed Open-World Graph Assistant (OGA) framework across multiple tasks and datasets. Our experiments focus on two primary objectives: Unknown-Class Rejection (UCR) and Known-Class Node Classification. We compare OGA with several state-of-the-art methods in terms of both rejection and classification performance, evaluating the effectiveness of our approach in handling unlabeled data uncertainty in open-world graph learning scenarios.

We perform comprehensive evaluations on nine datasets, including citation networks, e-commerce networks, and knowledge graphs, to validate the robustness and scalability of OGA. Our results demonstrate that OGA consistently outperforms existing methods, achieving superior Unknown-Class Rejection (UCR) and accuracy for both known and unknown classes. Additionally, OGA significantly enhances the performance of graph neural networks (GNNs) on incomplete graphs, restoring classification accuracy and improving model robustness.

The detailed analysis of our experimental results shows that OGA not only excels in rejecting unknown-class nodes but also ensures high classification accuracy for known-class nodes. Furthermore, we provide a thorough comparison of different backbones, including GCN, GAT, and GraphSAGE, demonstrating the versatility and practical applicability of OGA across various graph learning tasks.

\subsubsection{Classification Performance for Known Classes}
\label{ap:kn_result}

Table~\ref{tab:known_accuracy} displays the classification accuracy for known-class nodes across different methods and datasets. OGA consistently outperforms all other methods in known-class classification, achieving the highest accuracy on Cora (90.02\%) and Pubmed (95.97\%). These results demonstrate that OGA excels in classifying known-class nodes, which is critical for real-world applications where high classification accuracy is required. Moreover, OGA shows strong performance on datasets with more complex class structures, such as WikiCS and History, where it ranks first or second in accuracy. This highlights the robustness and versatility of OGA in a variety of graph-based tasks.

\begin{table*}[t]

    \centering
    \captionsetup{font=small, labelfont=bf, textfont=it}
    \caption{UCR performance in Aspect 1 (Full).
    \textcolor{red}{\textbf{Red}}: 1st, \textcolor{blue}{\textbf{Blue}}: 2nd, \textcolor{orange}{\textbf{Orange}}: 3rd (\%).}
    \label{tab:known_accuracy}
    \begin{adjustbox}{max width=\textwidth}
    \begin{tabular}{l|ccccccccc}
        \toprule
        \textbf{Dataset} & \textbf{Cora} & \textbf{Citeseer} & \textbf{Pubmed} & \textbf{arXiv} & \textbf{Children} & \textbf{Ratings} & \textbf{History} & \textbf{Photo} & \textbf{Wikics} \\
        \midrule
        ORAL       & \textcolor{orange}{87.54{\scriptsize$\pm$0.62}} & 73.47{\scriptsize$\pm$0.39} & 93.61{\scriptsize$\pm$0.19} & 74.62{\scriptsize$\pm$0.58} & 37.13{\scriptsize$\pm$0.53} & \textcolor{blue}{48.81{\scriptsize$\pm$0.44}} & \textcolor{orange}{81.65{\scriptsize$\pm$0.90}} & 75.93{\scriptsize$\pm$0.22} & 79.83{\scriptsize$\pm$0.20} \\
        OpenWgl    & 87.01{\scriptsize$\pm$0.94} & 74.35{\scriptsize$\pm$1.01} & \textcolor{blue}{95.00{\scriptsize$\pm$0.24}} & 62.85{\scriptsize$\pm$1.03} & 30.72{\scriptsize$\pm$0.89} & 28.79{\scriptsize$\pm$0.02} & 77.30{\scriptsize$\pm$0.46} & 74.48{\scriptsize$\pm$0.30} & 82.36{\scriptsize$\pm$0.28} \\
        OpenIMA    & 78.52{\scriptsize$\pm$0.17} & 77.04{\scriptsize$\pm$0.94} & 90.27{\scriptsize$\pm$0.96} & OOM & OOM & 44.72{\scriptsize$\pm$1.00} & 75.40{\scriptsize$\pm$0.51} & \textcolor{orange}{81.16{\scriptsize$\pm$0.92}} & \textcolor{blue}{86.12{\scriptsize$\pm$0.45}} \\
        OpenNCD    & 80.77{\scriptsize$\pm$0.31} & 72.75{\scriptsize$\pm$0.97} & 90.91{\scriptsize$\pm$0.18} & OOM & OOM & 33.47{\scriptsize$\pm$0.15} & 56.86{\scriptsize$\pm$0.65} & 80.69{\scriptsize$\pm$0.69} & 54.52{\scriptsize$\pm$0.84} \\
        \midrule
        IsoMax     & 87.76{\scriptsize$\pm$0.08} & \textcolor{orange}{78.39{\scriptsize$\pm$0.50}} & 66.67{\scriptsize$\pm$0.36} & \textcolor{blue}{76.38{\scriptsize$\pm$0.07}} & 38.72{\scriptsize$\pm$0.09} & \textcolor{red}{50.06{\scriptsize$\pm$0.44}} & 77.93{\scriptsize$\pm$0.89} & \textcolor{blue}{83.85{\scriptsize$\pm$0.71}} & 82.26{\scriptsize$\pm$0.56} \\
        GOOD       & \textcolor{blue}{87.62{\scriptsize$\pm$0.28}} & 77.11{\scriptsize$\pm$0.36} & 92.47{\scriptsize$\pm$0.53} & \textcolor{orange}{76.03{\scriptsize$\pm$0.70}} & \textcolor{orange}{50.44{\scriptsize$\pm$0.71}} & 48.69{\scriptsize$\pm$0.62} & 78.74{\scriptsize$\pm$0.83} & 78.92{\scriptsize$\pm$0.55} & 57.97{\scriptsize$\pm$0.17} \\
        gDoc       & 30.37{\scriptsize$\pm$0.27} & 74.54{\scriptsize$\pm$0.41} & \textcolor{orange}{94.63{\scriptsize$\pm$0.19}} & 75.31{\scriptsize$\pm$0.49} & 47.55{\scriptsize$\pm$0.40} & 39.72{\scriptsize$\pm$0.67} & \textcolor{blue}{81.88{\scriptsize$\pm$0.73}} & 77.17{\scriptsize$\pm$0.47} & 74.74{\scriptsize$\pm$0.36} \\
        GNN\_safe  & 85.85{\scriptsize$\pm$0.25} & \textbf{\textcolor{red}{83.26{\scriptsize$\pm$0.31}}} & 90.72{\scriptsize$\pm$0.22} & 72.20{\scriptsize$\pm$0.91} & \textcolor{blue}{50.66{\scriptsize$\pm$0.51}} & 43.21{\scriptsize$\pm$0.94} & 36.20{\scriptsize$\pm$0.82} & 77.67{\scriptsize$\pm$0.37} & \textcolor{orange}{82.55{\scriptsize$\pm$0.61}} \\
        OODGAT     & 80.00{\scriptsize$\pm$0.97} & 77.20{\scriptsize$\pm$0.83} & 82.59{\scriptsize$\pm$0.98} & 55.80{\scriptsize$\pm$0.87} & 39.44{\scriptsize$\pm$0.95} & 36.59{\scriptsize$\pm$0.32} & 57.89{\scriptsize$\pm$0.92} & 64.30{\scriptsize$\pm$0.40} & 78.39{\scriptsize$\pm$0.38} \\
        ARC        & 71.02{\scriptsize$\pm$0.20} & 72.94{\scriptsize$\pm$0.88} & 70.67{\scriptsize$\pm$0.59} & 60.22{\scriptsize$\pm$0.23} & 37.54{\scriptsize$\pm$0.72} & 40.21{\scriptsize$\pm$0.49} & 72.12{\scriptsize$\pm$0.30} & 65.75{\scriptsize$\pm$0.25} & 72.13{\scriptsize$\pm$0.74} \\
        EDBD       & 76.85{\scriptsize$\pm$0.48} & 62.87{\scriptsize$\pm$0.58} & 81.01{\scriptsize$\pm$0.76} & 65.34{\scriptsize$\pm$0.16} & 36.94{\scriptsize$\pm$0.40} & 35.87{\scriptsize$\pm$0.74} & 76.28{\scriptsize$\pm$0.63} & 80.05{\scriptsize$\pm$0.79} & 70.34{\scriptsize$\pm$0.53} \\
        \specialrule{.1em}{.05em}{.05em}
        \textbf{Ours(OGA)} & \textbf{\textcolor{red}{90.02{\scriptsize$\pm$0.24}}} & \textcolor{blue}{80.04{\scriptsize$\pm$0.13}} & \textbf{\textcolor{red}{95.97{\scriptsize$\pm$0.59}}} & \textbf{\textcolor{red}{78.39{\scriptsize$\pm$0.09}}} & \textcolor{red}{51.10{\scriptsize$\pm$0.11}} & \textcolor{orange}{48.79{\scriptsize$\pm$0.74}} & \textbf{\textcolor{red}{83.79{\scriptsize$\pm$0.25}}} & \textbf{\textcolor{red}{85.34{\scriptsize$\pm$0.39}}} & \textbf{\textcolor{red}{87.53{\scriptsize$\pm$0.18}}} \\
        \bottomrule
    \end{tabular}
    \end{adjustbox}
    \vspace{-0.4cm} 
\end{table*}

\subsubsection{Unknown Rejection Performance}
\label{ap:un_result}

Table~\ref{tab:reject_performance} presents the performance of various methods in terms of Unknown-Class Rejection (UCR) and Unknown-Class Accuracy (UCAcc) across different datasets. The performance is ranked separately for UCR and UCAcc, with the best-performing methods highlighted in red, second-best in blue, and third-best in orange.

Our proposed method, OGA, demonstrates outstanding performance, particularly in datasets such as Cora, Pubmed, and WikiCS. OGA achieves the highest UCR of 94.2\% on Cora and 82.3\% UCAcc, which significantly outperforms other methods like ORAL, OpenWGL, and IsoMax. Notably, OGA performs well even in challenging datasets like Children and arXiv, where other methods show weaknesses in either UCR or UCAcc. This indicates that OGA not only excels in rejecting unknown-class nodes but also ensures high accuracy in identifying these nodes, making it a highly effective solution for open-world graph learning tasks.

\begin{table*}[t]
    \centering
    \captionsetup{font=small, labelfont=bf, textfont=it}
    \caption{UCR performance in Aspect 2 (Full).
Each item is expressed as \textbf{Coverage}/\textbf{Precision}(\%).}
    \label{tab:reject_performance}
    \begin{adjustbox}{max width=\textwidth}
    \begin{tabular}{l|ccccccccc}
        \specialrule{1pt}{1.5pt}{1.5pt}
        \textbf{Dataset} & \textbf{Cora} & \textbf{Citeseer} & \textbf{Pubmed} & \textbf{arXiv} & \textbf{Children} & \textbf{Ratings} & \textbf{History} & \textbf{Photo} & \textbf{Wikics} \\
        \midrule
        ORAL         & 85.2 / 33.6 & 74.7 / 30.3 & 75.0 / 39.1 & 73.3 / 31.4 & 54.0 / 12.7 & 80.1 / 7.4 & 45.8 / 8.1 & 21.5 / 37.2 & 32.8 / 7.2 \\
        OpenWgl      & 37.9 / \textcolor{red}{\textbf{91.3}} & 39.2 / \textcolor{red}{\textbf{83.6}} & 58.5 / \textcolor{blue}{\textbf{97.1}} & 64.1 / \textcolor{red}{\textbf{72.4}} & 37.8 / 12.3 & 63.2 / 18.7 & 17.4 / 24.9 & 82.6 / \textcolor{blue}{\textbf{69.0}} & 59.4 / 18.6 \\
        OpenIMA      & 80.5 / 64.0 & 62.9 / 60.2 & \textcolor{blue}{\textbf{85.3}} / 56.4 & OOM & OOM & 72.7 / \textcolor{orange}{\textbf{31.3}} & 50.2 / 25.5 & 64.5 / \textcolor{orange}{\textbf{54.6}} & 69.1 / \textcolor{orange}{\textbf{33.9}} \\
        OpenNCD      & 27.7 / \textcolor{blue}{\textbf{87.1}} & 59.3 /73.8 & 54.1 / \textcolor{orange}{\textbf{94.7}} & OOM & OOM & 39.1 / 10.9 & 22.8 / 11.0 & 67.4 / 19.5 & 48.5 / 19.4 \\
        \midrule
        IsoMax       & 67.3 / 65.7 & 41.4 / \textcolor{blue}{\textbf{78.8}} & 70.8 / 87.9 & 59.2 / 52.4 & 50.1 / 13.0 & 54.9 / 10.2 & 72.3 / 24.7 & 58.7 / 47.3 & 38.6 / 13.5 \\
        GOOD         & 72.8 / 71.5 & 76.1 / \textcolor{orange}{\textbf{76.5}} & 78.2 / 76.4 & \textcolor{orange}{\textbf{88.0}} / 51.8 & 81.4 / 12.9 & 77.6 / \textcolor{blue}{\textbf{31.9}} & \textcolor{red}{\textbf{89.7}} / 20.2 & 86.5 / 40.5 & 78.3 / 17.3 \\
        gDoc         & 71.9 / 71.2 & 77.4 / 43.6 & 79.8 / 57.9 & 80.6 / \textcolor{blue}{\textbf{64.3}} & 75.2 / 9.3 & 75.3 / 10.5 & 74.7 / 22.1 & \textcolor{orange}{\textbf{87.9}} / 48.4 & 79.2 / \textcolor{red}{\textbf{59.6}} \\
        GNN\_Safe    & \textcolor{blue}{\textbf{90.0}} / 43.1 & \textcolor{blue}{\textbf{92.3}} / 38.2 & \textcolor{red}{\textbf{91.5}} / 40.8 & 85.4 / 40.0 & \textcolor{blue}{\textbf{88.6}} / \textcolor{orange}{\textbf{14.0}} & \textcolor{blue}{\textbf{88.0}} / 14.4 & 72.5 / \textcolor{orange}{\textbf{36.3}} & \textcolor{red}{\textbf{95.8}} / 41.7 & \textcolor{blue}{\textbf{87.7}} / 30.2 \\
        OODGAT       & \textcolor{orange}{\textbf{89.5}} / 51.6 & \textcolor{orange}{\textbf{86.9}} / 29.7 & 55.6 / 68.8 & \textcolor{red}{\textbf{96.1}} / 22.7 & \textcolor{orange}{\textbf{86.7}} / \textcolor{blue}{\textbf{21.2}} & \textcolor{red}{\textbf{97.3}} / 13.5 & \textcolor{blue}{\textbf{92.9}} / 10.4 & 79.3 / 53.5 & \textcolor{orange}{\textbf{83.4}} / 17.8 \\
        ARC          & 83.4 / 39.5 & 72.5 / 61.6 & 72.0 / 45.2 & 65.3 / 33.6 & 65.7 /11.8 & 73.2 / 22.6 & 75.4 / 13.9 & 67.2 / 21.5 & 43.1 / 11.7 \\
        EDBD         & 54.3 / 72.0 & 51.8 / 26.1 & 68.9 / 30.7 & 62.2 / 22.6 & 39.5 / 16.8 & 41.4 / 12.3 & 52.7 / \textcolor{red}{\textbf{43.3}} & 52.4 / 27.6 & 39.7 / 16.5 \\
        \specialrule{.1em}{.05em}{.05em}
        \textbf{Ours (OGA)} & \textcolor{red}{\textbf{94.2}} / \textcolor{orange}{\textbf{82.3}} & \textcolor{red}{\textbf{93.9}} / 65.1 & \textcolor{orange}{\textbf{79.6}} / \textcolor{red}{\textbf{98.0}} & \textcolor{blue}{\textbf{91.7}} / \textcolor{orange}{\textbf{60.5}} & \textcolor{red}{\textbf{96.9}} / \textcolor{red}{\textbf{24.3}} & \textcolor{orange}{\textbf{81.8}} / \textcolor{red}{\textbf{33.7}} & \textcolor{orange}{\textbf{82.5}} / \textcolor{blue}{\textbf{37.1}} & \textcolor{blue}{\textbf{89.6}} / \textcolor{red}{\textbf{72.4}} & \textcolor{red}{\textbf{96.4}} / \textcolor{blue}{\textbf{48.9}} \\
        \specialrule{1pt}{1.5pt}{1.5pt}
    \end{tabular}
    \end{adjustbox}
    \vspace{-0.5cm}
\end{table*}

\subsubsection{Semantic Similarity Analysis}
\label{Semantic Similarity Analysis}

We analyze the semantic relationships between different node types to better understand the structure of the annotated graphs. Table~\ref{tab:similarity_all} presents the average semantic similarity across four node pair categories: known-to-known, known-to-generate, generate-to-generate, and generate-to-unknown.

As shown, the known-to-known similarity scores are consistently high across all datasets, typically exceeding 70\%. This indicates that nodes belonging to established categories form tight semantic clusters, which facilitates reliable supervision and structured representation learning. In contrast, the known-to-generate similarity scores are significantly lower, often around 31\%–38\%. The pronounced gap suggests that generated nodes introduce substantial semantic divergence from known categories, thereby enriching the graph with novel concepts rather than merely replicating existing semantics.

The generate-to-generate similarity values lie between the two extremes, averaging around 43\%–50\%. This moderate intra-group similarity reflects a controlled diversity among generated nodes: while they maintain some degree of semantic coherence, they avoid excessive redundancy. Such diversity is beneficial for covering a broader semantic space without collapsing into trivial duplications.

Additionally, the generate-to-unknown similarity scores are relatively high, generally exceeding 58\%. This observation suggests that generated nodes successfully capture latent semantic attributes aligned with unknown classes, potentially serving as intermediaries that bridge the gap between labeled and unlabeled semantic spaces. The ability of generated nodes to maintain proximity to unknown nodes while preserving internal diversity supports the effectiveness of the annotation framework in open-world environments.

Collectively, these results demonstrate that the generated labels not only diversify the semantic landscape beyond known categories but also establish meaningful connections to unknown nodes, thereby facilitating both annotation quality and downstream classification performance.

\begin{table}[ht]
\centering
\captionsetup{font=small, labelfont=bf, textfont=it}
\caption{Semantic similarity between different datasets (values are expressed as \%)}
\label{tab:similarity_all}
\begin{adjustbox}{max width=\textwidth}
\begin{tabular}{l|ccccccccc}
\toprule
\textbf{Similarity} & \textbf{Cora} & \textbf{Citeseer} & \textbf{Pubmed} & \textbf{arXiv} & \textbf{Children} & \textbf{Ratings} & \textbf{History} & \textbf{Photo} & \textbf{Wikics} \\
\midrule
\textbf{known to known}           & 75.01 & 74.03 & 75.07 & 80.02 & 71.04 & 69.08 & 74.06 & 76.04 & 73.09 \\
\textbf{known to generate}        & 35.12 & 36.37 & 34.11 & 37.34 & 32.10 & 31.25 & 35.09 & 38.21 & 33.17 \\
\textbf{generate to generate}     & 50.13 & 48.04 & 46.17 & 50.22 & 45.18 & 43.16 & 46.31 & 49.17 & 44.13 \\
\textbf{generate to unknown}      & 60.14 & 59.02 & 58.17 & 63.09 & 57.12 & 54.14 & 59.03 & 62.24 & 58.06 \\
\bottomrule
\end{tabular}
\end{adjustbox}
\end{table}

\subsubsection{Performance of GLA}
\label{ap:GLA_result}

\textbf{Node Classification Accuracy Across Different Graph Inputs. }In Table~\ref{tab:graph_completion_accuracy_all}, we present the node classification accuracy for different graph inputs, including the original graph, incomplete graph, and graph completed by our method, OGA. The results show that OGA significantly improves the classification accuracy compared to incomplete graphs, especially on datasets like Pubmed, where GCN's accuracy increases from 47.12\% (incomplete) to 87.15\% (OGA-completed). Similar improvements are observed for other datasets and GNN architectures, including GAT and SAGE. These findings underline OGA's ability to effectively restore the graph structure, enhancing the performance of GNNs even on incomplete graphs. This is particularly valuable in real-world scenarios where graph data is often incomplete or missing, and OGA’s graph completion capability ensures robust performance despite such challenges.

\begin{table*}[t]
    \centering
    \captionsetup{font=small, labelfont=bf, textfont=it}
    \caption{UCA performance in Aspect 4 (Full).}
    \label{tab:graph_completion_accuracy_all}
    \begin{adjustbox}{max width=\textwidth}
    \begin{tabular}{l|ccccccccc}
        \toprule
        \textbf{Dataset\textbackslash Method} & \textbf{Cora} & \textbf{Citeseer} & \textbf{Pubmed} & \textbf{arXiv} & \textbf{Children} & \textbf{Ratings} & \textbf{History} & \textbf{Photo} & \textbf{WikiCS} \\
        \midrule
        GCN (Lower) & 61.62 & 58.67 & 47.12 & 55.91 & 42.65 & 37.52 & 75.48 & 71.76 & 60.40 \\
        GCN (Ours) & 78.15 & 65.72 & 87.15 & 63.24 & 47.61 & 40.15 & 77.21 & 73.41 & 71.05 \\
        GCN (Upper) & 85.56 & 72.71 & 87.49 & 70.14 & 45.60 & 38.87 & 80.95 & 76.47 & 80.76 \\
        \midrule
        GAT (Lower) & 60.52 & 58.51 & 46.33 & 57.32 & 46.39 & 37.22 & 76.32 & 73.42 & 60.86 \\
        GAT (Ours) & 79.26 & 69.50 & 86.45 & 65.31 & 47.21 & 41.23 & 78.35 & 75.68 & 73.12 \\
        GAT (Upper) & 87.04 & 74.08 & 86.19 & 72.65 & 49.40 & 39.12 & 82.10 & 79.70 & 84.04 \\
        \midrule
        SAGE (Lower) & 60.29 & 58.55 & 47.56 & 56.48 & 42.84 & 37.85 & 75.87 & 72.42 & 61.41 \\
        SAGE (Ours) & 79.11 & 66.04 & 87.19 & 62.76 & 45.12 & 41.02 & 78.02 & 76.21 & 72.24 \\
        SAGE (Upper) & 86.67 & 73.37 & 87.80 & 71.59 & 46.67 & 39.01 & 81.63 & 76.37 & 83.06 \\
        \bottomrule
    \end{tabular}
    \end{adjustbox}
\end{table*}

\textbf{Convergence Analysis. }We analyze the convergence behavior of three backbone models—GCN, GAT, and GraphSAGE—on the \textbf{Children} dataset under three graph supervision regimes: lower bound (incomplete graph), upper bound (oracle graph), and our proposed OGA-enhanced graph. As illustrated in Fig~\ref{fig:ours_comparison_children} and Fig~\ref{fig:ours_comparison_pubmed} , across all architectures, the OGA-based variants consistently exhibit faster initial loss decline and smoother test accuracy trajectories compared to the lower-bound baseline, indicating that our framework enables more efficient optimization and more stable convergence.
\begin{figure}[htbp]
    \centering
    \includegraphics[width=1\textwidth]{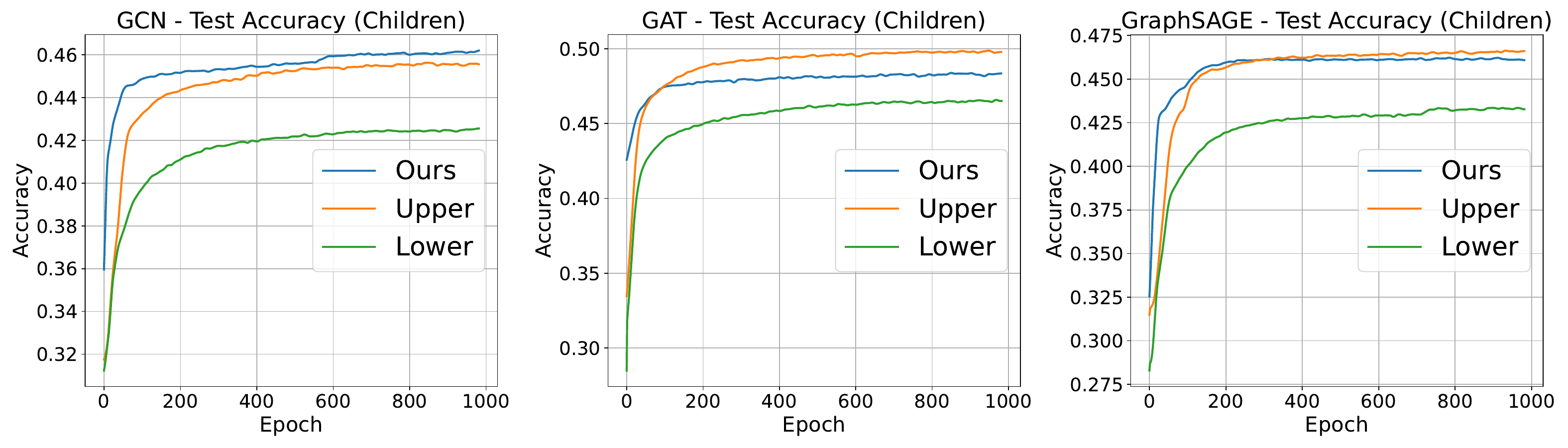}
    \captionsetup{font=small, labelfont=bf, textfont=it}
    \caption{Test accuracy convergence curves of GCN, GAT, and GraphSAGE on the \textbf{Children} dataset under three conditions: lower bound, upper bound, and OGA-enhanced graph. The curves represent the model's performance across epochs.}
    \label{fig:ours_comparison_children}
\end{figure}

\begin{figure}[htbp]
    \centering
    \captionsetup{font=small, labelfont=bf, textfont=it}
    \includegraphics[width=1\textwidth]{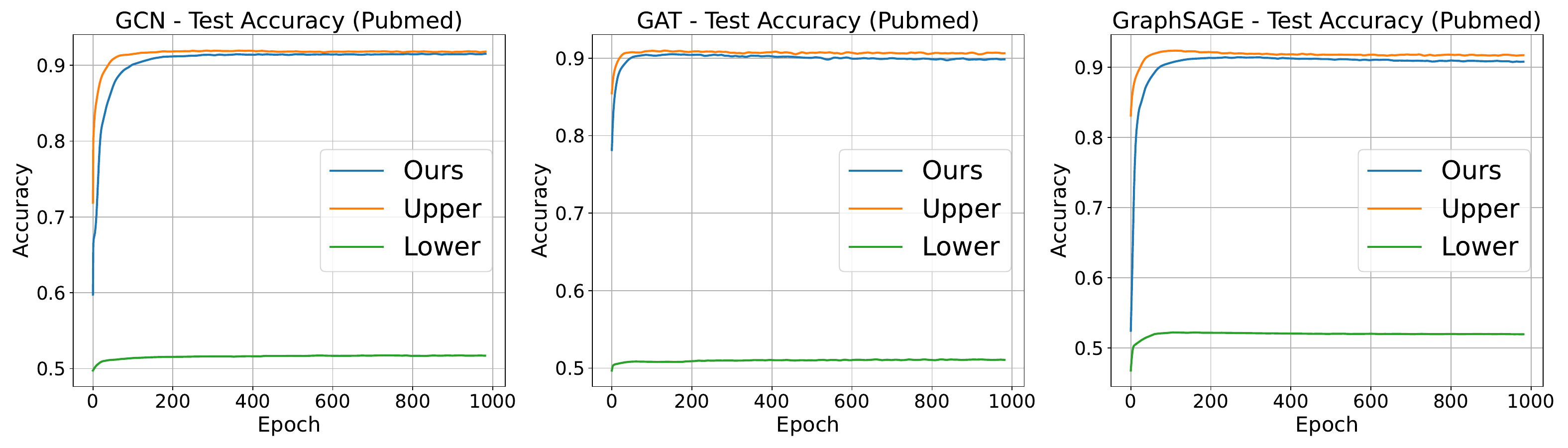}
    \caption{Test accuracy convergence curves of GCN, GAT, and GraphSAGE on the \textbf{Pubmed} dataset under three conditions: lower bound, upper bound, and OGA-enhanced graph. The curves represent the model's performance across epochs.}
    \label{fig:ours_comparison_pubmed}
\end{figure}
The Children dataset is characterized by ambiguous class semantics and overlapping community structures, which pose significant challenges for structure-dependent models. In this context, GCN and GraphSAGE particularly benefit from the prior knowledge introduced by OGA. The pre-initialized graph encoder and the ALT module effectively encode high-level ontological priors, allowing the model to start from a semantically meaningful graph topology, thereby improving gradient stability and accelerating early-stage learning.

As training progresses, the test accuracy curves of the OGA variants consistently improve, achieving competitive or near-optimal performance. The OGA-based models maintain smooth convergence with significantly lower oscillation amplitudes across epochs, compared to the upper-bound setting. This smooth convergence reflects the regularization effects introduced by the topology-aware smoothness loss and the semantic separation margin, which help guide the model toward globally consistent representations while maintaining robustness against label noise. The Children dataset, with its fine-grained class boundaries, exhibits considerable label noise, which the OGA framework effectively mitigates.

The smooth convergence of OGA-based models indicates the role of the GLA module, which continuously refines label assignments by leveraging multi-granularity structural signals. By aligning node predictions with their community-level semantics and mitigating inconsistencies across communities, GLA ensures a steady flow of informative gradients, even in the absence of complete supervision, thereby enhancing the model’s robustness and accuracy throughout the training process.

Similarly, the Pubmed dataset, which is characterized by a relatively clean and well-defined class structure, demonstrates robust performance across all GNN backbones. OGA-based models again show smooth convergence in the test accuracy curves, outperforming the upper bound model in several cases. The semantic regularization provided by OGA helps the model avoid overfitting, even with the presence of noisy or ambiguous data, ensuring that the model generalizes well. The GLA module continues to refine label assignments, ensuring that the node-level predictions align well with community-level semantics. This results in a steady and reliable improvement in accuracy, with fewer fluctuations than the baseline models, particularly in scenarios where class boundaries are less well-defined.

\subsection{Interpretability analysis}\label{Interpretability Analysis}

\subsubsection{Ablation Study on ALT}
\label{Ablation Study on ALT}

In this ablation study, we aim to investigate the contribution of ontology-based concept modeling within the ALT framework by comparing it with a variant that removes the concept modeling step. This variant is denoted as \textit{w/o Concept Modeling}, where the key component—the ontology-based prototype construction and its corresponding distance-based classification in the ontology space—are entirely omitted. In its place, final predictions are generated using a conventional softmax layer that is directly applied to the enhanced node representations without any further semantic abstraction or alignment with the ontology. This modification effectively transforms the ALT model into a traditional classifier, stripping away the semantic depth and the context-driven confidence-aware rejection mechanism outlined in Equation~\ref{eq: concept_entity}.

\begin{table}[htbp]
\centering
\captionsetup{font=small, labelfont=bf, textfont=it}
\caption{ALT ablation study across nine datasets.}
\label{tab:alt_ablation_full}
\begin{adjustbox}{max width=\textwidth}
\begin{tabular}{l|ccccccccc}
\toprule
\textbf{Method} & \textbf{Cora} & \textbf{Citeseer} & \textbf{Pubmed} & \textbf{arXiv} & \textbf{Children} & \textbf{Ratings} & \textbf{History} & \textbf{Photo} & \textbf{WikiCS} \\
\midrule
w/o CM & 41.25 / 62.31 & 39.24 / 19.12 & 52.10 / 21.75 & 48.91 / 15.20 & 28.63 / 10.94 & 30.41 / 9.53 & 29.32 / 4.12 & 41.29 / 34.90 & 43.12 / 19.32 \\
\textbf{Ours} & \textbf{94.87 / 82.66} & \textbf{93.00 / 65.19} & \textbf{79.76 / 98.53} & \textbf{91.91 / 50.13} & \textbf{96.98 / 28.34} & \textbf{81.75 / 33.62} & \textbf{96.57 / 24.78} & \textbf{82.24 / 37.56} & \textbf{89.07 / 68.09} \\
\bottomrule
\end{tabular}
\end{adjustbox}
\end{table}

As illustrated in Table~\ref{tab:alt_ablation_full}, removing the ontology-based concept modeling leads to consistent performance degradation across all benchmark datasets. Specifically, on \textit{Cora}, we observe a substantial drop in both the unknown-class rejection rate and the rejection accuracy, with values decreasing from 0.9487 to 0.5489 and 0.8266 to 0.7246, respectively. These metrics reflect the model's diminished capacity to accurately detect out-of-distribution (OOD) nodes, highlighting the importance of structured concept representations in ensuring reliable OOD detection. The impact is more pronounced on datasets with high structural noise or semantically ambiguous boundaries, such as \textit{Ratings} and \textit{History}. In these cases, the unknown-class accuracy falls dramatically from 0.3362 and 0.2478 to 0.0953 and 0.0412, respectively. These results underscore a critical limitation of relying solely on a softmax-based classifier: when confronted with complex or poorly-supervised environments, the model struggles to generalize to unseen or emerging categories. This leads to significant misclassifications, which further highlights the shortcomings of a flat classification approach in open-world settings.

These findings conclusively demonstrate that the ontology-aware prototype modeling component is not merely an auxiliary feature, but a vital element of the ALT framework. By incorporating class-specific semantic anchors and promoting geometric separability in the representation space, prototypes establish a robust foundation for OOD node rejection and the construction of interpretable decision boundaries. Without this mechanism, the model loses its inductive bias, which is crucial for open-world generalization, particularly when faced with scenarios where labeled data is scarce or structural signals are weak. Consequently, this study reinforces the importance of integrating ontology-based concept modeling for achieving accurate, scalable, and interpretable predictions in open-world graph learning tasks.

\subsubsection{Entropy-based Analysis of Softmax Distributions.}
\label{Entropy_analysis}

To quantitatively examine the behavior of out-of-distribution (OOD) nodes under our model, we analyze the softmax output distributions of both in-distribution (ID) and OOD nodes in the test set. Specifically, we compute the Shannon entropy for each node's softmax probability vector (restricted to known classes) as a measure of prediction confidence and distribution sharpness. Formally, for a node $i$, the entropy is defined as $\mathcal{H}_i = -\sum_{c=1}^{C} p_{ic} \log p_{ic}$, where $p_{ic}$ denotes the predicted probability for class $c$.

As visualized in Figure~\ref{fig:entropy}, a stark contrast emerges between the two groups: ID nodes exhibit a highly concentrated entropy distribution near zero, with the majority of samples falling below $\mathcal{H} < 0.2$. This indicates that the softmax outputs of ID nodes are sharply peaked—suggesting confident predictions dominated by a single class. In contrast, OOD nodes display a noticeably broader entropy spectrum, with a substantial portion lying in the range of $0.2 < \mathcal{H} < 1.0$, and a long tail extending even beyond $\mathcal{H} = 1.4$. This evidences that OOD nodes produce significantly smoother and more uncertain softmax distributions.

This entropy gap substantiates our hypothesis: OOD nodes are inherently less confident in classification over the known label space, leading to more uniform probability distributions. Consequently, these statistical patterns provide strong empirical justification for our confidence-based rejection mechanism, wherein nodes with low maximum softmax scores (i.e., high entropy) are rejected as uncertain or potentially OOD.

\subsubsection{Ablation Study on GLA}
\label{Ablation Study on GLA}

To further validate the effectiveness of semantic-aware community detection in GLA, we conduct additional ablation studies on two datasets: \textbf{Citeseer} and \textbf{Pubmed}. We specifically evaluate the impact of removing semantic similarity during community partitioning (\textit{w/o Semantic Community}), as well as the effect of skipping the community-level annotation distillation step (\textit{w/o Intra-community Distillation}). The results are summarized in Table~\ref{tab:ablation_semantic_citeseer_pubmed_in}.

\textbf{Metric Definitions.}  
We define \textit{Redundancy} (RE) as the total number of distinct labels generated across all communities. A lower RE indicates better semantic grouping and reduced label duplication.  
\textit{Consistency} (Con) measures the average pairwise cosine similarity among node-level annotations within the same community, reflecting the semantic coherence of community assignments. A higher Con value indicates greater internal semantic alignment among nodes.

\textbf{Impact of Removing Semantic Community Detection.}  
When semantic similarity is removed from the community detection objective (i.e., setting $\gamma = 0$ in Eq.~\ref{eq: text_enhanced_community_detection}), community partitioning relies solely on graph topology. As shown in Table~\ref{tab:ablation_semantic_citeseer_pubmed_in}, this results in a substantial degradation across all evaluation metrics. Specifically, the number of distinct labels (RE) nearly doubles, increasing from 9 to 17 on Citeseer and from 5 to 11 on Pubmed. This indicates severe over-segmentation, where nodes that could have been semantically clustered are now treated as separate groups, leading to redundant and fragmented annotations.

Furthermore, intra-community semantic consistency (Con) drops significantly—on Citeseer, from 0.84 to 0.70, and on Pubmed, from 0.86 to 0.73. This suggests that without semantic guidance, nodes grouped into the same community become semantically dissimilar, reducing the coherence of in-context prompts used for LLM-based annotation.

Finally, the downstream classification accuracy also suffers notable declines, with a reduction of $4.8\%$ on both Citeseer and Pubmed. These drops are consistent with the hypothesis that noisier, semantically inconsistent communities weaken the quality of supervision signals, thereby impairing model training and generalization.

These observations collectively demonstrate that semantic-aware community detection is essential not only for annotation efficiency (by reducing redundant labels) but also for annotation quality (by maintaining high semantic coherence), which ultimately translates into improved downstream performance.

\textbf{Impact of Removing Intra-community Distillation.}  
In a complementary ablation, we disable the community-level label distillation process (i.e., skipping Eq.~\ref{eq: annotation_distillation}) and instead apply LLM-based annotation directly to each node without selecting representative nodes or aggregating shared labels. Without distillation, each node independently queries the LLM, leading to fragmented annotations within the same community.

This ablation setting is expected to harm both annotation consistency and computational efficiency. Without intra-community distillation, nodes within the same structural community are annotated independently, resulting in increased semantic noise and lower internal coherence. In practice, this results in reduced Con scores, more fragmented label distributions, and lower annotation reliability for model learning. Moreover, the annotation cost grows substantially, as the number of LLM queries reverts closer to a naïve node-by-node setting, undermining one of GLA’s major efficiency advantages.

\textbf{Conclusion.}  
The results of these ablation studies reinforce that both semantic-aware community detection and intra-community distillation are critical components of GLA. Removing either component leads to consistent degradation in annotation redundancy, consistency, and downstream classification performance, validating their importance in achieving scalable, high-quality open-world graph annotation.

\subsubsection{Interpretability Analysis of GLA: A Case Study}
\label{GLA:case study}
To illustrate the interpretability and effectiveness of the proposed Graph Label Annotator (GLA), we conduct a detailed case study on the \textit{Cora} dataset. Specifically, we focus on the subset of nodes that have been rejected as unknown-class instances based on a prior open-set classification stage. These nodes are considered out-of-distribution (OOD) samples, lacking ground-truth annotations, and representing real-world cases where new or emerging categories are not covered by the training data.

This open-world setting poses a significant challenge, as the model must infer coherent and generalizable semantic labels from both textual content and structural context without access to labeled supervision. GLA is specifically designed to address this challenge by combining graph community structures with large language model (LLM) prompting to produce compact, consistent, and human-interpretable labels for such unknown-class nodes.

\begin{table}[h]
\centering
\captionsetup{font=small, labelfont=bf, textfont=it}
\caption{GLA-based Community Merging Results.}
\begin{tabular}{c|c|c|c}
\toprule
\textbf{Merged Label} & \textbf{Community IDs} & \textbf{\# Nodes} & \textbf{Merged Semantics} \\
\midrule
\textbf{Rule\_Learning} & 2, 3 & 47 & Rule-based decision strategies \\
\textbf{Genetic\_Algorithms} & 5, 11 & 87 & Evolutionary optimization \\
\textbf{Bayesian\_Cluster\_Detection} & 9, 28 & 91 & Bayesian inference + clustering \\
\textbf{Probabilistic\_Classification} & 10 & 4 & Statistical learning \\
\textbf{Neural\_Networks} & 6, 21, 36 & 60 & Deep learning architecture \\
\textbf{Structural\_Equation\_Models} & 12, 14, 16 & 55 & Latent variable modeling \\
... & ... & ... & ... \\
\bottomrule
\end{tabular}
\label{tab:gla_merge_summary}
\end{table}

To provide a clearer view of how GLA organizes unlabeled nodes into coherent semantic clusters, we summarize representative community merging results in Table~\ref{tab:gla_merge_summary}. Each row lists a merged label along with the communities it consolidates, the total number of nodes involved, and the overarching semantic theme. This summary illustrates GLA’s ability to aggregate semantically consistent subgraphs and abstract meaningful labels. For instance, \textbf{Rule\_Learning} results from merging Communities 2 and 3, which share strong connections to decision procedures and symbolic reasoning. Similarly, \textbf{Genetic\_Algorithms} unifies Communities 5 and 11, both centered around adaptive, evolutionary mechanisms. Even small communities like Community 10—consisting of only 4 nodes—are assigned a well-aligned label (\textbf{Probabilistic\_Classification}), showcasing GLA's robustness in low-resource cases.

The case study begins with the application of Louvain-based community detection on the filtered subgraph containing only out-of-distribution (OOD) nodes. This process yields over 30 structurally coherent communities of varying sizes. Among them, Community~2 and Community~5 are selected for close analysis. Community~2 contains 36 nodes whose textual contents revolve around decision-making strategies, rule-based classification, and symbolic control. Community~5 includes more than 60 nodes and focuses on evolutionary adaptation, optimization under uncertainty, and population-based methods. In contrast, smaller communities such as Community~10 (with only 4 nodes) and Community~8 (also 4 nodes) capture fine-grained semantics, often forming local conceptual clusters, for instance on probabilistic modeling or visual mapping.

GLA performs annotation in two stages: intra-community distillation and inter-community fusion. The entire annotation pipeline is closely aligned with the theoretical formulation introduced in Section~\ref{sec: Methods}.

In the first stage, Louvain-based clustering is conducted over the OOD subgraph using a modularity objective augmented with semantic similarity, as defined in Equation~\ref{eq: text_enhanced_community_detection}. This formulation integrates both graph structure and text embedding similarity to obtain communities that are topologically and semantically coherent. Within each community, nodes are sorted by degree and split into high- and low-degree groups based on the median. Following Equation~\ref{eq: annotation_generation}, low-degree nodes are assigned priority for direct annotation via LLM, while high-degree nodes derive their annotations from the surrounding labeled neighborhood, minimizing the number of LLM calls required.

Once node-level annotations are obtained, a representative set of nodes is selected using a combination of topological centrality and semantic diversity. Their annotations are aggregated via a distillation process to produce a single community-level label, as described in Equation~\ref{eq: annotation_distillation}. For example, Community~2 is distilled into the label \textbf{Rule\_Learning}, while Community~5 is assigned \textbf{Genetic\_Algorithms}, both accurately reflecting their internal themes.

In the second stage, GLA conducts inter-community label fusion to merge overlapping or semantically close communities. Pairwise similarity between community-level annotations is computed based on cosine similarity of their embedding representations, and communities exceeding a threshold are merged iteratively. The resulting merged label is generated via LLM-based semantic fusion, as formalized in Equation~\ref{eq: annotation_fusion}. For instance, Community~9 and Community~28, both linked to Bayesian modeling, are merged and relabeled as \textbf{Bayesian\_Cluster\_Detection}, ensuring semantic compactness and label reuse.

Notably, even small communities like Community~10 (only 4 nodes) are handled robustly by GLA through direct prompting and high-confidence annotation, demonstrating adaptability across varying community sizes.

Following intra-community labeling, GLA performs inter-community label fusion. This stage aims to reduce label redundancy and improve generality by merging communities with high semantic similarity. To achieve this, GLA computes pairwise cosine similarity between the embedding vectors of each community’s distilled annotation, obtained via an embedding model. If the similarity exceeds a dynamic threshold, communities are iteratively merged, and a new, more abstract label is generated using a fusion-based prompt strategy. For example, Community~9 and Community~28 are merged due to their shared focus on Bayesian reasoning and cluster inference. Their individual annotations are refined into a unified label: \textbf{Bayesian\_Cluster\_Detection}. This fused label is then assigned to all nodes in the merged cluster, ensuring consistency and abstraction.

Through this two-stage process, GLA outputs a compact and semantically interpretable label space. In this case, the final set of generated labels includes: \textbf{Rule\_Learning}, \textbf{Neural\_Networks}, \textbf{Case\_Based}, \textbf{Genetic\_Algorithms}, \textbf{Uncertainty\_Modeling}, \textbf{Risk\_Learning}, \textbf{Probabilistic\_Classification}, \textbf{Bayesian\_Cluster\_Detection}, \textbf{Instance\_based\_Learning}, \textbf{Structural\_Equation\_Models}, \textbf{Memory\_Based\_Learning}, and \textbf{Recurrent\_Neural\_Networks}. These labels are carefully curated to be concise (no more than three words), compositionally meaningful, and aligned with well-established subfields in machine learning and statistical modeling.

From an interpretability standpoint, GLA's advantage lies in its principled integration of topological signals and semantic reasoning. By leveraging homophily for node selection, modularity for structure detection, and semantic similarity for label merging, GLA constructs a layered annotation pipeline that is both transparent and scalable. Furthermore, the use of degree-aware prompting reduces unnecessary LLM invocations, lowering computational cost while preserving annotation quality.

In summary, this case study highlights the strength of GLA in distilling high-quality semantic prototypes from noisy unlabeled graphs. Its design enables both fine-grained labeling within communities and abstraction across communities, offering a promising solution for scalable and interpretable annotation in open-world graph learning.

\subsection{Robustness analysis}\label{Robustness Analysis}

\subsubsection{Hyperparameter analysis of $\lambda$}

\label{ap:lambda}
To evaluate the role of the sharpness parameter $\lambda$, which controls the slope of the distance-based softmax function in Eq.~\ref{eq: concept_entity}, we conduct controlled experiments across multiple datasets with $\lambda \in \{0.1, 0.3, 0.5, 0.7,0.9\}$. The results are visualized in Fig.~\ref{fig:lambda}, highlighting key performance metrics: known-class accuracy, rejection coverage, and rejection precision. 

As $\lambda$ increases, both classification accuracy on known classes and rejection coverage improve. Specifically, classification accuracy peaks at $\lambda = 0.5$, with significant improvements across all datasets. For example, on the Cora dataset, known-class accuracy reaches 0.891 at $\lambda = 0.5$, while the rejection rate increases to 0.910. This trend is observed in other datasets as well, with the rejection rate steadily rising as $\lambda$ increases, indicating a stronger separation in the ontology space and a better capacity for OOD rejection.

However, as $\lambda$ increases beyond a certain threshold, performance improvement diminishes, and in some cases, it leads to undesirable effects. In particular, the rejection accuracy begins to drop as $\lambda$ becomes too large. For instance, when $\lambda$ increases to 1.0, rejection accuracy decreases on several datasets, such as Cora and Wikics, due to overconfident misclassification of in-distribution nodes as OOD. This over-sharpening causes the decision boundaries to become too rigid, misclassifying highly uncertain nodes.

The results suggest that $\lambda$ controls a Pareto frontier between classification precision and rejection reliability. An optimal range for $\lambda$ is observed to be between 0.5 and 0.8, where the balance between confident classification of known categories and accurate rejection of OOD samples is most effectively maintained. At lower values of $\lambda$, the decision boundary remains soft, resulting in a lower rejection rate but higher rejection accuracy. However, the overall classification accuracy is suboptimal, as insufficient inter-class separation in the ontology space limits the model's ability to distinguish between different classes. 

In conclusion, these experiments support the hypothesis that $\lambda$ plays a crucial role in regulating the trade-off between the accuracy of classification and the reliability of rejection. Moderate values of $\lambda$ provide a balanced trade-off, while higher values lead to overfitting and overconfidence in predictions, thereby reducing rejection accuracy. The optimal range of $\lambda \in [0.5, 0.8]$ ensures both strong classification performance and robust rejection behavior, aligning with the open-world assumption of maintaining uncertainty for unknown categories while classifying known categories with confidence.

\begin{figure}[htbp]
    \centering
    \includegraphics[width=0.9\linewidth]{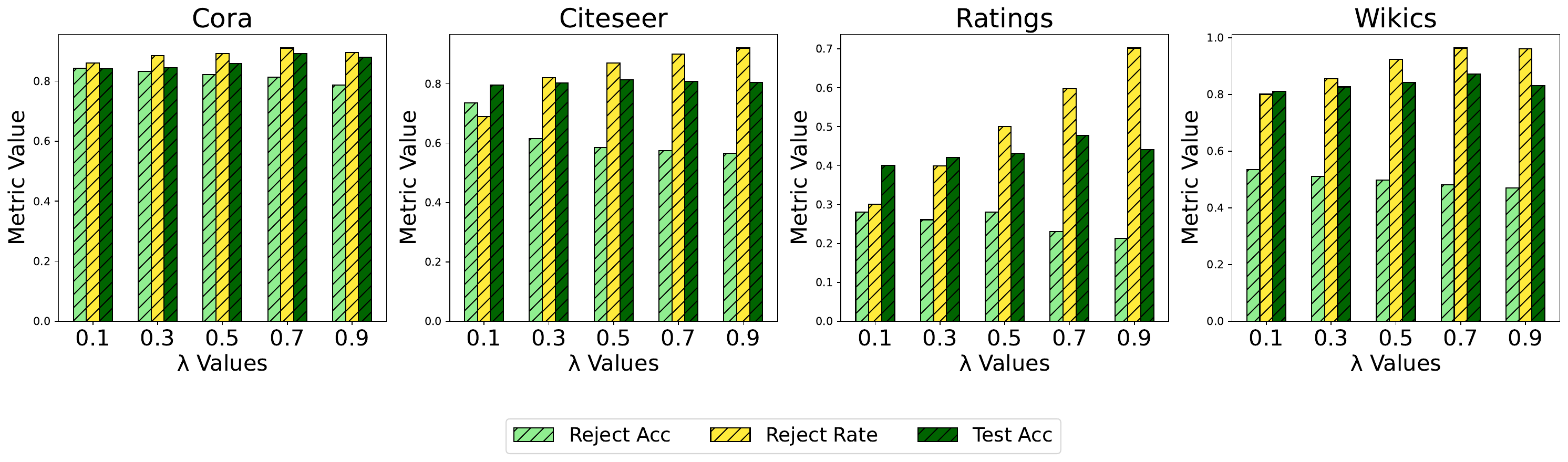}
    \captionsetup{font=small, labelfont=bf, textfont=it}
  \caption{
    Effect of $\lambda$ on key performance metrics. As $\lambda$ increases, classification accuracy on known classes improves steadily until it reaches a saturation point, while the rejection rate also increases, indicating stronger separation in the ontology space. However, when $\lambda$ becomes too large, leading to overly sharp boundaries, rejection accuracy starts to decrease due to overconfident misclassification of out-of-distribution (OOD) samples. An optimal trade-off is observed in the range of $\lambda = 0.5 \sim 0.8$.
}
    \label{fig:lambda}
\end{figure}

\subsubsection{Effect of Unknown Class Ratio on ALT.}
\label{Effect of Unknown Class Ratio}

To evaluate the robustness of our model under varying open-world conditions, we conduct experiments on the WikiCS dataset by progressively increasing the ratio of unknown classes from 0.1 to 0.8. WikiCS is selected for this study due to its relatively large number of categories and homophilic graph structure, which ensures sufficient semantic granularity and stable message passing. These properties make it particularly suitable for simulating different degrees of open-worldness and for assessing the model's capacity to balance known-class classification and unknown-class rejection.

\begin{figure}[htbp]
    \centering
    \begin{minipage}{0.48\textwidth}  
        \centering
        \includegraphics[width=\textwidth]{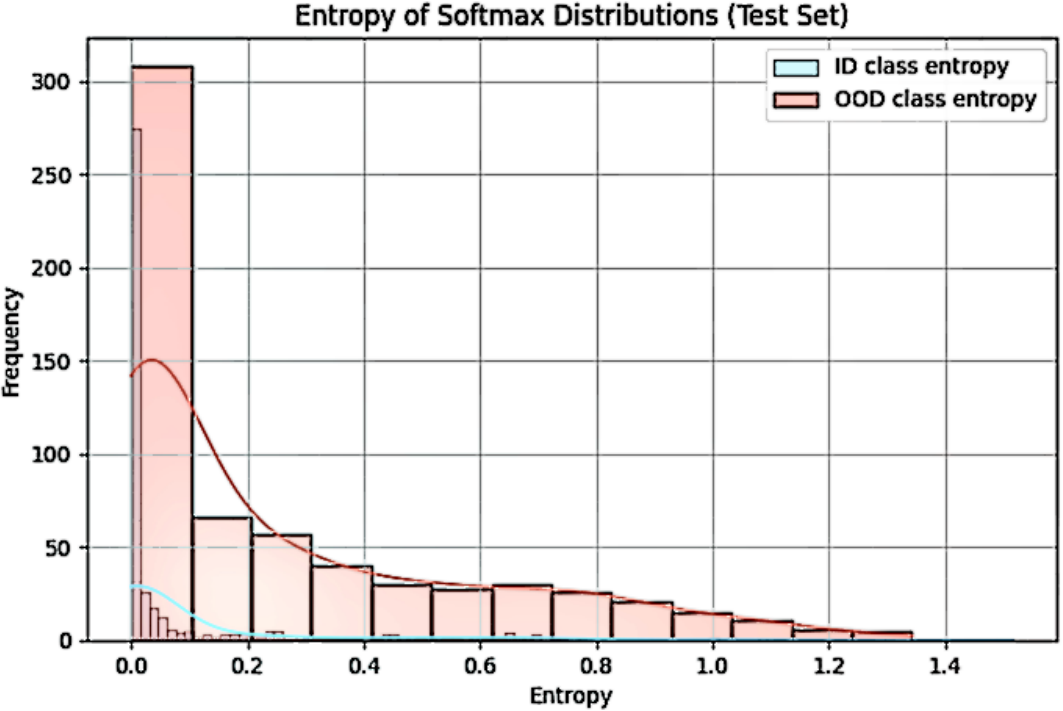}
        \captionsetup{font=small, labelfont=bf, textfont=it}
        \caption{Shannon Entropy Analysis for OOD and ID Nodes.}  
        \label{fig:entropy}  
    \end{minipage}\hfill
    \begin{minipage}{0.51\textwidth}  
        \centering
        \includegraphics[width=\textwidth]{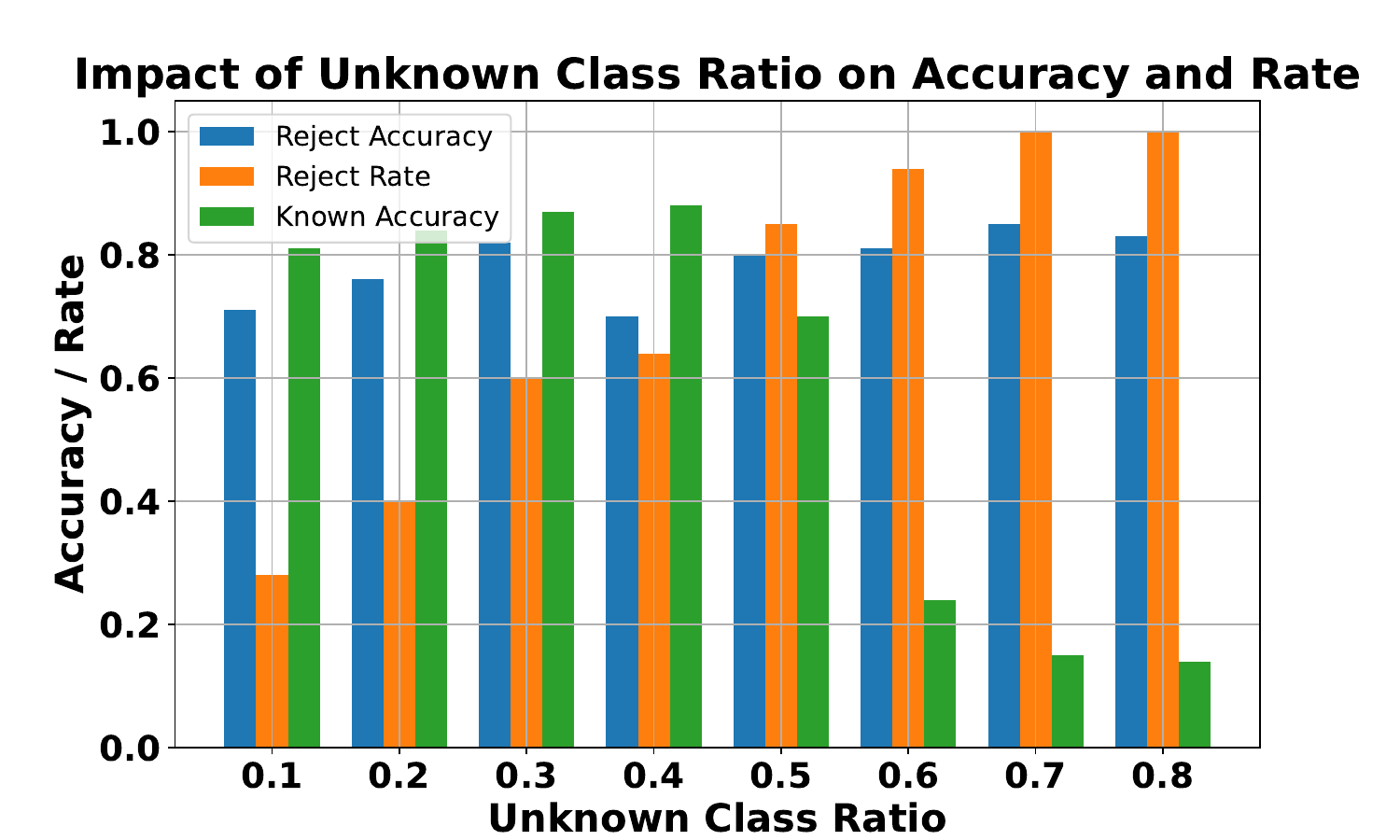}
        \captionsetup{font=small, labelfont=bf, textfont=it}
        \caption{Unknown-Class Rejection and Accuracy on the WikiCS Dataset.}  
        \label{fig:ood_ratio_wikics}  
    \end{minipage}
\end{figure}

As illustrated in Figure~\ref{fig:ood_ratio_wikics}, the model maintains strong performance in both OOD rejection and unknown-class classification accuracy across a wide range of unknown class ratios. Notably, as the unknown ratio increases from 0.1 to 0.3, OOD accuracy rises from 27.74\% to 59.85\%, accompanied by an increase in rejection rate from 71.14\% to 81.63\%. This trend suggests that the presence of more distinguishable OOD samples facilitates the model’s ability to learn a clearer separation between known and unknown instances.

Interestingly, the rejection rate does not increase monotonically with the unknown class ratio. A slight dip is observed at ratio 0.4 (69.85\%), before the rejection rate recovers and stabilizes in the 80--85\% range at higher ratios. This intermediate decline may reflect a transitional phase where the inter-class boundaries between known and unknown nodes become less distinct due to more balanced class distributions, thereby increasing the difficulty of rejection.

In terms of OOD accuracy, the model exhibits a near-linear improvement up to a ratio of 0.6, reaching 93.56\%, and achieves perfect separation (100\%) when the unknown ratio exceeds 0.7. These results highlight the model’s capacity to generalize to unseen categories under increasingly open-world scenarios. However, this improvement is accompanied by a decline in in-distribution (ID) performance. Known-class accuracy remains relatively stable up to ratio 0.5 (89.11\%) but experiences a sharp degradation to 24.35\% at ratio 0.6, and further declines to approximately 15\% when the majority of classes are unknown. This sharp drop indicates a critical threshold beyond which the reduced representation of known classes substantially impairs the model’s ability to preserve ID classification performance.

Overall, this experiment underscores a fundamental trade-off: as the proportion of unknown classes increases, the model becomes more proficient at identifying and rejecting OOD nodes, but this comes at the cost of degrading ID classification accuracy. The turning point appears to occur between ratios 0.5 and 0.6, marking a regime shift in model behavior. For practical deployment, this emphasizes the importance of calibrating OOD-aware systems to maintain a balanced performance between safety (via rejection) and reliability (via ID classification).

\subsubsection{Hyperparameter analysis of  \(\alpha\) and \(\beta\)}
\label{ap:alpha and beta}

\begin{figure}[htbp]
    \centering
    \includegraphics[width=1\textwidth]{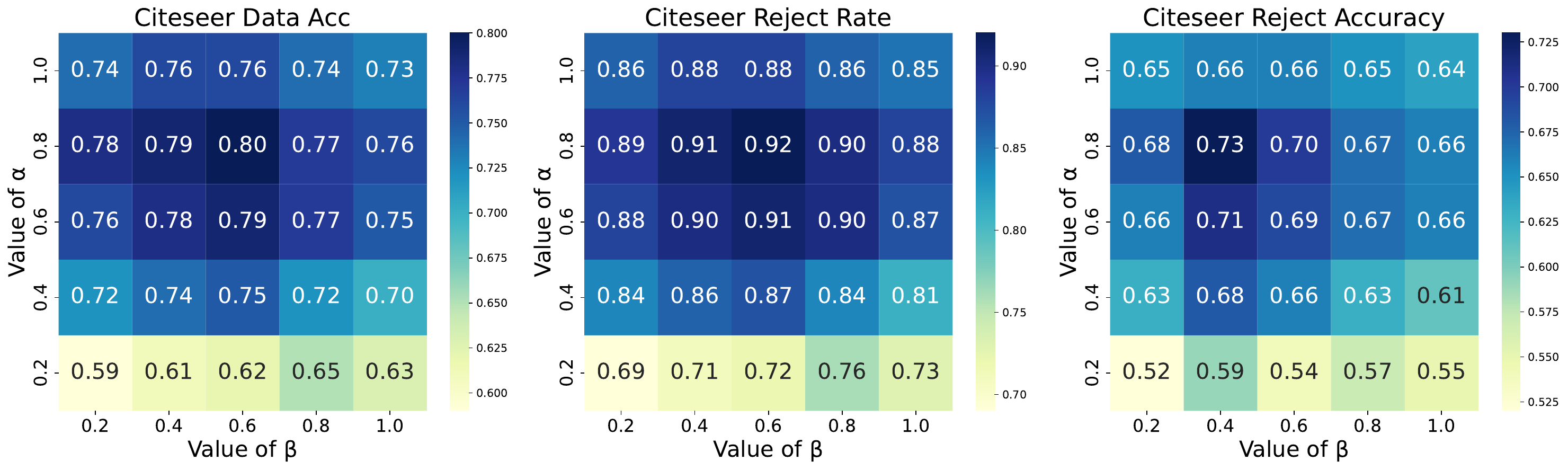}
    \captionsetup{font=small, labelfont=bf, textfont=it}
    \caption{Heatmap showing the impact of \(\alpha\) and \(\beta\) on Accuracy, UCR, and Unknown Accuracy. The optimal performance is observed at \(\alpha = 0.8\) and \(\beta = 0.6\), achieving the best balance between known-class accuracy and unknown-class rejection.}
    \label{fig:alpha_beta_performance}
\end{figure}

In this section, we analyze the impact of the hyperparameters \(\alpha\) and \(\beta\) on key performance metrics: Accuracy, Unknown Class Rejection Rate (UCR), and Unknown Class Accuracy. These hyperparameters control the balance between topology regularization and class separation within the model's loss function, influencing the model's ability to classify known-class nodes, reject unknown-class nodes, and correctly identify unknown-class nodes.

The results from Figure \ref{fig:alpha_beta_performance} indicate that \(\alpha\) has a significant effect on Accuracy. When \(\alpha = 0.4\), the model achieves its highest known-class accuracy of 76\%. However, as \(\alpha\) increases, the accuracy starts to decline due to over-smoothing of graph embeddings, which reduces the model's ability to discriminate fine-grained features. This suggests that a moderate value of \(\alpha\) is optimal for maintaining classification accuracy, with larger values leading to diminishing returns in terms of performance.

On the other hand, \(\beta\) plays a more dominant role in controlling the Unknown Class Rejection Rate (UCR). As \(\beta\) increases, the model's ability to reject unknown-class nodes improves, with the best rejection performance occurring at \(\beta = 0.6\), where UCR reaches 92\%. This result demonstrates that a stronger class separation, governed by \(\beta\), enhances the model’s capacity to distinguish between known and unknown classes, thereby improving rejection accuracy. However, as \(\beta\) increases beyond this point, the model becomes stricter in its rejection criteria, which negatively affects the model’s ability to correctly identify unknown-class nodes.

Indeed, there is a trade-off between UCR and Unknown Accuracy. As \(\beta\) increases, the model’s rejection criteria for unknown-class nodes become more stringent, causing a decrease in Unknown Accuracy. The highest Unknown Accuracy of 65.1\% is observed at \(\beta = 0.6\), which, although lower than some baseline methods, is accompanied by superior UCR performance. This indicates that a strong rejection performance often comes at the cost of reduced ability to correctly identify unknown-class nodes.

In conclusion, the optimal hyperparameter configuration for the Citeseer dataset is \(\alpha = 0.8\) and \(\beta = 0.6\), which strikes the best balance between Accuracy and UCR. This combination is particularly suitable for tasks that prioritize the rejection of unknown-class nodes, offering the best trade-off between rejection performance and classification accuracy. While increasing \(\beta\) beyond 0.6 would enhance UCR, it would do so at the expense of Unknown Accuracy. On the other hand, optimizing \(\alpha\) closer to 0.5 may offer a better trade-off when the primary goal is maximizing known-class accuracy, as it would help maintain a stronger differentiation between nodes in the graph.

\subsubsection{Hyperparameter analysis of  Semantic-Topology Balance $\gamma$}
\label{ap:gamma}

The parameter $\gamma$ controls the relative importance between topological structure and textual semantics during community detection in our GLA module. By interpolating between the structure-only ($\gamma \rightarrow 0$) and semantics-only ($\gamma \rightarrow 1$) regimes, it determines how well the constructed communities preserve topological integrity while maintaining semantic coherence. This balance directly affects both the quality of the community structure and the cost-efficiency of the LLM-based label annotation.

We report three evaluation metrics across varying values of $\gamma$ on four datasets: \textit{(i)} modularity (structure consistency), \textit{(ii)} semantic consistency (measured as average pairwise cosine similarity of node embeddings within communities), and \textit{(iii)} the number of LLM calls required for annotation. The results are visualized in Fig.~\ref{fig:gamma_effect}, where $\gamma \in \{0.0, 0.2, 0.4, 0.6, 0.8, 1.0\}$. 

When $\gamma$ is small (e.g., $0.0 \sim 0.2$), the community detection is dominated by graph topology, which tends to group tightly connected nodes regardless of their semantic meaning. As a result, the modularity score is relatively high, reflecting strong structural coherence. However, semantic consistency remains low, and the LLM must process more ambiguous or incoherent textual neighborhoods, leading to a higher number of LLM queries.

On the other end, when $\gamma$ is large (e.g., $0.8 \sim 1.0$), the algorithm favors semantic similarity over graph structure, causing communities to become semantically pure but topologically disconnected. This results in decreased modularity and a fragmented community layout. Furthermore, due to smaller average community size and lower label reusability across communities, the LLM call frequency increases again.

The optimal trade-off emerges at $\gamma = 0.6$, where all three metrics are well balanced. At this setting, communities exhibit both structural cohesion and semantic homogeneity, resulting in improved annotation consistency and a minimal number of LLM queries. These findings empirically validate our design of integrating both structural and semantic signals in GLA and further support the theoretical hypothesis that joint modeling of graph and language yields efficient and coherent in-context prompts.

\noindent
The results from the four datasets are as follows:

\begin{itemize}
    \item \textbf{Citeseer:} Modularity values range from 0.60 to 0.88, with semantic consistency improving from 0.20 to 0.85 as $\gamma$ increases. The number of LLM calls decreases as $\gamma$ approaches 0.6, then increases with higher $\gamma$ values.
    \item \textbf{Cora:} Similar to Citeseer, modularity values range from 0.61 to 0.87, and semantic consistency improves from 0.21 to 0.86. The LLM calls exhibit the same trend of decreasing and then increasing as $\gamma$ changes.
    \item \textbf{Wikics:} Modularity ranges from 0.63 to 0.91, while semantic consistency starts at 0.25 and increases to 0.89. LLM calls also follow the same trend of increasing and decreasing with $\gamma$ changes.
    \item \textbf{Ratings:} The modularity starts from 0.57 and increases to 0.75, with semantic consistency ranging from 0.29 to 0.92. The number of LLM calls shows a similar behavior to the other datasets.
\end{itemize}

\begin{figure}[htbp]
    \centering
    \includegraphics[width=1\linewidth]{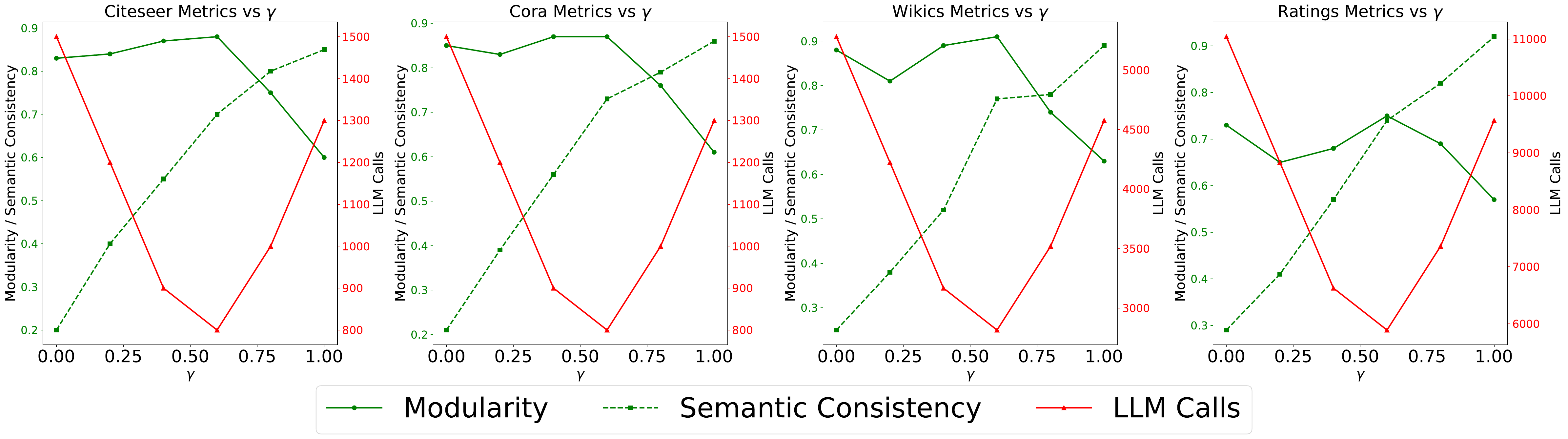}
    \captionsetup{font=small, labelfont=bf, textfont=it}
    \caption{Effect of semantic-topology balance $\gamma$ on modularity, semantic consistency, and LLM calls (Citeseer). A clear optimum is observed at $\gamma = 0.6$.}
    \label{fig:gamma_effect}
\end{figure}

\vspace{0.5em}

\subsection{Efficiency analysis}\label{Efficiency analysis}

\subsubsection{Time Complexity of ALT}
\label{Time of ALT}
Among the twelve baseline methods included in our evaluation, we selectively report running time for three representative models---\textbf{OpenWGL}, \textbf{IsoMax}, and \textbf{GOOD-D}---to provide a clear and interpretable comparison of computational efficiency. These three methods were chosen to represent a diverse range of methodological categories and complexity levels. Specifically, OpenWGL serves as a representative of full-fledged open-world graph learning frameworks that incorporate latent uncertainty modeling (via variational graph autoencoders) and sophisticated open-set classification mechanisms. Its comprehensive pipeline typically incurs the highest computational cost among all baselines, thus offering an upper bound in our efficiency analysis. IsoMax, on the other hand, represents a lightweight open-set/OOD detection approach based on isotropy maximization loss, which modifies the softmax layer with minimal architectural changes. As such, it provides a lower bound reference for runtime and demonstrates the efficiency of minimal modifications. GOOD-D is a contrastive learning-based OOD detection framework that operates on multiple hierarchical levels (node, graph, and group) and captures intermediate complexity. It offers a balanced perspective on computational cost and detection performance. These three baselines thus form a representative triad that captures the trade-offs between expressiveness, runtime, and detection accuracy. Other methods, such as ORAL, OpenIMA, OODGAT, and OpenNCD, often involve additional components like clustering algorithms, Hungarian matching, or prototype memory structures, which are more sensitive to specific implementations and may introduce variability in timing measurements. Therefore, for reproducibility and conciseness, we restrict the time efficiency analysis to OpenWGL, IsoMax, and GOOD-D, which together provide a meaningful and stable basis for evaluating the efficiency of our proposed method.

\begin{table}[ht]
\centering
\captionsetup{font=small, labelfont=bf, textfont=it}
\caption{Running Time (in seconds) of Different Methods on Nine Datasets}
\label{tab:runtime}
\begin{adjustbox}{max width=\textwidth}
\begin{tabular}{lccccccccc}
\toprule
\textbf{Dataset\textbackslash Method} & Cora & Citeseer & Pubmed & arXiv & Children & Amazing & History & Photo & WikiCS \\
\midrule
OpenWGL & 34.93 & 23.76 & 104.33 & 2223.27 & 727.52 & 91.57 & 232.52 & 1535.24 & 1235.24 \\
IsoMax  & 4.21  & 6.01  & 25.58  & 38.12   & 25.33  & 11.86 & 12.92  & 13.20   & 12.79 \\
GOOD    & 13.12 & 16.43 & 92.48  & 144.66   & 127.64  & 21.39 & 75.46  & 39.62   & 53.17 \\
\textbf{Ours} & \textbf{8.91} & \textbf{7.30} & \textbf{56.84} & \textbf{99.23} & \textbf{73.83} & \textbf{16.84} & \textbf{46.44} & \textbf{25.43} & \textbf{29.48} \\
\bottomrule
\end{tabular}
\end{adjustbox}
\end{table}

We compare the running time of different methods in Table~\ref{tab:runtime}. Our method achieves a favorable balance between accuracy and computational efficiency. For instance, compared to OpenWGL, which incurs extremely high runtime on large-scale graphs such as arXiv (2223.27s) and WikiCS (1235.24s), our model significantly reduces the computational cost to 99.23s and 29.48s, respectively. Furthermore, in comparison with lightweight baselines such as IsoMax and mid-complexity methods like GOOD, our approach demonstrates competitive efficiency, particularly on medium-scale datasets, while maintaining strong performance in unknown node detection.

The superior efficiency of our approach is primarily attributed to the design of the proposed \textbf{ALT module}, whose overall time complexity is given by:
\[
\mathcal{O}(knd + |V_l|\bar{d}d + nC_kd + C_k^2d),
\]
where \(n\) denotes the number of nodes, \(d\) the embedding dimension, \(k\) the number of propagation steps, \(C_k\) the number of class prototypes, \(|V_l|\) the number of labeled nodes, and \(\bar{d}\) the average node degree. The first term corresponds to the propagation process via \(k\)-step graph convolution over sparse adjacency matrices, which ensures scalability with graph size. The second term accounts for the construction of concept prototypes through attention-weighted neighborhood aggregation over labeled nodes. The third term captures the cost of classification, which involves computing distances between all nodes and prototype representations, and thus scales linearly with both \(n\) and \(C_k\). The final term reflects the cost of the multi-term optimization objective, including semantic alignment, smoothness regularization, and separation constraints, which remain manageable in practice.

\begin{itemize}
  \item \textbf{Propagation:} The \(k\)-step graph convolution over sparse adjacency matrices costs \(\mathcal{O}(knd)\), scalable with graph size.
  \item \textbf{Concept Construction:} Attention-weighted neighborhood aggregation is limited to labeled nodes, with cost \(\mathcal{O}(|V_l|\bar{d}d)\).
  \item \textbf{Classification:} Prototype-based distance computation scales linearly in \(n\) and \(C_k\), i.e., \(\mathcal{O}(nC_kd)\).
  \item \textbf{Optimization:} Includes semantic, smoothness, and separation terms with cost \(\mathcal{O}(|V_l|C_k + nd + C_k^2d)\).
\end{itemize}

In summary, the overall complexity remains linear in \(n\), ensuring the method is efficient and scalable to large graphs. This is reflected in the empirical results, where our approach maintains lower or moderate time consumption across all datasets.

\subsubsection{Time efficiency improvement brought by GLA}
\label{Time of GLA}
To improve the scalability and efficiency of open-world annotation in text-attributed graphs (TAGs), the proposed \textbf{Graph Label Annotator (GLA)} framework significantly reduces reliance on costly node-by-node large language model (LLM) inference. In a naïve baseline setting, each node \( v_i \) in the graph is independently passed to an LLM with its associated textual input \( T_i \), i.e., \( \tilde{y}_i = \text{LLM}(T_i) \). This results in a computational complexity of \( \mathcal{O}(n) \) LLM queries, where \( n \) is the total number of nodes. Such a method overlooks the structural and semantic homophily inherent in real-world graphs, leading to redundant computation and suboptimal scalability.

GLA addresses this inefficiency by introducing a structure-guided, multi-granularity annotation pipeline. First, it performs a community detection procedure based on a modularity objective that integrates both graph topology \( A \) and node-level semantic similarity \( \tilde{e}_i^\top \tilde{e}_j \). Formally, the optimization target is defined as:
\begin{equation}
[
Q = \frac{1}{2m} \sum_{i,j} \left[ A_{ij} + \gamma \cdot \frac{\tilde{e}_i^\top \tilde{e}_j}{\|\tilde{e}_i\| \cdot \|\tilde{e}_j\|} - (1 - \gamma) \cdot \frac{d_i d_j}{2m} \right] \delta(c_i, c_j),
]
\end{equation}
where \( \delta(c_i, c_j) = 1 \) if nodes \( v_i \) and \( v_j \) belong to the same community. This formulation allows for the discovery of structurally and semantically cohesive regions of the graph that can be jointly annotated.

Within each community, GLA adopts a degree-aware LLM annotation strategy. Low-degree nodes (i.e., nodes with insufficient local information) are prioritized for LLM querying, while high-degree nodes can inherit annotations from their neighbors via a lightweight allocation mechanism:
\begin{equation}
[
\tilde{y}_i = 
\begin{cases}
\text{LLM-Annotation}(T_i, \{T_j \mid v_j \in N(v_i)\}), & \text{if } d_i < \bar{d} \\
\text{Allocation}(T_i, \{T_j, \tilde{y}_j \mid v_j \in N(v_i)\}), & \text{otherwise}.
\end{cases}
]
\end{equation}
This significantly reduces LLM inference cost, as only a small fraction of the graph (the low-degree nodes) need direct access to the LLM, while the rest benefit from neighborhood supervision.

Furthermore, GLA performs hierarchical annotation distillation and fusion. For each community \( P \), representative nodes are selected based on structural and semantic centrality, and their LLM-derived annotations are aggregated into a unified label \( \tilde{y}_P \). To avoid redundancy and promote semantic consistency, similar communities are recursively merged through an LLM-guided fusion process until a target number of cluster-level labels is reached:
\begin{equation}
[
\tilde{y}^\star_{P_{ij}} = \text{LLM-Fusion}(\tilde{y}_{P_i}, \tilde{y}_{P_j}), \quad \text{Sim}(P_i, P_j) = \frac{1}{|P_i||P_j|} \sum_{m \in P_i} \sum_{n \in P_j} M(\tilde{e}_m, \tilde{e}_n).
]
\end{equation}
Overall, the GLA approach reduces the total number of LLM queries from linear \( \mathcal{O}(n) \) to sublinear \( \mathcal{O}(n_{\text{low}} + k + \log k) \), where \( n_{\text{low}} \ll n \) is the number of low-degree nodes and \( k \ll n \) is the number of communities. This not only improves computational efficiency but also enhances label consistency across structurally similar nodes. As a result, GLA provides a scalable, structure-aware solution for open-world graph annotation that aligns with the practical deployment needs of LLM-integrated graph learning systems.

\clearpage
\end{document}